\title{Distances for Markov chains from sample streams}
\newtheorem{thm}{Theorem}
\newtheorem{lemma}{Lemma}
\newtheorem{proposition}{Proposition}
\newcommand{\SOMCOT}{\textbf{\texttt{SOMCOT}}\xspace}
\newcommand{\mupi}{\mu^\pi}
\newcommand{\hy}{\wh{y}}
\newcommand{\hx}{\wh{x}}
\newcommand{\bmu}{\overline{\mu}}
\newcommand{\diag}{\mathrm{diag}}
\newcommand{\dd}{\mathrm{d}}
\newcommand{\nux}{\nu_\X}
\newcommand{\nuy}{\nu_\Y}
\newcommand{\etax}{\eta_\X}
\newcommand{\etay}{\eta_\Y}
\newcommand{\betax}{\beta_\X}
\newcommand{\betay}{\beta_\Y}
\newcommand{\@nextsubscript}[2]{\expandafter#1\@gobble#2}
\newcommand{\alphax}{%
  \@ifnextchar_%
    {\@nextsubscript\@alphaxWith}%
    {\alpha_{\X}}%
}
\newcommand{\alphay}{%
  \@ifnextchar_%
    {\@nextsubscript\@alphayWith}%
    {\alpha_{\Y}}%
}
\newcommand{\lambdax}{%
  \@ifnextchar_%
    {\@nextsubscript\@lambdaxWith}%
    {\lambda_{\X}}%
}
\newcommand{\lambday}{%
  \@ifnextchar_%
    {\@nextsubscript\@lambdayWith}%
    {\lambda_{\Y}}%
}
\newcommand{\blambdax}{%
  \@ifnextchar_%
    {\@nextsubscript\@lambdaxWith}%
    {\overline{\lambda}_{\X}}%
}
\newcommand{\blambday}{%
  \@ifnextchar_%
    {\@nextsubscript\@lambdayWith}%
    {\overline{\lambda}_{\Y}}%
}
\newcommand{\@alphaxWith}[1]{\alpha_{\X,#1}}
\newcommand{\@alphayWith}[1]{\alpha_{\Y,#1}}
\newcommand{\@lambdaxWith}[1]{\lambda_{\X,#1}}
\newcommand{\@lambdayWith}[1]{\lambda_{\Y,#1}}
\newcommand{\X}{\mathcal{X}}
\newcommand{\PX}{P_\X}
\newcommand{\PY}{P_\Y}
\newcommand{\Y}{\mathcal{Y}}
\newcommand{\LL}{\mathcal{L}}
\newcommand{\GG}{\mathcal{G}}
\newcommand{\BB}{\mathcal{B}}
\newcommand{\CX}{\mathcal{C}_\mathcal{X}}
\newcommand{\CY}{\mathcal{C}_\mathcal{Y}}
\newcommand{\F}{\mathcal{F}}
\newcommand{\real}{\mathbb{R}}
\newcommand{\Sw}{\mathcal{S}}
\newcommand{\OO}{\mathcal{O}}
\newcommand{\II}[1]{\mathbf{1}_{\left\{#1\right\}}}
\newcommand{\PP}[1]{\mathbb{P}\left[#1\right]}
\newcommand{\EEs}[2]{\mathbb{E}_{#2}\left[#1\right]}
\newcommand{\PPpi}[1]{\mathbb{P}_{\pi}\left[#1\right]}
\newcommand{\EEcc}[2]{\mathbb{E}\left[\left.#1\right|#2\right]}
\def\argmin{\mathop{\mbox{ arg\,min}}}
\def\argmax{\mathop{\mbox{ arg\,max}}}
\newcommand{\ra}{\rightarrow}
\newcommand{\siprod}[2]{\langle#1,#2\rangle}
\newcommand{\iprod}[2]{\left\langle#1,#2\right\rangle}
\newcommand{\norm}[1]{\left\|#1\right\|}
\newcommand{\onenorm}[1]{\norm{#1}_1}
\newcommand{\twonorm}[1]{\norm{#1}_2}
\newcommand{\infnorm}[1]{\norm{#1}_\infty}
\newcommand{\ev}[1]{\left\{#1\right\}}
\newcommand{\abs}[1]{\left|#1\right|}
\newcommand{\pa}[1]{\left(#1\right)}
\newcommand{\bpa}[1]{\bigl(#1\bigr)}
\newcommand{\wh}{\widehat}
\newcommand{\wt}{\widetilde}
\newcommand{\transpose}{^\mathsf{\scriptscriptstyle T}}
\definecolor{PalePurp}{rgb}{0.66,0.57,0.66}
\newcommand{\todoG}[1]{\todo[color=PalePurp!30]{\textbf{Grg:} #1}}
\newcommand{\regret}{\mathrm{regret}}
\newcommand{\regretmax}{\regret^{\max}}
\newcommand{\regretmin}{\regret^{\min}}
\newcommand{\rsym}{r^{\text{sym}}}
\newcommand{\Zw}{\mathcal{Z}}
\newcommand{\myc}{\partial \CY}
\newcommand{\mxc}{\partial \CX}
\newcommand{\bfc}{\partial \F}
\newcommand{\CCX}{\mathcal{C}_{\X}}
\newcommand{\CCY}{\mathcal{C}_{\Y}}
\newcommand{\MX}{M_{\X}}
\newcommand{\MY}{M_{\Y}}
\newcommand{\MXY}{M_{\X,\Y}}
\author{Sergio Calo\,\,\,\, Anders Jonsson \,\,\,\, Gergely Neu \,\,\,\, Ludovic Schwartz 
\,\,\,\, Javier Segovia-Aguas
 \\
 Universitat Pompeu Fabra, Barcelona, Spain\\
 {\scriptsize{\texttt{\{sergio.calo,anders.jonsson,gergely.neu,ludovic.schwartz,javier.segovia\}@upf.edu}}}
 }
\begin{document}

\maketitle

\begin{abstract}
Bisimulation metrics are powerful tools for measuring similarities between stochastic processes, and specifically 
Markov chains. Recent advances have uncovered that bisimulation metrics are, in fact, optimal-transport distances, 
which has enabled the development of fast algorithms for computing such metrics with provable accuracy and 
runtime guarantees. However, these recent methods, as well as all previously known methods, assume full knowledge of the transition 
dynamics. This is often an impractical assumption in most real-world scenarios, where typically only sample trajectories are available.
In this work, we propose a stochastic optimization method that addresses this limitation
and estimates bisimulation metrics based on sample access, without requiring explicit transition models.
Our approach is derived from a new linear programming (LP) formulation of bisimulation metrics, which we solve using a 
stochastic primal-dual optimization method. We provide theoretical guarantees on the sample complexity of the algorithm 
and validate its effectiveness through a series of empirical evaluations.
\end{abstract}

\section{Introduction}

Computing similarity metrics between stochastic processes is an important mathematical problem with numerous 
promising use cases in diverse areas such as mathematical finance, computational neuroscience, biology, and computer 
science. Within machine learning, potential applications include representation learning for dynamical systems and 
reinforcement learning~\citep{ZMCGL21,CP22}, fitting and comparing sequence models \citep{XWMA20,TNL24}
or prediction tasks on graph-structured data \citep{vayer2019graphs,BWW24}. While there exist 
several rigorous frameworks for defining such similarity metrics and studying their 
properties, computing them typically requires full knowledge of the probability law of the processes to compare, which 
is not available in just about any case of practical interest. In this paper, we address this problem by developing 
methods for estimating similarity metrics for a family of stochastic processes, based only on sample streams 
and without requiring any prior information about the underlying process laws.

We focus on a family of similarity metrics known as \emph{bisimulation metrics}, originating from theoretical computer 
science for purposes of formal verification of computer programs~\citep{Park81,Milner89,DGJP99,vBW01}. This notion of process similarity has 
gained popularity within reinforcement learning (RL), where its potential for learning state representations has been 
recognized by the early works of~\citet{givan2003equivalence} and~\citet{FPP04} and the possibility of using it as a 
basis of practical methods 
for representation learning has been explored in a long line of subsequent works~\citep{castro2020scalable,ZMCGL21,CP22,kemertas2022approximate}. Another popular framework 
for studying similarities between structured probability distributions is that of \emph{optimal transport} 
(OT, cf.~\citealp{villani2009optimal}), which has received serious attention within machine learning in the last 
decade, largely owing to the work of~\citet{cuturi2013sinkhorn}. Very recently, 
\citet{Calo_J_N_S_S24} pointed out that bisimulation metrics also fall within the family of OT distances, which not 
only allowed them to connect two distinct areas of mathematics but also import tools from the literature of 
computational optimal transport~\citep{COTFNT} and develop more efficient methods for computing bisimulation metrics. 
We refer to Appendix~A of \citet{Calo_J_N_S_S24} for more historical details on the two extensive lines of literature 
on bisimulation metrics and optimal transport for stochastic processes.

In this paper, we extend this line of work and show that recasting bisimulation metrics as OT distances allows 
not only computational advances, but the development of a rigorous theory for \emph{statistical estimation} of 
similarity metrics between stochastic processes. In particular, we build on the foundations laid down by 
\citet{Calo_J_N_S_S24} and derive a new stochastic optimization algorithm for estimating bisimulation metrics based on 
sample observations only, and provide its complete computational and sample-complexity analysis for finite Markov 
chains. A core technical contribution is a new linear-program formulation of bisimulation metrics, which we solve via 
a stochastic saddle-point optimization method. For two Markov chains with state spaces $\X$ and $\Y$, the 
algorithm is guaranteed to return an $\varepsilon$-accurate estimate of the true similarity metric after 
$\wt{\OO}(\abs{\X}\abs{\Y}(\abs{\X} + 
\abs{\Y})/\varepsilon^2)$ iterations, with each iteration making use of a single sample transition from each of the two 
chains, and costing $\Theta(\abs{\X}^2\abs{\Y}^2)$ computation. This is the first result of its kind: no previous 
methods have successfully addressed this problem either in practice or in theory.

As mentioned above, the problem we study in this paper has been extensively studied in (at least) two major 
research 
communities. Within the optimal-transport community, the problem of computing distances between stochastic processes 
has been studied under the names ``adapted'', ``causal'' or ``bicausal'' optimal transport 
\citep{pflug12,lassalle18,BBLZ17,EP24}. Considering the special case of Markov chains (as we do in the present paper), 
\citet{moulos2021bicausal,OMN21} and \citet{BWW24} proposed approximate dynamic programming algorithms based on the observation that 
computing OT distances between Markov chains can be reduced to a problem of optimal control in Markov decision 
processes. \citet{Calo_J_N_S_S24} developed a novel linear programming framework for computing OT distances
between Markov chains, and showed that such distances are equivalent to bisimulation metrics.
However, previous approaches in this line of work all assume known transition dynamics.

Within the theoretical computer science community, the study of bisimulation metrics progressed quite 
differently: after an initial flurry of foundational works of \citet{DGJP99,vBW01} and \citet{desharnais2002metric}, surprisingly 
few studies have addressed computational matters (one rare example being the work of \citet{CvBW12}).
Several recent works in reinforcement learning aim to learn approximate bisimulation metrics from
sample transitions using deep learning~\citep{castro2020scalable,ZMCGL21,CP22,kemertas2022approximate}.
However, these approximate bisimulation metrics are not well-founded in theory and as a consequence,
do not enjoy the theoretical guarantees of the original metrics.
In contrast, the stochastic-optimization method we develop in this paper is 
firmly rooted in a theoretical understanding of the problem and comes with provable computational and sample-complexity 
guarantees.

The use of stochastic solvers to compute OT distances has been explored in several past works, mostly in the context 
of static optimal transport between probability distributions. A good part of these methods are based on the 
observation that the static OT problem is formulated as a linear program, and the associated 
unconstrained dual optimization problem directly lends itself to numerical optimization. This view has been exploited 
by works like \citet{genevay16}, \citet{ACB17}, and \citet{seguy2017large}, with some rigorous performance guarantees 
provided by \citet{BBB20}. Another line of work makes use of Monte Carlo estimates of OT 
distances \citep{genevay2018learning,fatras2019learning,fatras2021minibatch,mensch2020online}. To our knowledge, the 
idea of computing OT 
distances via stochastic primal-dual methods as we do in the present work has not been explored in this literature, and 
thus our contribution may be of independent interest within this context as well.

The rest of the paper is organized as follows. After formally defining our problem in Section~\ref{sec:prelim}, we 
describe the foundations of our new algorithm and describe it in detail in Section~\ref{sec:main}. We state our 
main theoretical results in Section~\ref{sec:analysis}, where we also outline the main ideas of the analysis. 
We complement these with some empirical studies of the newly proposed method in Section~\ref{sec:experiments}, and 
conclude with a discussion of the results and open problems in Section~\ref{sec:conc}.

\vspace{-5mm}
\paragraph{Notation.} For a finite set $ \Sw $, we use $ \Delta_{\Sw} $ to denote the set of all probability 
distributions over $ S $. For two sets $\X$ and $\Y$, we will often write $\X\Y = \X\times\Y$ to abbreviate their 
direct product. We will denote infinite sequences by $ \bar{x}=(x_0,x_1,\ldots) $ and for any $ n$ the 
corresponding subsequences as $ \bar{x}_n =(x_0,  \ldots, x_n)$.

\section{Preliminaries}\label{sec:prelim}
We study the problem of measuring distances between pairs of finite Markov chains.   Specifically, we 
consider two stationary Markov processes $ M_{\X} = (\X, \PX, \nu_{0,\X}) $ and $ M_{\Y} = (\Y, \PY, \nu_{0,\Y})$, where
\begin{itemize}[topsep=-1mm,itemsep=-.5mm,partopsep=1mm,parsep=1mm,leftmargin=5mm]
	\item $ \X $ and $ \Y $ are the \emph{state spaces} with finite cardinality,
	\item $\PX : \X \rightarrow \Delta_\X$ and $ \PY : \Y \rightarrow\Delta_\Y $ are the \emph{transition kernels} that 
	specify the evolution of states as $ \PX(x'|x) = \PP{X_{t+1}=x'|X_t=x} $ and $\PY(y'|y) = 
\PP{Y_{t+1}=y'|Y_t=y} $ (for all time indices $t$ and state pairs $x,x'$ and $y,y'$),
	\item $ \nu_{0,\X}\in \Delta(\X)$ and $\nu_{0, \Y}\in \Delta(\Y) $ are the \emph{initial-state distributions} which 
	specify the states at time $t=0$ as $X_0 \sim \nu_{0, \X}$ and $ Y_0\sim \nu_{0,y} $.
\end{itemize}
Without loss of generality, we will assume that $\nu_{0,\X}$ and $\nu_{0,\Y}$ are both Dirac measures respectively 
supported on some fixed $x_0$ and $y_0$, and use $ \nu_0 = \nu_{0,\X} \otimes \nu_{0, \Y} $ to denote the joint 
distribution of the pair of initial states $(X_0,Y_0)$ (which is of course a Dirac measure on $x_0,y_0$). 
For each $ n\geq 0 $, the objects above define a
sequence of joint distributions $ \PP{(X_0, X_1, \ldots, X_n) = (x_0, x_1, \ldots, x_n)} $ and $ \PP{(Y_0, Y_1, \ldots,
Y_n) = (y_0, y_1, \ldots, y_n)}$. These distributions in turn define the 
laws of the infinite sequences $\overline{X} = (X_0, X_1, \ldots ) $ and $ \overline{Y} = (Y_0, Y_1, \ldots) $ via 
Kolmogorov's extension theorem. 
With a slight abuse of notation we use $ \MX $ and $ \MY $ to denote the corresponding measures satisfying $
\MX(\bar{x}_n) = \PP{\overline{X}_n = \bar{x}_n} $ and $ \MY(\bar{y}_n) =\PP{\overline{Y}_n = \bar{y}_n}$ for any $
\bar{x}\in \X^{\infty}, \bar{y} \in \Y^{\infty} $ and $ n\geq 0 $.

Our goal is to compute optimal transport distances between infinite-horizon Markov chains. 
To this end, we will suppose access to a \emph{ground cost} (or \emph{ground metric}) $ c:\X\Y
\rightarrow\real^{+} $ that quantifies the (dis-)similarity between each state $x\in\X$ and $y\in\Y$ as $c(x,y)$. For 
any two sequences $ \bar{x} = (x_0,x_1,\ldots)\in \X^{\mathbb{N}} $ and $
\bar{y}=(y_0,y_1,\ldots) \in \Y^{\mathbb{N}} $, we define the discounted total cost
\begin{equation*}
	c_\gamma(\bar{x},\bar{y})=\sum_{t=0}^{\infty} \gamma^t c(x_t,y_t),	
\end{equation*}
where $ \gamma \in(0,1) $ is the \textit{discount factor} (which emphasizes earlier differences between the two 
sequences, and serves to make sure that the distance is well-defined). 
As is usual in the optimal-transport literature, we will define the distance between the two stochastic processes $ 
M_\X $ and $ M_\Y $ by minimizing the expected cost over a suitable class of \textit{couplings} of the 
two joint distributions.

Formally, a coupling of $ \MX $ and $ \MY $ is defined as a stochastic process on the joint space $ \X\times\Y $ whose 
law is defined for all $n$ as $ \MXY(\overline{x}_n,\overline{y}_n)  = \PP{\overline{X}_n = x_n, \overline{Y}_n = y_n}$ 
and satisfies $
\sum_{\overline{y}_n \in \Y^n}\MXY(\overline{x}_n \overline{y}_n) = \MX(\overline{x}_n) $ and $ \sum_{\overline{x}_n 
\in \X^n} \MXY(\overline{x}_n, \overline{y}_n) = \MY(\overline{y}_n) $. We denote the set of all couplings by $ \Pi $, 
and call a coupling $ \MXY \in \Pi $ \textit{bicausal} if and only if it satisfies 
\begin{equation*}
	\sum_{y \in \Y} \MXY(xy|\bar{x}_{n-1}\bar{y}_{n-1})	= \MX(x |\bar{x}_{n-1}) \quad \mbox{and} \quad \sum_{x \in \X} 
\MXY(xy|\bar{x}_{n-1}\bar{y}_{n-1}) = \MY(y |\bar{y}_{n-1}),
\end{equation*}
respectively for all $x$ and $y$, and for all $n$. The set of all bicausal couplings will be denoted by $ \Pi_{bc} $. 
Intuitively, this is the class of couplings that respect the temporal structure
of the Markov chains and only allow the distribution of each state $X_{t+1}$ (resp.~$Y_{t+1}$) to be influenced by the 
past state pairs $\pa{\overline{X}_{t},\overline{Y}_{t}}$. The optimal transport distance between the two Markov chains 
$\MX$ and $\MY$ is then defined as 
\begin{equation}
\label{eq:OT_cost}
d_{\gamma}(\MX, \MY) = \inf_{\pi \in \Pi_{\text{bc}}} \int c_{\gamma}(\overline{X},
	\overline{Y}) \, \dd 	\pi (\overline{X}, \overline{Y}),
\end{equation}
with the dependence on the cost function $c$ suppressed for simplicity of notation. Following the observation made by 
\citet{Calo_J_N_S_S24}, we will frequently refer to this distance as the \emph{bisimulation metric} between $\MX$ and 
$\MY$.

\section{Bisimulation metrics from sample streams}\label{sec:main}
As observed by \citet{Calo_J_N_S_S24}, the bisimulation metric in~\eqref{eq:OT_cost} can be rewritten in terms of 
\emph{occupancy couplings}. The occupancy coupling associated with the bicausal coupling $\pi \in \Pi_{\text{bc}}$ is a 
distribution $\mupi\in\Delta_{\X\Y\X\Y}$ with entries
\[
 \mupi(x,y,x',y') = (1-\gamma) \sum_{t=0}^\infty \gamma^t \PPpi{X_t= x, Y_t = y, X_{t+1} = x', Y_{t+1} = y'},
\]
where $\PPpi{\cdot}$ denotes the probability law induced by the coupling $\pi$. Introducing the notation 
$\iprod{\mu}{c} = \sum_{x,y,x',y'} \mu(x,y,x',y') c(x,y)$, this means that the original optimization 
problem defining the distance can be obviously rewritten as a linear function of $\mupi$ as
\begin{equation}
\label{eq:OT_cost_mu}
d_{\gamma}(\MX, \MY) = \inf_{\pi \in \Pi_{\text{bc}}} \iprod{\mupi}{c}.
\end{equation}
\citet{Calo_J_N_S_S24} identified a set of linear constraints on $\mupi$ that are satisfied if and
only if $\pi \in \Pi_{\text{bc}}$, which has effectively reduced the problem of computing the distance to a linear 
program (LP). This formulation is closely related to the standard LP formulation of optimal control in Markov decision 
processes, where the primal variables are commonly called \emph{occupancy measures} (see, e.g., Chapter 6.9 in 
\citealp{Puterman1994}). As shown by \citet{Calo_J_N_S_S24}, one of the linear constraints satisfied by any valid 
occupancy $\mu$ is the following \emph{flow condition}:
\begin{equation}\label{eq:bellman_flow}
\sum_{x',y'} \mu(x,y, x',y') = \gamma \sum_{\hx,\hy} \mu(\hx,\hy, x,y) + (1-\gamma) \nu_0(x,y) 
\qquad(\forall x,y \in \X\Y).
\end{equation}
Unfortunately, their other constraints explicitly feature the transition kernels $\PX$ and $\PY$, which ultimately 
makes their LP unsuitable as a basis for stochastic optimization. Indeed, optimizing their LP via primal, dual, or 
primal-dual methods would require having at least a generative model of $\PX$ and $\PY$ that allows sampling from 
$\PX(\cdot|x)$ and $\PY(\cdot|y)$ at arbitrary states $x$ and $y$. In practice however, such models are rarely available 
and one has to make do with samples drawn directly from a stream of states generated by the two chains.
We address this limitation by reformulating their linear constraints in a form that eliminates the transition kernels 
$\PX$ and $\PY$ from the constraints, and replaces them with a joint state-next-state distribution from each chain 
that can be sampled from efficiently. In what follows, we first introduce our new LP formulation, and then provide a 
primal-dual stochastic optimization algorithm to approximately solve the resulting optimization problem along with its 
performance guarantees.

\subsection{A new LP formulation of bisimulation metrics}
Our reformulation is based on the following observations. First, notice that any valid 
occupancy coupling has to arise as a coupling of the marginal occupancy measures of the two chains $\MX$ and $\MY$, 
defined respectively for each $x,x'$ and $y,y'$ as
\begin{align*}
 \nux(x,x') &= \textstyle{(1-\gamma) \sum_{t=0}^\infty \gamma^t \PP{X_t = x, X_{t+1} = x'}}, \\
 \nuy(y,y') &= \textstyle{(1-\gamma) \sum_{t=0}^\infty \gamma^t \PP{Y_t = y, Y_{t+1} = y'}}.
\end{align*}
Indeed, valid occupancy couplings respectively satisfy the \emph{coupling condition}
\begin{equation}\label{eq:coupling}
 \textstyle{\sum_{y,y'} \mupi(x,x',y,y') = \nux(x,x') \qquad \mbox{and} \qquad \sum_{x,x'} \mupi(x,x',y,y') = 
\nuy(y,y')}
\end{equation}
for all $x,x'$ and $y,y'$. Second, the conditional occupancies induced by a bicausal coupling $\pi$ satisfy
\begin{equation*}
 \mupi(x',y|x) = \PX(x'|x) \mupi(y|x) \qquad \mbox{and} \qquad \mupi(x,y'|y) = \PY(y'|y) \mupi(x|y),
\end{equation*}
due to the requirement of \emph{causality} that the conditional law of $Y_t$ given $X_t$ (resp.~$X_t$ given $Y_t$) 
should be independent of the future state $X_{t+1}$ (resp.~$Y_{t+1}$). By multiplying both sides of these equations by 
$\sum_{x'}\nux(x,x')$ and $\sum_{y'} \nuy(y,y')$, we obtain
\begin{equation}\label{eq:causality+coupling}
 \textstyle{\sum_{y'} \mupi(x,x',y,y') = \nux(x,x') \mupi(y|x) \quad \mbox{and} \quad \sum_{x'} \mupi(x,x',y,y') = 
\nuy(y,y') \mupi(x|y)}.
\end{equation}
Summing both sides for all $y$ and $x$ respectively recovers the coupling conditions of Equation~\eqref{eq:coupling}. 
In this sense, both the causality and coupling conditions can be recovered by the single set of 
equations~\eqref{eq:causality+coupling}.
The following key result shows that, together with the flow constraints of Equation~\eqref{eq:bellman_flow}, this 
system of equations provides a complete characterization of occupancy couplings.
\begin{proposition}
\label{prop:causality_constraints}
The distribution $\mu$ is the induced occupancy coupling of a bicausal coupling $\pi \in \Pi_{\text{bc}}$ if and only if there exist 
$\lambdax\in\real^{\Y\X}_+$ and $\lambday\in\real^{\X\Y}_+$ such that the following equations hold:
\begin{align}
\label{eq:bellman_flow_2}
 \sum_{x',y'} \mu(x,y, x',y') &= \gamma \sum_{\hx,\hy} \mu(\hx,\hy, x,y) + (1-\gamma) \nu_0(x,y) 
\qquad(\forall x,y \in \X\Y)\\
\label{eq:X_causal}
 \sum_{y'} \mu(x,y,x',y') &= \nux(x,x')\lambdax(y|x) \qquad\qquad\qquad\qquad\quad\;\,(\forall x,x',y \in \X\X\Y)\\
 \label{eq:Y_causal}
 \sum_{x'} \mu(x,y,x',y') &= \nuy(y,y') \lambday(x|y) \qquad\qquad\qquad\qquad\quad\;\;(\forall x,y,y' \in \X\Y\Y).
\end{align}
Furthermore, if the equations are satisfied for some $\mu$, $\lambdax$ and $\lambday$, we also have $\sum_{y} 
\lambdax(y|x) = 1$ and $\sum_{x} \lambday(x|y) = 1$ for all $x$ and $y$. 
\end{proposition}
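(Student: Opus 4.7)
The plan is to prove the two implications separately, guided by the intuition that $\lambdax(y|x)$ and $\lambday(x|y)$ should encode the conditional distributions of $Y_t$ given $X_t$ and of $X_t$ given $Y_t$ under the occupancy measure induced by the bicausal coupling.

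For the forward direction, I would start from an arbitrary bicausal $\pi \in \Pi_{\text{bc}}$ with $\mu = \mupi$, set $\lambdax(y|x) := \mupi(x,y)/\mupi(x)$ whenever $\mupi(x) > 0$ (and as an arbitrary probability distribution elsewhere), and symmetrically for $\lambday$. The flow condition \eqref{eq:bellman_flow_2} follows from the derivation in \citet{Calo_J_N_S_S24}. For \eqref{eq:X_causal}, I would compute $\sum_{y'}\mupi(x,y,x',y')$ directly from the series definition of $\mupi$: bicausality implies $\PP{X_{t+1}=x'|X_t=x, Y_t=y} = \PX(x'|x)$ for every $t$, which gives $\sum_{y'}\mupi(x,y,x',y') = \PX(x'|x)\mupi(x,y) = \PX(x'|x)\mupi(x)\lambdax(y|x) = \nux(x,x')\lambdax(y|x)$, where the last step uses the coupling identity $\mupi(x) = \nux(x)$ together with $\nux(x,x') = \PX(x'|x)\nux(x)$. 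Equation \eqref{eq:Y_causal} follows symmetrically, and the normalization $\sum_y\lambdax(y|x) = 1$ holds by construction.

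The backward direction is the substantive half. Given $\mu, \lambdax, \lambday$ satisfying the three equations, the first task is to show that $\sigma(x) := \sum_y \lambdax(y|x) = 1$ on $\mathrm{supp}(\nux)$. Summing \eqref{eq:X_causal} over $y$ yields $\sum_{y,y'}\mu(x,y,x',y') = \nux(x,x')\sigma(x)$, and further summation over $x'$ gives $\sum_{y,x',y'}\mu(x,y,x',y') = \nux(x)\sigma(x)$. Summing the flow condition \eqref{eq:bellman_flow_2} over $y$ and replacing the incoming-flow term via the same identity produces the Bellman-type recursion
\begin{equation*}
\nux(x)\sigma(x) = \gamma \sum_{\hat x}\nux(\hat x, x)\sigma(\hat x) + (1-\gamma)\nu_{0,\X}(x).
\end{equation*}
Since $\nux$ itself satisfies this recursion with $\sigma \equiv 1$, subtracting and setting $g(x) := \nux(x)(\sigma(x)-1)$ gives $g(x) = \gamma\sum_{\hat x}\PX(x|\hat x)g(\hat x)$; summing $|g(x)|$ over $x$ yields $\sum_x|g(x)| \leq \gamma\sum_x|g(x)|$, hence $g\equiv 0$ and $\sigma\equiv 1$ on $\mathrm{supp}(\nux)$. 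This immediately delivers the coupling identity $\sum_{y,y'}\mu(x,y,x',y') = \nux(x,x')$, and a symmetric argument handles $\lambday$.

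To finish, I would construct a Markov bicausal coupling whose occupancy is $\mu$. Summing \eqref{eq:X_causal} over $x'$ only gives $\mu(x,y) := \sum_{x',y'}\mu(x,y,x',y') = \nux(x)\lambdax(y|x)$, so defining $K(x',y'|x,y) := \mu(x,y,x',y')/\mu(x,y)$ at states with $\mu(x,y)>0$ (and arbitrarily as any coupling of $\PX(\cdot|x),\PY(\cdot|y)$ elsewhere) yields a valid joint kernel: $\sum_{y'}K(x',y'|x,y) = \nux(x,x')/\nux(x) = \PX(x'|x)$, and symmetrically for $\PY$. The Markov chain with kernel $K$ started at $\nu_0$ defines a bicausal $\pi$, and both $\mu(x,y)$ and $\mupi(x,y)$ satisfy the same linear Bellman flow equation with kernel $K$, which has a unique solution by the standard $\gamma$-contraction argument. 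Hence $\mu(x,y) = \mupi(x,y)$ and $\mu(x,y,x',y') = \mu(x,y)K(x',y'|x,y) = \mupi(x,y,x',y')$. The main obstacle throughout is the argument forcing $\sigma \equiv 1$: the causality constraints alone cannot pin down this normalization, and one must combine them with the flow constraint and a discounted-contraction uniqueness step to extract it.
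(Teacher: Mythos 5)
Your proof is correct, but it takes a genuinely different route from the paper's. The paper's proof (Appendix~\ref{app:lp}) is a reduction to the characterization of occupancy couplings already established by \citet{Calo_J_N_S_S24}: it shows that constraints \eqref{eq:X_causal}--\eqref{eq:Y_causal} are algebraically interchangeable with the kernel-dependent constraints $\sum_{y'}\mu(x,y,x',y')=\PX(x'|x)\sum_{x'',y'}\mu(x,y,x'',y')$ by introducing/eliminating $\lambdax(y|x)=\sum_{x',y'}\mu(x,y,x',y')/\nux(x)$, and then obtains the normalization $\sum_y\lambdax(y|x)=1$ by identifying the $\X$-marginal of $\mu$ with $\nux$ via a uniqueness lemma for the discounted flow equations (Lemma~\ref{lemma:marginal_occupancy}). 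You instead give an essentially self-contained argument: the forward direction is computed directly from the series definition of $\mupi$ and bicausality; for the converse you first force $\sum_y\lambdax(y|x)=1$ by a contraction argument on $g(x)=\nux(x)(\sigma(x)-1)$, use this to recover the coupling identity, and then explicitly build the transition coupling $K$ and invoke uniqueness of the discounted flow equation to conclude $\mu=\mupi$. The contraction/invertibility of $I-\gamma \PX\transpose$ is exactly the tool the paper also uses (inside Lemma~\ref{lemma:marginal_occupancy} and in the cited fact that a transition coupling induces a unique occupancy coupling), so the underlying mathematics coincides; what your route buys is independence from the external Lemma~1 of \citet{Calo_J_N_S_S24}, at the cost of re-proving the relevant half of it. Two minor points, neither fatal: your normalization only holds on $\mathrm{supp}(\nux)$ (off the support, $\lambdax(\cdot|x)$ is unconstrained by \eqref{eq:X_causal}), but the paper's own proof carries the same implicit restriction since it divides by $\nux(x)$; and the cancellation of $\lambdax(y|x)$ in verifying $\sum_{y'}K(x',y'|x,y)=\PX(x'|x)$ is only valid where $\mu(x,y)>0$, which is precisely where $K$ is defined by the ratio, so the argument goes through.
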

Thus, the set of 
equations~\eqref{eq:bellman_flow_2}--\eqref{eq:Y_causal} uniquely identifies the complete set of occupancy couplings. 
In 
particular, whenever the constraints are satisfied for some $\mu$, there exists a bicausal coupling $\pi$ 
inducing $\mu$ as its occupancy coupling, and conversely all occupancy couplings satisfy the above equations. 
Further important side results can be read out from the proof, provided in Appendix~\ref{app:lp}.

\subsection{A stochastic primal-dual method}
An immediate consequence of \Cref{prop:causality_constraints} is that the OT distance between $\MX$ and $\MY$ can be 
written as the solution of the minimization problem of Equation~\eqref{eq:OT_cost_mu} with respect to $\mu^\pi$ 
as the optimization variable, subject to the constraints 
\eqref{eq:bellman_flow_2}--\eqref{eq:Y_causal}. Equivalently, it can be written as a saddle point of the associated 
Lagrangian defined as 
\begin{align}
\nonumber \!\LL(\mu, \lambda; \alpha, V) =& \hspace*{-2pt} \sum_{xyx'y'} \hspace*{-2pt} \mu(x,y,x',y') \pa{c(x,y) + 
\alphax(x,x',y) + \alphay(x,y,y') + \gamma V(x',y') - V(x,y)} \\
&
\nonumber -\sum_{xx'y} \nux(x,x') \lambdax(y|x) \alphax(x,x',y)
-\sum_{xyy'} \nuy(y,y') \lambday(x|y) \alphay(x,y,y') 
\\
&+ (1-\gamma) \sum_{xy} \nu_0(x,y) V(x,y), \label{eq:Lagrangian_LP}
\end{align}
where $\alphax\in\real^{\X\X\Y}$ and $\alphay\in\real^{\X\Y\Y}$ are the Lagrange multipliers 
associated with constraints~\eqref{eq:X_causal} and~\eqref{eq:Y_causal}, and $V\in\real^{\X\Y}$ are the 
multipliers for the flow constraint~\eqref{eq:bellman_flow_2}. Indeed, by the Lagrange multiplier theorem, the optimal 
value of the original LP can be written as $d_\gamma(\MX,\MY) = \min_{\mu,\lambda}\max_{\alpha,V} \LL(\mu, \lambda;  
\alpha, V)$. Importantly, the gradients of the Lagrangian with respect to dual variables $\alphax$ and $\alphay$ can be 
written as expectations with respect to the occupancy measures $\nux$ and $\nuy$, which suggests that the objective may 
be amenable to stochastic optimization given sample access to these distributions.

\begin{wrapfigure}{r}{0.5\textwidth}
\vspace{-.7cm}
\begin{minipage}{0.5\textwidth}
\begin{algorithm}[H]
\caption{\SOMCOT}\label{alg:main}
\textbf{Input: } $c$, $\eta$, $\beta$, $\gamma$, $K$\\
\textbf{Initialize: }  $\mu_1 = \mathcal{U}(\X\Y\X\Y)$, $\lambdax(\cdot|x) = \mathcal{U}(\Y)$ for all $x$, 
$\lambday(\cdot|y) = \mathcal{U}(\X)$ for all $y$, $\alpha=0$, $V=0$.
\\
\textbf{For $k=1,2,\dots,K$:}
\\
\,\textbullet  \,Sample $X_k,X_k'\sim\nux$ and $Y_k,Y_k'\sim\nuy$,\\
\,\textbullet  \,compute gradient estimators via Eqs.~\eqref{eq:mugrad}--\eqref{eq:Vgrad},\\
\,\textbullet  \,update primal parameters via Eqs.~\eqref{eq:mu_update}--\eqref{eq:lambday_update},\\
\,\textbullet  \,update dual parameters via Eqs.~\eqref{eq:alphax_update}--\eqref{eq:V_update}.
\\
\textbf{Output:} $\bmu_K = \frac 1K \sum_{k=1}^K \mu_k$.
\label{alg:short}
\end{algorithm}
\end{minipage}
\vspace{-.4cm}
\end{wrapfigure}

Inspired by this observation, we propose a primal-dual stochastic optimization algorithm that aims to approximate the 
saddle point of the Lagrangian. In particular, we will suppose that we have sampling access to the occupancy measures 
$\nux$ and $\nuy$ and use these samples to construct stochastic gradient estimators for incrementally updating the 
primal and dual variables via variants of stochastic gradient descent-ascent.
Concretely, the algorithm proceeds in a sequence of iterations $k=1,2,\dots,K$, updating the primal variables $\mu_k$, 
$\lambdax_k$ and $\lambday_k$ via stochastic mirror descent (SMD) with entropic regularization and the dual 
variables $V_k$, $\alphax_k$ and $\alphay_k$ via stochastic gradient ascent (SGA). We describe the gradient-estimation 
procedures and the update rules below, and provide a high-level pseudocode as Algorithm~\ref{alg:short}. For brevity, 
we will refer to the algorithm as \SOMCOT, for Stochastic Optimization for Markov Chain Optimal Transport.
Further details about the derivation of \SOMCOT and a more detailed pseudocode can be found in 
Appendix~\ref{app:alg_details}.

\paragraph{The gradient estimators.}
For constructing the gradient estimators needed for the updates, we first sample transitions $(X_{k},X_k') 
\sim \nux$ and $(Y_{k},Y_k') \sim \nuy$ from the marginal occupancy measures of $\MX$ and $\MY$, and let $\F_k$ denote 
the record of all transition data drawn until the end of round $k$. The primal updates are defined in 
terms of the following gradient estimates:
\begin{align}
 g_{k, \mu}(x,y,x',y') &= c(x,y) - \alphax_k(x, x',y) - \alphay_k(x,y, y') + \gamma V_k(x',y') - V_k(x,y)
 \label{eq:mugrad}
 \\
 \wt{g}_{k, \lambdax}(y|x) &= \II{ X_k,X'_k = x,x'} \alphax_k(x, x' ,y)
 \label{eq:lambdaxgrad}
 \\
 \wt{g}_{k, \lambday}(x|y) &= \II{ Y_k,Y'_k= y,y'} \alphay_k(x,y,y').
 \label{eq:lambdaygrad}
\end{align}
Clearly, we have $g_{k,\mu}= \nabla_\mu \LL(\mu_k,\lambda_k;\alpha_k,V_k)$. Furthermore, it is easy to check that 
$\EEcc{\wt{g}_{k, \lambdax}}{\F_{k-1}} = \nabla_{\lambdax} \LL(\mu_k,\lambda_k;\alpha_k,V_k)$ and $\EEcc{\wt{g}_{k, 
\lambday}}{\F_{k-1}} = \nabla_{\lambday} \LL(\mu_k,\lambda_k;\alpha_k,V_k)$. Similarly, we can define the gradient 
estimates for the dual variables as
\begin{align}
\wt{g}_{k, \alphax}(x,x',y) &= \sum_{y'} \mu_k(x,y,x',y') - \II{ X_k,X'_k = x,x'} \lambdax_k(y|x)
\label{eq:alphaxgrad}
\\
\wt{g}_{k, \alphay}(x,y,y') &= \sum_{x'} \mu_k(x,y, x',y') - \II{  Y_k,Y'_k= y,y'} \lambday_k(x|y)
\label{eq:alphaygrad}
\\
g_{k, V}(x,y) &= \sum_{x' y'} \mu_k(x,y, x',y') - (1- \gamma) \nu_0(x,y) - \gamma \sum_{\hx,\hy} \mu_k(\hx,\hy, x,y),
\label{eq:Vgrad}
\end{align}
which are again easily seen to satisfy $\EEcc{\wt{g}_{k, \alphax}}{\F_{k-1}} = \nabla_{\alphax} 
\LL(\mu_k,\lambda_k;\alpha_k,V_k)$, $\EEcc{\wt{g}_{k, \alphay}}{\F_{k-1}} = \nabla_{\alphay} 
\LL(\mu_k,\lambda_k;\alpha_k,V_k)$ and $g_{k, V} = \nabla_{V} \LL(\mu_k,\lambda_k;\alpha_k,V_k)$.

\paragraph{The update rules.}
The primal variables are updated via stochastic mirror descent with appropriately chosen entropic regularization 
functions. For $\mu$, the updates are given as
\begin{equation}
\label{eq:mu_update}
	\mu_{k+1}(x,y,x',y') = \frac{\mu_k(x,y, x',y') \exp( - \eta g_{k, \mu}(x,y,x',y'))}{\sum_{\hx\hy \hx'\hy'}
	\mu_{k}(\hx,\hy, \hx',\hy') \exp( - \eta g_{k, \mu}(\hx,\hy, \hx',\hy'))},
\end{equation}
with $\eta > 0$ being a stepsize parameter, and the $\lambdax$ variables are updated as
\begin{align}
\label{eq:lambdax_update}
	\lambdax_{k+1}(y|x) &= \frac{\lambdax_k(y|x)\exp(- \etax \wt{g}_{k, \lambdax}(y|x))}{\sum_{\hy}
	\lambdax_{k}(\hy| x) \exp (- \etax \wt{g}_{k,\lambdax}(\hy |x))},
	\\
\label{eq:lambday_update}
	\lambday_{k+1}(x|y) &= \frac{\lambday_k(x|y)\exp(- \etay \wt{g}_{k, \lambday}(x|y))}{\sum_{\hx}
	\lambday_{k}(\hx| y) \exp (- \etay \wt{g}_{k,\lambday}(\hx|y))},
\end{align}
with respective stepsize parameters $\etax,\etay>0$. Note that due to the design of the gradient estimators, each 
iteration only needs to update these variables locally at $\lambdax(\cdot|X_k)$ and $\lambday(\cdot|Y_k)$ at the 
sampled states $X_k$ and $Y_k$. For the dual variables, we define $ \Pi_{\mathcal{D}} $ as the orthogonal projection 
operator onto a convex set $\mathcal{D}$, and implement the following projected stochastic gradient ascent updates:
\begin{align}
\label{eq:alphax_update}
\alphax_{k+1} &= \Pi_{\mathcal{D}_{\alpha}}\left[\alphax_{k} - \betax \wt{g}_{k, \alphax}\right]	,
\\
\alphay_{k+1} &= \Pi_{\mathcal{D}_{\alpha}}\left[\alphay_{k} - \betay \wt{g}_{k, \alphay}\right]	,
\\
\label{eq:V_update}
	V_{k+1} &= \Pi_{\mathcal{D}_{V}}\left[V_k - \beta g_{k,V}\right],
\end{align}
where $\mathcal{D}_{V} = \mathcal{B}^{\infty}(0,\frac{2}{1- \gamma})$ and $\mathcal{D}_{\alpha} = 
\mathcal{B}^{\infty}(0,
\frac{6}{1- \gamma})$ are the projection domains for each 
variable, and $\beta,\betax,\betay>0$ are the stepsize parameters.

\paragraph{The output.} The algorithm terminates after $K$ rounds, and produces the average of the primal iterates 
$\bmu_K = \frac{1}{K} \sum_{k=1}^K \mu_k$ as output. From this, an estimate of the distance can be computed as 
$\wh{d}_\gamma(\MX,\MY) = \iprod{\bmu_K}{c}$. Averaging the output variables is motivated by the design of \SOMCOT
as a primal-dual method and its theoretical analysis, and it also helps stabilize the quality of the solution. Indeed, 
primal-dual methods are prone to instability and oscillations, which are smoothed out very effectively by averaging. We 
discuss the role of this step and other practical improvements to the algorithm in Section~\ref{sec:experiments} below, 
and provide further comments on the potential usefulness of other side products computed by \SOMCOT for downstream 
tasks.

\section{Analysis}\label{sec:analysis}
The following theorem is our main theoretical result about the performance of our algorithm. 
\begin{thm}
\label{thm:PAC_bound_main}
Suppose that $\infnorm{c}\le 1$. Let $ \bmu_K = \frac{1}{K} \sum_{k=1}^{K} \mu_k $ be the output of \SOMCOT{}, let 
$\mu^*$ be the optimal 
occupancy coupling achieving the minimum in Equation~\eqref{eq:OT_cost_mu}, and set the learning rates as 
\[
\eta = \sqrt{\frac{\log(\abs{\X}^2\abs{\Y}^2)(1- \gamma)^2}{K}}, \,\,\etax = \sqrt{\frac{\abs{\X}\log \abs{\Y}(1- 
\gamma)^2}{K}}, \,\,\etay = \sqrt{\frac{\abs{\Y}\log \abs{\X}(1- \gamma)^2}{K}},
\]
\[
\betax = \sqrt{\frac{\abs{\X}^2\abs{\Y}}{(1- \gamma)^2K}}, \quad \betay = \sqrt{\frac{\abs{\X}\abs{\Y}^2}{(1- 
\gamma)^2K}}, \quad \beta = \sqrt{\frac{\abs{\X}\abs{\Y}}{(1- \gamma)^2K}}.
\]
Then, the following bound is satisfied with probability at least	$ 1- \delta $:
\begin{equation*}
	\abs{\siprod{\bmu_K - \mu^{*}}{c}} = \mathcal{O}\left(\frac{1}{\sqrt{K}(1- \gamma)}\left( 
\sqrt{\abs{\X}\abs{\Y}\pa{\abs{\X} + \abs{\Y}}}
			+ \sqrt{\pa{\abs{\X} + \abs{\Y}} \log
	\left( 1/\delta \right)} \right)\right).
\end{equation*}
Equivalently, for any $ \varepsilon>0 $, the output satisfies $|\siprod{\bmu_K - \mu^{*}}{c}| \leq \varepsilon$
with probability at least $ 1- \delta $ if the number of iterations is at least $ K \geq K_0 = \mathcal{O}
\left( \frac{\abs{\X}\abs{\Y}\pa{\abs{\X}+\abs{\Y}} + \pa{\abs{\X}+\abs{\Y}}\log \left( 1/\delta \right)}{(1- \gamma)^2
\varepsilon^2} \right)$. 
\end{thm}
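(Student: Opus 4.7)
The plan is to analyze \SOMCOT{} as a stochastic primal-dual method for the min-max problem $\min_{\mu,\lambda}\max_{\alpha,V}\LL(\mu,\lambda;\alpha,V)$, bound the duality gap of the averaged iterates by combining regret bounds for each player, and then convert that duality gap into a two-sided bound on $|\iprod{\bmu_K-\mu^*}{c}|$.

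First I would reduce the primal error to a duality gap. A preliminary step is to verify that restricting the dual variables to $\mathcal{D}_V = \mathcal{B}^\infty(0,2/(1-\gamma))$ and $\mathcal{D}_\alpha = \mathcal{B}^\infty(0,6/(1-\gamma))$ still preserves strong duality, i.e.\ that some optimal dual pair $(\alpha^*,V^*)$ lies in these sets. The bound on $V^*$ is natural since it plays the role of a value-function-like quantity bounded by $1/(1-\gamma)$; the bound on $\alpha^*$ should follow from the stationarity conditions of the Lagrangian combined with the causality characterization of \Cref{prop:causality_constraints}. Once this is in hand, define $\epsilon_{\text{gap}} = \max_{(\alpha,V)\in\mathcal{D}}\LL(\bmu_K,\bar\lambda_K;\alpha,V) - \min_{\mu,\lambda}\LL(\mu,\lambda;\bar\alpha_K,\bar V_K)$. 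Because $0\in\mathcal{D}_\alpha\cap\mathcal{D}_V$ gives $\LL(\bmu_K,\bar\lambda_K;0,0)=\iprod{\bmu_K}{c}$, while feasibility of $(\mu^*,\lambda^*)$ kills all dual terms in $\LL(\mu^*,\lambda^*;\bar\alpha_K,\bar V_K)$ and reduces it to $d_\gamma(\MX,\MY)$, chaining these observations with the definition of $\epsilon_{\text{gap}}$ in both directions yields the clean reduction $|\iprod{\bmu_K-\mu^*}{c}| \le \epsilon_{\text{gap}}$.

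The second step is the standard saddle-point telescoping: $\epsilon_{\text{gap}} \le (R_\mu + R_{\lambdax} + R_{\lambday} + R_{\alphax} + R_{\alphay} + R_V)/K$, where each $R_\bullet$ is the regret of the corresponding online subroutine against the worst-case comparator in its constraint set. I would then instantiate standard template bounds with the stepsizes prescribed in the theorem. Entropic SMD for $\mu$ on the $|\X|^2|\Y|^2$-simplex with $L_\infty$-bounded (deterministic) gradients contributes $O(\sqrt{\log(|\X||\Y|)}/((1-\gamma)\sqrt{K}))$. For $\lambdax$, I would decompose the regret across conditioning states $x$: since each round only touches the simplex at $X_k$, a per-state KL-based analysis summed over the $|\X|$ blocks yields $O(\sqrt{|\X|\log|\Y|}/((1-\gamma)\sqrt{K}))$, with a symmetric bound for $\lambday$. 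Projected SGA on the $L_\infty$ balls, using the $O(1)$ $\ell_2$-norm bounds on the respective stochastic gradients together with the diameters $O(\sqrt{|\X|^2|\Y|}/(1-\gamma))$, $O(\sqrt{|\X||\Y|^2}/(1-\gamma))$ and $O(\sqrt{|\X||\Y|}/(1-\gamma))$ of the boxes for $\alphax$, $\alphay$ and $V$, delivers the dominant $O(\sqrt{|\X||\Y|(|\X|+|\Y|)}/((1-\gamma)\sqrt{K}))$ term.

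The final ingredient is to upgrade the in-expectation regret bounds to high-probability ones. The pathwise SMD/SGA bounds are stated in terms of the observed stochastic gradients, and comparing them to the deterministic Lagrangian requires controlling martingale remainders of the form $\siprod{z_k-z^*}{\wt g_k - \nabla\LL_k}$ (and analogously on the dual side). Each such increment is bounded in absolute value by a quantity of order $1/(1-\gamma)$, and a coordinate-wise Azuma--Hoeffding together with a union bound across the $O(|\X|+|\Y|)$ effectively independent sub-problems generated by the $\lambda$ and $\alpha$ estimators produces the advertised $\sqrt{(|\X|+|\Y|)\log(1/\delta)}/((1-\gamma)\sqrt{K})$ deviation term. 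The hardest part of the proof, in my view, is verifying the dual projection radii $2/(1-\gamma)$ and $6/(1-\gamma)$: this is not generic saddle-point theory but an a priori structural analysis of the LP's optimal dual that justifies replacing $\min_z\max_w\LL$ by its box-constrained version. A secondary technical subtlety is ensuring that the per-conditioning-state regret decomposition for $\lambdax,\lambday$ aggregates correctly when a single sampled transition simultaneously drives a primal SMD step and a dual SGA step, so that the martingale structure used for concentration is not broken.
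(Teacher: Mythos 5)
Your overall architecture (error $\le$ duality gap $\le$ sum of per-player regrets, then standard SMD/SGA bounds with the prescribed stepsizes) matches the paper, and your regret accounting for the individual players is essentially correct. However, there is a genuine gap in the first step, the reduction $\abs{\siprod{\bmu_K-\mu^*}{c}}\le\epsilon_{\text{gap}}$. Your chaining argument via $\LL(\bmu_K,\bar\lambda_K;0,0)=\siprod{\bmu_K}{c}$ and feasibility of $(\mu^*,\lambda^*)$ only delivers the \emph{upper} direction $\siprod{\bmu_K}{c}-\siprod{\mu^*}{c}\le\epsilon_{\text{gap}}$. For the lower direction you must rule out that $\bmu_K$ undercuts the optimum by violating the constraints, and even granting that the true optimal LP duals lie inside the boxes (your proposed "strong duality on the restricted domain"), you only obtain $\max_{(\alpha,V)\in\mathcal{D}}\LL(\bmu_K,\bar\lambda_K;\alpha,V)\ge d_\gamma$, hence $\siprod{\bmu_K}{c}\ge d_\gamma - (\text{penalty})$, where the penalty is the weighted constraint violation of $\bmu_K$; bounding that penalty by $\epsilon_{\text{gap}}$ from the gap definition is circular. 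What is actually needed is a quantitative \emph{feasibility-restoration} statement: the paper's rounding procedure (Lemma~\ref{lemma:constraint_rounding}) produces a feasible $r(\bmu_K)$ with $\onenorm{\bmu_K-r(\bmu_K)}\le\bigl(3\CCX+3\CCY+\bfc\bigr)/(1-\gamma)$, so that $\siprod{\bmu_K}{c}\ge\siprod{\mu^*}{c}-\onenorm{\bmu_K-r(\bmu_K)}$. The radii $6/(1-\gamma)$ and $2/(1-\gamma)$ are then calibrated to make the Lagrangian's penalty term equal to exactly \emph{twice} these rounding constants, which is what closes the two-sided bound. So the dual box sizes are not justified by boundedness of the optimal dual multipliers, but by an explicit exact-penalty constant; without this (or an equivalent Hoffman-type bound) your reduction does not go through, and a generic such constant would reintroduce problem-dependent factors the theorem does not have.

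A secondary issue: the dual comparator $(\alpha^*,V^*)$ in the paper is the \emph{data-dependent} maximizer of the averaged Lagrangian, so the remainder $\sum_k\siprod{g_k-\wt g_k}{z_k-z^*}$ is not a martingale and plain Azuma--Hoeffding does not apply. The paper handles this with the Rakhlin--Sridharan auxiliary ("ghost") mirror-descent sequence $\wt z_k$ in Lemma~\ref{lemma:OMD_bound}, splitting the remainder into a genuine martingale $\sum_k\siprod{g_k-\wt g_k}{z_k-\wt z_k}$ plus the regret of the auxiliary algorithm against $z^*$. Your "union bound across effectively independent sub-problems" does not address this; you would need to incorporate the ghost-iterate device (or fix a deterministic dual comparator, which then breaks the penalty argument above).
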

A perhaps surprising feature of the sample-complexity guarantee is that it scales with the state spaces as 
$\abs{\X}\abs{\Y}\pa{\abs{\X} + \abs{\Y}}$ instead of the full dimensionality of the decision variables, 
$\abs{\X}^2\abs{\Y}^2$. Note however that each iteration has a computational cost scaling with this full dimensionality. 
The scaling in terms of $\varepsilon$ is optimal up to logarithmic factors, as can be deduced from well-known lower 
bounds for the static OT problem (see, e.g., \citealt{KTM20}). Finally, we note that the big-O notation only hides 
numerical constants, and the bound features no problem-dependent factors whatsoever.

We provide the main idea of the proof below, and relegate the full analysis to Appendix~\ref{app:analysis}.
The main technical idea is to relate the estimated transport cost $ \iprod{\mu_K }{c}$  to the true optimal transport 
cost via the analysis of the \emph{duality gap} associated with the sequence of iterates computed by the algorithm. The 
duality gap $\GG_K(\mu^*,\lambda^*;\alpha^*,V^*)$ against a set of comparator points 
$(\mu^*,\lambda^*;\alpha^*,V^*)$ satisfies
\begin{align}
&\GG_K(\mu^*,\lambda^*;\alpha^*,V^*) = \frac{1}{K} \sum_{k=1}^K \pa{\LL(\mu_k,\lambda_k;\alpha^*,V^*) - 
\LL(\mu^*,\lambda^*;\alpha_k,V_k)} \label{eq:duality_gap}.
 \end{align}
As is standard for analysis of primal-dual methods, the duality gap can be decomposed into the sum of the 
\emph{regrets} of the minimizing player controlling $\mu$ and $\lambda$, and the maximizing player controlling $\alpha$ 
and $V$, which can be controlled using the well-established of online learning \citep{CL06,Ora19}.
For the analysis, we will pick the 
comparator points as follows. For the primal variables, we let $\mu^*$ be the occupancy coupling achieving the minimum 
in Equation~\eqref{eq:OT_cost_mu} and let the $\lambda^*$ variables be the conditional distributions of $Y|X$ and $X|Y$ 
under the joint distribution $\mu^*$. For the dual variables, we choose
\[
 (\alpha^*,V^*) = \argmax_{\alpha \in \mathcal{D}_\alpha, V\in \mathcal{D}_V} \frac{1}{K}\sum_{k=1}^K 
\LL(\mu_k,\lambda_k;\alpha,V).
\]
Under these choices, the error can be upper bounded as follows.
\begin{lemma} \label{lem:main_error}
$\abs{\siprod{ \bmu_K- \mu^{*}}{c}} \le \GG_K(\mu^*,\lambda^*,\alpha^*,V^*)$.
\end{lemma}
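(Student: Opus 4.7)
The plan rests on the observation that $\LL$ is separately linear in each of $\mu,\lambda,\alpha,V$, with no direct $\mu$-$\lambda$ or $\alpha$-$V$ cross terms. This lets averaging commute through $\LL$ in the primal arguments with the dual held fixed (and vice versa), so that
\[
\frac{1}{K}\sum_{k=1}^K \LL(\mu_k,\lambda_k;\alpha^*,V^*) = \LL(\bmu_K,\bar\lambda_K;\alpha^*,V^*)
\]
and similarly on the other side. Moreover, since $(\mu^*,\lambda^*)$ satisfies the constraints of Proposition~\ref{prop:causality_constraints}, every constraint-violation term in $\LL(\mu^*,\lambda^*;\alpha,V)$ is zero, so this value collapses to $\siprod{\mu^*}{c}$ regardless of $(\alpha,V)$. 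Together these reduce the duality gap to the compact identity
\[
\GG_K(\mu^*,\lambda^*;\alpha^*,V^*) = \LL(\bmu_K,\bar\lambda_K;\alpha^*,V^*) - \siprod{\mu^*}{c},
\]
so the task becomes to sandwich $\siprod{\bmu_K}{c}$ between two values of $\LL$ at $(\bmu_K,\bar\lambda_K)$ attainable inside the dual domain $\mathcal{D}_\alpha\times\mathcal{D}_V$.

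For the direction $\siprod{\bmu_K-\mu^*}{c}\leq\GG_K$ I would simply note that $(0,0)\in\mathcal{D}_\alpha\times\mathcal{D}_V$, so by the argmax property $\LL(\bmu_K,\bar\lambda_K;\alpha^*,V^*)\geq\LL(\bmu_K,\bar\lambda_K;0,0)=\siprod{\bmu_K}{c}$, which combined with the identity above finishes this side. The other direction $\siprod{\mu^*-\bmu_K}{c}\leq\GG_K$ is the substantive one. My plan is to invoke strong LP duality to produce optimal dual variables $(\alpha^\dagger,V^\dagger)$ satisfying $\min_{\mu,\lambda}\LL(\mu,\lambda;\alpha^\dagger,V^\dagger)=\siprod{\mu^*}{c}$, and then exploit the fact that the radii $6/(1-\gamma)$ and $2/(1-\gamma)$ of $\mathcal{D}_\alpha,\mathcal{D}_V$ are at least \emph{twice} the magnitudes of these optimal duals, so that $(2\alpha^\dagger,2V^\dagger)$ also lies in the domain. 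Linearity of $\LL$ in the dual then gives the identity $\LL(\bmu_K,\bar\lambda_K;2\alpha^\dagger,2V^\dagger) = 2\LL(\bmu_K,\bar\lambda_K;\alpha^\dagger,V^\dagger)-\siprod{\bmu_K}{c}$, which by strong duality is at least $2\siprod{\mu^*}{c}-\siprod{\bmu_K}{c}$. Combining with the argmax property at $(\alpha^*,V^*)$ and subtracting $\siprod{\mu^*}{c}$ yields $\GG_K\geq\siprod{\mu^*-\bmu_K}{c}$.

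The main obstacle is establishing the norm bounds on the optimal LP duals, i.e.\ $\|V^\dagger\|_\infty\leq 1/(1-\gamma)$ and $\|\alpha^\dagger\|_\infty\leq 3/(1-\gamma)$ for a suitably chosen optimal dual certificate. This is a purely structural LP fact: $V^\dagger$ plays the role of a value function for a transport MDP with costs in $[0,1]$ and discount $\gamma$, while $\alpha^\dagger$ encodes differences of one-step expected value-plus-cost, and both thus inherit the usual $1/(1-\gamma)$-type bounds. These bounds appear to be precisely the motivation for the specific radii baked into $\mathcal{D}_\alpha$ and $\mathcal{D}_V$, so the proof effectively reduces to (i) writing down the dual LP of the program determined by Proposition~\ref{prop:causality_constraints}, (ii) checking that some optimal dual solution admits the claimed magnitudes, and (iii) chaining the elementary inequalities above. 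Everything besides step (ii) is routine algebra.
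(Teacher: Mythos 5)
Your reduction of the duality gap to $\GG_K=\LL(\bmu_K,\bar\lambda_K;\alpha^*,V^*)-\siprod{\mu^*}{c}$ is correct (bilinearity lets the averages pass through, and feasibility of $(\mu^*,\lambda^*)$ kills every multiplier term), and the easy direction via $(0,0)\in\mathcal{D}_\alpha\times\mathcal{D}_V$ is fine. The doubling identity $\LL(\cdot;2\alpha^\dagger,2V^\dagger)=2\LL(\cdot;\alpha^\dagger,V^\dagger)-\siprod{\bmu_K}{c}$ is also valid. But the argument has a genuine gap exactly at the step you flagged as "the main obstacle": you never establish that some optimal dual certificate of the LP satisfies $\infnorm{V^\dagger}\le 1/(1-\gamma)$ and $\infnorm{\alpha^\dagger}\le 3/(1-\gamma)$, and this is not the routine fact you suggest. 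While the $V$-bound is plausible from the value-function interpretation, the multipliers $\alpha^\dagger$ attach to the causality constraints \eqref{eq:X_causal}--\eqref{eq:Y_causal}; exhibiting an optimal, explicitly bounded choice of them requires constructing a dual-feasible point satisfying complementary slackness (including the sign conditions of the form $\sum_{x'}\nux(x,x')\alphax(x,x',y)\le 0$ coming from the $\lambda\ge 0$ directions), which is a nontrivial piece of work comparable to what it replaces. Moreover, your heuristic that the radii of $\mathcal{D}_\alpha$ and $\mathcal{D}_V$ were "baked in" to contain twice the optimal duals is not how the paper arrives at them, so you cannot lean on those constants as implicit evidence that the bound holds.

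The paper's proof avoids optimal dual certificates entirely. It introduces a rounding map $r(\bmu_K)$ onto the feasible set of occupancy couplings and proves (Lemma~\ref{lemma:constraint_rounding}) that $\onenorm{\bmu_K-r(\bmu_K)}\le\bigl(3\CCX+3\CCY+\bfc\bigr)/(1-\gamma)$ in terms of the total absolute constraint violations of $(\bmu_K,\blambdax,\blambday)$. The absolute value is then handled by optimality of $\mu^*$ against the feasible point $r(\bmu_K)$, giving $\abs{\siprod{\bmu_K-\mu^*}{c}}\le\siprod{\bmu_K-\mu^*}{c}+2\onenorm{\bmu_K-r(\bmu_K)}$, and the radii $6/(1-\gamma)$ and $2/(1-\gamma)$ are calibrated so that maximizing the Lagrangian's linear constraint-violation terms over $\mathcal{D}_\alpha\times\mathcal{D}_V$ yields exactly $\bigl(6\CCX+6\CCY+2\bfc\bigr)/(1-\gamma)\ge 2\onenorm{\bmu_K-r(\bmu_K)}$; that is, the dual domains are sized to dominate the rounding error, not to contain optimal dual solutions. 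To complete your route you would need to actually carry out your step~(ii), i.e., construct an optimal dual pair from the dynamic-programming solution of the bicausal OT problem and verify the stated norm bounds; absent that, the hard direction of the lemma is unproven.
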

The proof of this lemma makes up the bulk of the analysis, and is thus relegated to Appendix~\ref{app:main_error}. It 
then remains to upper-bound the regrets of the two sets of players, which is routine work that we execute in 
Appendix~\ref{app:regret_bounds}.

\section{Experiments}\label{sec:experiments}
We performed a suite of numerical experiments to study the empirical behavior of our newly proposed algorithm, as 
well as to illustrate some potential applications that are enabled by our method. Due to space restrictions, we only 
show a small portion of the results here, and refer the reader to Appendix \ref{sec:additional_exp} for additional 
results and implementation details (most notably a detailed discussion on hyperparameter-tuning).

Several of the experiments are conducted with a family of processes we call \emph{block Markov chains}, motivated by 
the framework of block Markov decision processes (or block MDPs, \citealt{du2019block}). This framework is commonly 
studied in the context of representation learning for reinforcement learning, where a standard postulate is that the 
dynamics of the environment are governed by a simple latent structure. Block Markov chains formalize this setting by 
assuming the existence of a latent Markov chain with a small discrete state space, with each latent state generating a 
unique set of observations. Formally, we emulate the block structure by fixing a low-dimensional chain 
$\MX$ and another chain $\MY$ that is a copy of $\MX$ up to an additional irrelevant noise variable. In our 
experiments, we let $\MX$ be a uniform random walk on the state space $\X = \ev{1,2,\dots,n}$ and $\MY$ is a Markov 
chain on the state space $\X\times\ev{1,2,\dots,B}$, with the value in $\ev{1,2,\dots,B}$ generated uniformly at 
random. In all experiments, we use a sparse cost function that only allows to clearly distinguish between states $x=1$ 
and $x=n$, and treats all other states as identical.

\paragraph{Representation learning.}
Within the family of block Markov processes, the task of representation learning is equivalent to finding the mapping 
between the latent states and the observations and vice versa. Our method is very well suited for this task, thanks to 
the following curious observation. Besides the estimated coupling $\bmu_K$ and the associated cost, the algorithm 
outputs other values that are potentially useful. Among these, the variables $\blambdax = \frac{1}{K} \sum_{k=1}^K 
\lambdax_{k}$ and $\blambday = \frac{1}{K} \sum_{k=1}^K \lambday_{k}$ are particularly interesting for purposes of 
representation, as these conditional distributions can be interpreted as an \emph{encoder-decoder} pair, with 
$\lambdax(\cdot|x)$ and $\lambday(\cdot|y)$ giving the respective conditional distributions of $Y|X=x$ and $X|Y=y$ 
under the estimate of the optimal coupling. To illustrate the potential usefulness of these maps, we conducted a set of 
experiments on block Markov chains with parameters $n=10$ and $B=5$,
and show the encoder-decoder pairs computed by \SOMCOT in Figure~\ref{fig:encoder-decoder}. Notably, the algorithm 
does not make use of any prior structural knowledge of the environment: each individual state $y$ is treated as a 
separate state. Despite this and the very limited information revealed by the cost function, a block structure is 
clearly identified by \SOMCOT after sufficiently many samples.

\begin{figure}
\centering
 \includegraphics[width = .325\textwidth]{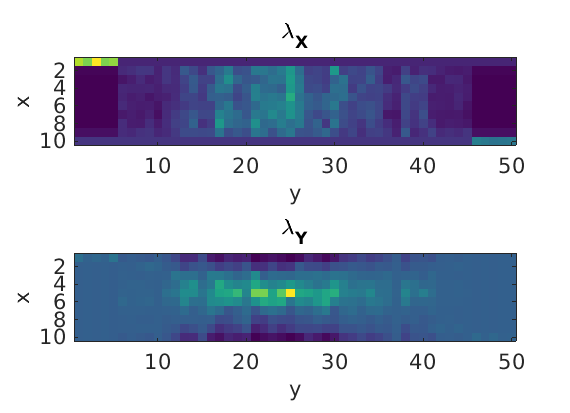}
 \includegraphics[width = .325\textwidth]{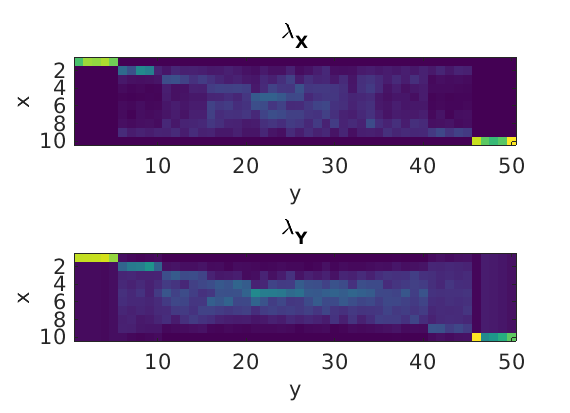}
 \includegraphics[width = .325\textwidth]{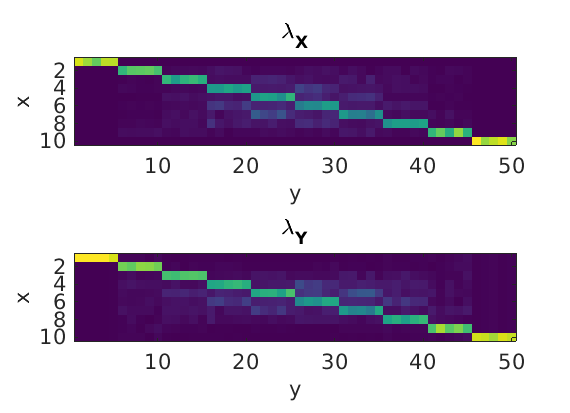}
 \caption{Encoder-decoder maps learned by the algorithm in a block Markov chain example ($n = 10$, $B=5$) for sample 
sizes $1000$, $10000$ and $100000$.}\label{fig:encoder-decoder}
\end{figure}

\paragraph{Model selection.} Another important use case is identifying the hidden dynamics underlying realizations of 
stochastic processes. We model this scenario in two experiments. In the first one, we generate a block-structured 
Markov chain $\MY^*$, and a set of low-dimensional Markov chains $\MX$ with different transition kernels parameterized 
by $\theta\in[0,1]$. This set contains the true model $\MX^*$ underlying $\MY^*$, corresponding to $\theta=0.5$. 
We compute estimates of the distances between $\MY$ and all the candidates of the model class by running \SOMCOT for 
various sample sizes, and show the results on Figure~\ref{fig:blockMC}. Notably, the distance achieves its minimum for 
the true model, and increases as $\theta$ is further separated from its true value.

\begin{figure}
    \centering
    \begin{subfigure}[t]{0.48\textwidth}
        \centering
        \includegraphics[width=0.95\linewidth]{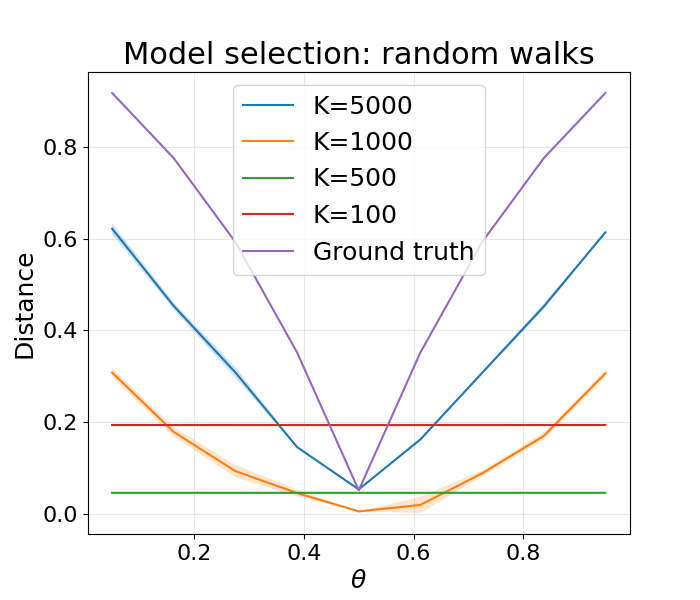}
	\caption{Distances between the true dynamics ($\theta=0.5$) and the different models in the model class, for different values of the parameter $\theta$ and the number of iterations $K$.}
	\label{fig:blockMC}
    \end{subfigure}
    \hfill
    \begin{subfigure}[t]{0.48\textwidth}
        \centering
        \includegraphics[width=0.95\linewidth]{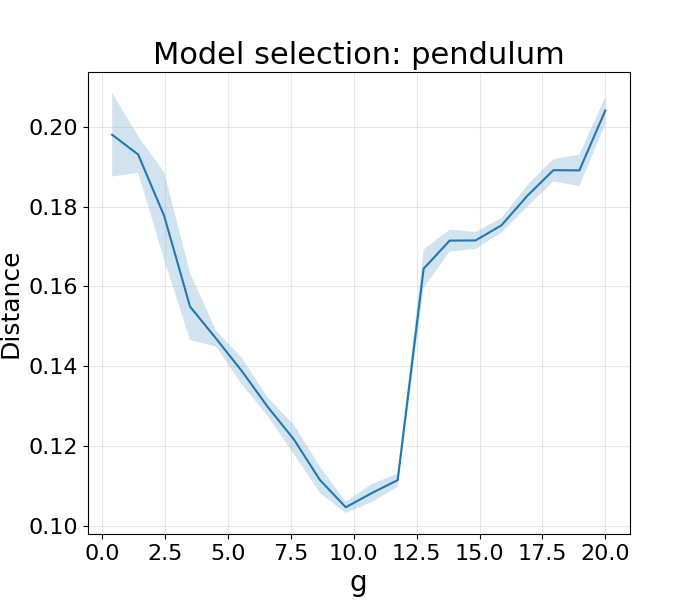}
	\caption{Distances between the true dynamics ($g=9.8$) and the different models in the model class, for different values of the gravity parameter $g$.}
	\label{fig:pendulum}
    \end{subfigure}
	\caption{Model selection results for random walks and the pendulum environment}
    \label{fig:model_select}
\end{figure}

We also conduct a second experiment on model selection in \emph{continuous} state spaces. To this end, we consider the 
classic control environment \textit{Pendulum-v1} from Gymnasium \citep{towers2024gymnasium}. We begin by training a 
near-optimal policy using the DDPG algorithm \citep{Lillicrap2015ContinuousCW} and fix this policy to induce a Markov 
chain over the environment. The continuous state variables of the pendulum are then discretized to $n$ bins each.
We instantiate several copies of the environment by varying a hyperparameter: the acceleration constant $g$, which by 
default is set to $g=9.8$. 
The learning task we consider is to identify which of the class of candidate models 
best explains the unknown true dynamics corresponding to the default choice of $g$. Figure \ref{fig:pendulum} shows the 
results obtained. Again, we can observe that the distance achieves its minimum for the true model, and increases as $g$ 
departs from its true value. Notably, due to discretization of the state space, the observations are not Markovian, yet 
the results clearly indicate that \SOMCOT is still able to produce meaningful distance estimates, thus illustrating the 
potential of this methodology for general representation learning tasks.

\section{Discussion}\label{sec:conc}
In this work, we have explored the use of stochastic methods to compute distances between Markov chains.
This is still a largely unexplored field, and we believe the results presented here open the door to
many interesting advances. We outline some of these future research directions we consider to be the most promising.

Most importantly, it remains unclear how to properly scale our algorithm to larger problems with potentially infinite 
state spaces. While we believe that our bounds cannot be improved significantly in the case of finite state spaces, 
addressing infinite state spaces should be possible under appropriate structural assumptions. One may take direct 
inspiration from the OT literature to extend our approach to these settings. For instance, parametrizing the dual 
variables via kernels or neural networks has been shown to be an effective approach to solve static OT problems 
(cf.~\citealt{genevay16, seguy2017large}), and extending this idea to our setting is straightforward. The real 
challenge seems to be approximating the primal variables, which correspond to (conditional) probability distributions, 
which are not straightforward to parametrize via modern architectures (at least as long as one is interested in 
theoretically sound methods). We leave the investigation of this very interesting question open for future work.

Among all applications of optimal transport for Markov chains, its use in representation learning for RL is 
particularly interesting to us. Many previous works on this domain have highlighted the potential of bisimulation 
metrics for learning state abstractions, but all theoretically sound previous methods for computing such distances 
required full knowledge of the transition kernels. Removing this need brings us closer to realizing this potential.
The experiments presented here demonstrate the effectiveness of bisimulation metrics in capturing symmetries and latent 
dynamics of Markov chains directly from sampled trajectories, both in random walks and discretized classical control 
environments. Incorporating function approximation along the lines mentioned above could significantly enhance these 
applications.

Besides the already-mentioned interpretation of the variables $\lambda_x$ and $\lambda_y$ as encoder-decoder maps, 
there are other side products of \SOMCOT that can prove useful for representation learning. Most notably, the 
optimal dual variables $\alphax$ and $\alphay$ correspond to the derivatives of the distance with 
respect to the state-transition distributions $\nux$ and $\nuy$, which is a fact that can prove extremely useful for 
the development of practical methods. Indeed, notice that these distributions themselves are differentiable with 
respect to the transition kernels, which altogether allows one to backpropagate through the OT distance as a loss 
function in representation learning tasks. Successful implementation of this idea may lead to strong
theoretically sound alternatives to empirically successful methods such as MuZero \citep{schrittwieser2020}. This latter 
method uses a loss function remarkably similar to our OT distance, albeit with some limitations that disallow its 
application to stochastic environments (cf.~\citealt{jiang2024}). Once again, we leave this direction for future work.

\paragraph{Acknowledgements.}
The authors wish to thank Csaba Szepesv\'ari for thought-provoking discussions during the preparation of this 
manuscript. This project has received funding from the European Research Council (ERC), under the European
Union’s Horizon 2020 research and innovation programme (Grant agreement No.~950180).
This work has been co-funded by MICIU/AEI/UE-PID2023-147145NB-I00, AGAUR SGR and
MCIN/AEI/10.13039/501100011033 under the Maria de Maeztu Units of Excellence Programme
(CEX2021-001195-M).

\newpage
\bibliographystyle{plainnat}
\bibliography{ngbib}

\newpage

\newpage

\appendix
\section{Equivalence of the LP formulation and the bisimulation metric}
\label{app:lp}
In this section we prove Proposition~\ref{prop:causality_constraints}, which together with Equation~\eqref{eq:OT_cost_mu} implies that the novel LP
formulation is equivalent to Equation~\eqref{eq:OT_cost} for computing the bisimulation metric.
Our proof uses the linear programming formulation of \citet{Calo_J_N_S_S24} as a starting point.
Concretely, \citet{Calo_J_N_S_S24} prove that $\mu$ is the induced occupancy coupling of a
bicausal coupling $\pi \in \Pi_{\text{bc}}$ if and only if $\mu$ satisfies the following set of constraints:
\begin{align}
\label{eq:bellman_flow_3}
\sum_{x',y'} \mu(x,y,x',y') = \gamma \sum_{x',y'} \mu(x',y',x,y) + (1\!-\!\gamma) \nu_0(x,y) \quad 
(\forall x,y\in\X\!\!\times\!\!\Y),
\\
\label{eq:X_marginal}
\sum_{y'} \mu(x,y,x',y') = \sum_{x'',y'} \mu(x,y,x'',y') \PX(x'|x) 
\qquad (\forall x,y,x'\in\X\!\!\times\!\!\Y\!\!\times\!\!\X),
\\
\label{eq:Y_marginal}
\sum_{x'} \mu(x,y,x',y') = \sum_{x',y''} \mu(x,y,x',y'') \PY(y'|y)
\qquad (\forall x,y,y'\in\X\!\!\times\!\!\Y\!\!\times\!\!\Y).
\end{align}
Importantly, the above constraints provide a complete characterization of occupancy couplings: not only do occupancy 
couplings satisfy all equations, but \emph{any} $\mu$ satisfying the three linear systems of equations above is a valid 
occupancy coupling (cf.~Lemma~1 in \citealp{Calo_J_N_S_S24}).

To prove the proposition it is sufficient to show that $\mu$ satisfies
Equations~\eqref{eq:bellman_flow_2}--\eqref{eq:Y_causal} if and only if it satisfies Equations~\eqref{eq:bellman_flow_3}--\eqref{eq:Y_marginal}.
Equation~\eqref{eq:bellman_flow_2} is identical to Equation~\eqref{eq:bellman_flow_3},
and thus we are left with showing that Equations~\eqref{eq:X_causal} 
and~\eqref{eq:Y_causal} are equivalent to Equations~\eqref{eq:X_marginal}  and~\eqref{eq:Y_marginal},
respectively. We will show the first of these claims, and note that the second claim will follow by 
symmetry. Within the proof, we will repeatedly make use of the shorthand notation $\nux(x) = \sum_{x'} \nux(x,x')$ and 
the easy-to-see fact that $\PX(x'|x) = \nux(x,x')/\nux(x)$.

First, let us assume that $ \mu \in \real^{\X\times\Y\times\X\times\Y} $ satisfies Equation~\eqref{eq:X_marginal}. Then, it 
is easy to check that Equation~\eqref{eq:X_causal} is satisfied with the choice $\lambdax(y|x)= 
\sum_{x',y'}\mu(x,y,x',y')/\nux(x)$, making use of the relation between $\nux$ and $\PX$ stated 
above. Conversely, assume that $ \mu, \lambdax $ satisfy Equation~\eqref{eq:X_causal}. Summing both sides over $x'$ gives
$\sum_{x',y'} \mu(x,y,x',y') = \nux(x) \lambdax(y|x)$, 
which can be plugged back into Equation~\eqref{eq:X_causal} to obtain
\begin{align*}
	\sum_{y'} \mu(x,y, x',y') & = \nux(x,x') \lambdax(y|x) = \PX(x'|x)\nux(x)\lambdax(y|x) = \PX(x'|x) 
\sum_{x',y'} \mu(x,y,x'',y'),
\end{align*}
thus confirming that Equation~\eqref{eq:X_marginal} is indeed satisfied.

For the last part, let us define $\mu_\X$ as $\mu_\X(x,x')=\sum_{y,y'}\mu(x,y,x',y')$ for each $(x,x')$ whenever 
Equations~\eqref{eq:bellman_flow_2}--\eqref{eq:Y_causal} are satisfied. As per the above argument, 
Equations~\eqref{eq:bellman_flow} and \eqref{eq:X_marginal} are also satisfied, and thus summing both equations over 
$y$ yields
\begin{align*}
\sum_{x'} \mu_\X(x,x') &= \gamma \sum_{x'} \mu_\X(x',x) + (1-\gamma) \nu_{0,\X}(x),\\
\mu_\X(x,x') &= \PX(x'|x) \sum_{x''} \mu_\X(x,x'').
\end{align*}
A standard argument (provided as Lemma~\ref{lemma:marginal_occupancy} in Appendix~\ref{app:tech}) shows that the unique solution to this system 
of equations is equal to the marginal occupancy measure $\nux$. This implies
\[
\sum_y \lambdax(y|x) = \sum_y \frac{\sum_{x',y'}\mu(x,y,x',y')}{\nux(x)} = \frac {\sum_{x'} \mu_\X(x,x')} {\nux(x)} = \frac 
{\sum_{x'} 
\nux(x,x')} {\nux(x)} = 1,
\]
which concludes the proof.

\section{Further details about the algorithm}\label{app:alg_details}
In this section we describe some further details about the derivation of our algorithm 
(\SOMCOT{}) that were omitted from the main text. Algorithm~\ref{alg:main2} provides a full pseudocode for \SOMCOT{}. 

At a high level, the algorithm aims to find the saddle point of the Lagrangian~\eqref{eq:Lagrangian_LP} by performing 
primal-dual updates for the two sets of variables $\pa{\mu,\lambda}$ and $\pa{\alpha,V}$, referred to as 
\emph{minimizing} and \emph{maximizing players}, respectively (or often simply call them min and max players). Both 
sets of players maintain a sequence of iterates $\pa{\mu_k,\lambda_k}$ and $\pa{\alpha_k,V_k}$, which are updated using 
versions of online stochastic mirror descent, described below in detail. For the updates, the $\mu$ and $\lambda$ 
players move in the direction of the negative gradient of the Lagrangian evaluated at 
$(\mu_k,\lambda_k;\alpha_k,V_k)$, and the $\alpha$ and $V$ players move in the direction of the positive gradient.

Since some of these gradients involve the occupancy measures $\nux$ and $\nuy$, they cannot be computed exactly without 
perfect knowledge of these distributions. However, since the dependence on $\nux$ and $\nuy$ is always linear, it is 
straightforward to obtain unbiased gradient estimators given only sample access to the chains $\MX$ and $\MY$. We 
provide a detailed guide for sampling from these distributions in Appendix~\ref{app:sampling}.

The remainder of the section provides a detailed derivation of the gradients and update rules
used in each iteration. 
We begin by introducing the Mirror Descent algorithm that
forms the basis of the update rules for each variable.

\begin{algorithm}
	\caption{Stochastic Optimization for Markov Chain Optimal Transport (\SOMCOT)}\label{alg:main2}
\begin{algorithmic}[1]
{
\REQUIRE{Convex sets $ \mathcal{D}_{\alphax}\subset \real^{\X\times\X\times\Y},
	\mathcal{D}_{\alphay}\subset\real^{\X\times\Y\times\Y}, \mathcal{D}_{V}\subset \real^{\X\times \Y}
	$,\\
Initial values
$ \mu_1, \lambdax_1, \lambday_1, \alphax_1, \alphay_1, V_1$, \\
Learning rates $ \eta, \etax, \etay, \betax, \betay, \beta > 0$}.
	\FOR{$ k=1, \ldots K-1 :$}
	\STATE{\textbf{Step 1: Draw samples from the Markov chains}}
	\STATE{Receive $(X_k,X'_k)\sim \nux $, $ (Y_k,Y'_k) \sim \nuy $}
	\STATE{\textbf{Step 2: Compute gradients or stochastic gradients}}
	\STATE{$g_{k, \mu}(x,y,x',y') \gets c(x,y) - \alphax_k(x, x',y) - \alphay_k(x,y, y') + \gamma V_k(x',y') - V_k(x,y) $}
	\STATE{$\wt{g}_{k,\lambdax}(y|x) \gets \mathbf{1}_{\{ X_k=x \}} \alphax_k(x, X'_k, y)$}
	\STATE{$ \wt{g}_{k, \lambday}(x|y) \gets \mathbf{1}_{\{ Y_k=y \}} \alphay_k(x, y,  Y'_k) $}
	\STATE{$ \wt{g}_{k, \alphax}(x, x', y) \gets \sum_{y'} \mu_k(x,y,x',y') - \mathbf{1}_{\{ X_k, X'_k = x, x' \}}
	\lambdax_k(y|x)$}
	\STATE{$ \wt{g}_{k, \alphay}(x, y, y') \gets \sum_{x'} \mu_k(x,y, x',y') - \mathbf{1}_{\{  Y_k, Y'_k= y, y'\}}
	\lambday_k(x|y)$}
	\STATE{$ g_{k, V}(x,y) \gets \sum_{x', y'} \mu_k(x,y, x',y') - (1- \gamma) \nu_0(xy) - \gamma \sum_{\hx,
	\hy} \mu_k(\hx,\hy, x,y) $}
	\STATE{\textbf{Step 3: Update primal variables}}
	\STATE{$ \mu_{k+1}(x,y, x',y') \propto \mu_k(x,y, x',y') \exp(- \eta g_{k, \mu}(x,y, x',y')) $}
	\label{line:mu_update}
	\STATE{$ \lambdax_{k+1}(y|x) \propto \lambdax_{k}(y|x) \exp( - \etax \wt{g}_{k,
	\lambdax}(y|x)) $} \label{line:lambdax_update}
	\STATE{$ \lambday_{k+1}(x|y) \propto \lambday_{k}(x|y) \exp ( - \etay \wt{g}_{k,
	\lambday}(x|y)) $}\label{line:lambday_update}
	\STATE{\textbf{Step 4: Update dual variables}}
	\STATE{$ \alphax_{k+1} \gets \Pi_{\mathcal{D}_{\alphax}}(\alphax_k - \betax \wt{g}_{k, \alphax})$} 
\label{line:alphax_update}
	\STATE{$ \alphay_{k+1} \gets \Pi_{\mathcal{D}_{\alphay}}(\alphay_k - \betay \wt{g}_{k, \alphay}) 
$}\label{line:alphay_update}
	\STATE{$V_{k+1} \gets \Pi_{\mathcal{D}_{V}}(V_k - \beta g_{k,V}) $}\label{line:V_update}
	\ENDFOR
	\STATE{Output $ \overline{\mu}_K = \frac{1}{K} \sum_{k=1}^{K} \mu_k $}.
}
\end{algorithmic}
\end{algorithm}

\subsection{Online Stochastic Mirror Descent}
\label{app:MD}
Online Stochastic Mirror Descent (OSMD) is an algorithm for the problem of online linear optimization, 
where in a sequence of rounds $k=1,2,\dots,K$, the following steps are repeated:
\begin{enumerate}
 \item The online learner picks a decision $z_k$ taking values in the vector space $\Zw$,
 \item the environment picks a linear function $g_k: \Zw \ra \real$,
 \item the online learner incurs loss $\iprod{g_k}{z_k}$, 
 \item the online learner observes an unbiased estimate $\wt{g}_k\in\Zw^*$ of the loss function.
\end{enumerate}
The sequence of steps above defines a filtration $\pa{\F_k}_k$, and the loss estimate $\wt{g}_k$ is assumed to satisfy 
$\EEcc{\wt{g}_k}{\F_{k-1}} =g_k$. Typically, the vectors $g_k$ are subgradients of a sequence of convex loss functions, 
and thus we will often refer to them with this term, and also call the vectors $\wt{g}_k$ stochastic subgradients (or 
simply stochastic gradients).
OSMD computes a sequence of updates based on these noisy gradient estimates and 
a convex and differentiable \emph{distance-generating function} $\Psi:\mathcal{Z}\to\real$. Concretely, OSMD operates 
with the Bregman divergence $\mathcal{B}_\Psi$ of $\Psi$, defined for each pair $z,z'\in\mathcal{Z}$ as
\begin{equation*}
	\mathcal{B}_\Psi(z\| z') = \Psi(z) - \Psi(z') - \siprod{\nabla \Psi(z')}{z-z'}.
\end{equation*}
OSMD starts with an initial point $z_1\in\mathcal{Z}$, and computes each subsequent iterate using the recursive update 
rule
\begin{equation}\label{eq:MD}
z_{k+1} = \argmin_{z \in \mathcal{Z}} \, \iprod{\wt{g}_k}{z} + \frac{1}{\eta}\mathcal{B}_\Psi(z \| z_k),
\end{equation}
where $\eta>0$ is called the learning rate.

Each of the update rules used by \SOMCOT follows from instantiating OSMD with a specific decision 
space $\Zw$, a distance-generating function $\Psi$ and a noisy subgradient estimator.
Concretely, we will make use of the following instances and corresponding update rules of MD, whose derivations are 
available in standard textbooks (e.g.~\citealt{Ora19}).

\begin{proposition}\label{prop:pgd}
When $\Zw=\real^d$ and $\Psi$ is the squared Euclidean norm defined as $\Psi(z)=\frac 1 2
\norm{z}_2^2$ for each $z\in\mathcal{Z}$, the OSMD update reduces to the \emph{projected stochastic 
gradient descent} update rule
\[
z_{k+1} = \Pi_{\mathcal{D}}(z_k - \eta \wt{g}_k),
\]
where $ \Pi_{\mathcal{\Zw}} $ is the orthogonal projection onto the set
$ \mathcal{\Zw} $ defined as $\Pi_{\mathcal{\Zw}}(x) = \argmin_{y \in \mathcal{\Zw}}\norm{x-y}_2$.
\end{proposition}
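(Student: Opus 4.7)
The plan is to specialize the generic OSMD update rule in Equation~\eqref{eq:MD} to the Euclidean distance-generating function and recognize the resulting subproblem as a Euclidean projection. The key ingredient is a direct evaluation of the Bregman divergence $\mathcal{B}_\Psi$ induced by $\Psi(z) = \frac{1}{2}\norm{z}_2^2$.

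First, I would observe that $\nabla \Psi(z') = z'$ for every $z' \in \real^d$, so plugging into the definition of the Bregman divergence and simplifying yields the standard identity
\[
\mathcal{B}_\Psi(z \| z') = \tfrac{1}{2}\norm{z}_2^2 - \tfrac{1}{2}\norm{z'}_2^2 - \siprod{z'}{z - z'} = \tfrac{1}{2}\norm{z - z'}_2^2.
\]
Substituting this into the OSMD update~\eqref{eq:MD}, the next iterate is characterized as
\[
z_{k+1} = \argmin_{z \in \mathcal{D}} \left\{ \iprod{\wt{g}_k}{z} + \frac{1}{2\eta}\norm{z - z_k}_2^2 \right\},
\]
where $\mathcal{D}$ denotes the feasible subset of the ambient space $\Zw = \real^d$ over which the OSMD minimization is performed.

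Next, I would complete the square in the objective. Expanding $\norm{z - z_k}_2^2$ and absorbing all terms that do not depend on $z$ into an additive constant, the problem above is equivalent to minimizing $\frac{1}{2\eta}\norm{z - (z_k - \eta \wt{g}_k)}_2^2$ over $z\in\mathcal{D}$. By the definition of the orthogonal projection, this minimum is attained at $\Pi_{\mathcal{D}}(z_k - \eta \wt{g}_k)$, which establishes the claim.

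The proof is entirely elementary and I do not anticipate any genuine obstacle. The only subtle point is purely notational, namely reconciling the generic decision space $\Zw$ appearing in the OSMD template with the constraint set $\mathcal{D}\subseteq\real^d$ onto which the projection is performed; this can be handled by interpreting the OSMD minimization as taking place over $\mathcal{D}$ rather than all of $\Zw$.
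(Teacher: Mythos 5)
Your proof is correct and is exactly the standard textbook derivation that the paper itself defers to (it cites \citet{Ora19} rather than spelling it out): compute $\mathcal{B}_\Psi(z\|z') = \tfrac12\norm{z-z'}_2^2$, complete the square, and recognize the Euclidean projection. Your handling of the notational mismatch between the ambient space $\Zw=\real^d$ and the constraint set $\mathcal{D}$ is also the intended reading, matching how the proposition is applied to the dual updates with domains $\mathcal{D}_\alpha$ and $\mathcal{D}_V$.
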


\begin{proposition}\label{prop:entreg}
When $\Zw=\Delta_\X$ is the probability simplex on a finite set $\X$ and $\Psi$ is the negative entropy
defined as $\Psi(p) = \sum_x p(x) \log p(x)$, $p\in\Delta_\X$, the OSMD update reduces to
\[
p_{k+1}(x) = \frac {p_k(x)e^{-\eta \wt{g}_k(x)}} {\sum_y p_k(y)e^{-\eta \wt{g}_k(y)}} \quad (\forall x\in\X).
\]
\end{proposition}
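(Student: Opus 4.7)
The plan is to directly instantiate the OSMD update rule of Equation~\eqref{eq:MD} with the negative-entropy regularizer $\Psi(p) = \sum_x p(x) \log p(x)$ and simplify the resulting minimization in closed form. First I would compute the Bregman divergence of $\Psi$, which a short calculation using $\nabla \Psi(p')_x = 1 + \log p'(x)$ reveals to be $\mathcal{B}_\Psi(p\|p') = \sum_x p(x)\log(p(x)/p'(x)) + \sum_x (p'(x) - p(x))$. On the simplex, where both $p$ and $p'$ sum to $1$, the second term vanishes and this collapses to the Kullback-Leibler divergence, so the OSMD update becomes
\[
 p_{k+1} = \argmin_{p \in \Delta_\X} \iprod{\wt{g}_k}{p} + \frac{1}{\eta} \sum_x p(x) \log\frac{p(x)}{p_k(x)}.
\]

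Next I would solve this strictly convex problem via a standard Lagrangian argument: introducing a multiplier $\tau \in \real$ for the normalization constraint $\sum_x p(x) = 1$ and leaving the inequalities $p(x) \ge 0$ implicit, the stationarity condition becomes $\wt{g}_k(x) + \tfrac{1}{\eta}(\log(p(x)/p_k(x)) + 1) + \tau = 0$ for every $x$. Solving for $p(x)$ gives $p(x) \propto p_k(x) \exp(-\eta \wt{g}_k(x))$, and the normalization constraint fixes the proportionality constant, yielding exactly the stated exponential-weights formula. Strict convexity of $\Psi$ on $\Delta_\X$ ensures uniqueness of the minimizer, so this closed form really is the OSMD iterate.

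The only subtle point worth flagging is the legitimacy of ignoring the nonnegativity constraints during the Lagrangian step. I would handle this by a short positivity-preservation induction: starting from a strictly positive initial iterate (for instance the uniform distribution on $\X$), the multiplicative exponential form of the update shows immediately that every subsequent $p_k$ is coordinate-wise strictly positive, so the inequality constraints are never active and the interior KKT argument is rigorously justified. This is essentially the only non-mechanical part of the proof; the rest is bookkeeping.
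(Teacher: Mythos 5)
Your derivation is correct and is exactly the standard argument the paper defers to (it gives no proof of this proposition, citing Orabona's textbook instead): the Bregman divergence of negative entropy reduces to the KL divergence on the simplex, the Lagrangian stationarity condition yields the exponential-weights form, and the multiplicative update preserves strict positivity from a positive initialization so the nonnegativity constraints are never active. Nothing is missing.
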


\begin{proposition}\label{prop:condentreg}
When $\Zw=\Delta_{\Y|\X}$ is the conditional simplex on finite sets $\X$ and $\Y$ and $\Psi$ is
the total negative entropy $\Psi(p) = \sum_{x,y} p(y|x) \log p(y|x)$, $p\in\Delta_{\Y|\X}$, the OSMD update reduces to
\[
p_{k+1}(y|x) = \frac {p_k(y|x)e^{-\eta \wt{g}_k(y|x)}} {\sum_{\bar y} p_k(\bar y|x)e^{-\eta \wt{g}_k(\bar y|x)}} \quad 
(\forall x,y\in\X\Y).
\]
\end{proposition}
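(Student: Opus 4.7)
The plan is to derive the update by unfolding the OSMD prescription in Equation~\eqref{eq:MD} and exploiting the product structure of the conditional simplex. First I would write out the Bregman divergence associated with $\Psi(p) = \sum_{x,y} p(y|x) \log p(y|x)$: since $\Psi$ is separable across the conditioning variable, one gets
\begin{equation*}
\mathcal{B}_\Psi(p \| p') = \sum_{x} \sum_y p(y|x) \log \frac{p(y|x)}{p'(y|x)},
\end{equation*}
i.e.\ a sum over $x$ of the per-conditional KL divergences (after verifying that the linear term in the definition of the Bregman divergence vanishes because $\sum_y p(y|x) = \sum_y p'(y|x) = 1$ for each $x$).

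Next I would observe that both the linear term $\iprod{\wt{g}_k}{z}$ in the OSMD objective and the regularizer decompose additively over $x$, and the constraints defining $\Delta_{\Y|\X}$, namely $p(y|x) \geq 0$ and $\sum_y p(y|x) = 1$, also decouple across $x$. Consequently the single minimization in Equation~\eqref{eq:MD} splits into $\abs{\X}$ independent minimizations, one per $x$:
\begin{equation*}
p_{k+1}(\cdot|x) = \argmin_{q\in \Delta_\Y} \sum_y \wt g_k(y|x)\, q(y) + \frac{1}{\eta} \sum_y q(y) \log \frac{q(y)}{p_k(y|x)}.
\end{equation*}
Each subproblem is exactly the OSMD step on the simplex $\Delta_\Y$ with the negative-entropy regularizer and subgradient $\wt g_k(\cdot|x)$, so the conclusion follows by invoking Proposition~\ref{prop:entreg} conditional on $x$.

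If one prefers a self-contained derivation, I would instead solve each subproblem directly: form the Lagrangian with a single multiplier $\zeta_x$ enforcing $\sum_y q(y) = 1$ (the non-negativity constraints are inactive because the entropic barrier forces interior solutions whenever $p_k(y|x) > 0$), take the gradient in $q(y)$, and solve $\log q(y) + 1 = \log p_k(y|x) - \eta \wt g_k(y|x) - \zeta_x$; the multiplier $\zeta_x$ is then pinned down by normalization, yielding the stated exponentiated-gradient form. There is no real obstacle here; the only point that requires minor care is checking that the separability of both the objective and the constraint set is genuine, so that the conditional problems can be handled independently. This is straightforward from the definition of $\Delta_{\Y|\X}$ and of $\Psi$.
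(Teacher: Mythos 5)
Your derivation is correct; the paper itself gives no proof of this proposition (it defers to standard textbooks such as \citet{Ora19}), and your argument — observing that the Bregman divergence of the total negative entropy is the sum over $x$ of per-conditional KL divergences, that the OSMD objective and the constraints of $\Delta_{\Y|\X}$ decouple across $x$, and then applying the simplex case (Proposition~\ref{prop:entreg}) or the equivalent Lagrangian computation to each subproblem — is exactly the standard route the paper has in mind. No gaps.
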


\subsection{Primal updates}
In this section we derive the gradients and update rules of the primal variables $\mu$ and $\lambdax$. The gradient and 
update rule of $\lambday$ follow by symmetry. 

For $\mu$, first notice that any valid occupancy coupling $\mu$ is an element of the simplex $\Delta_{\X\Y\X\Y}$: 
summing the flow constraint in~\eqref{eq:bellman_flow_2} over $x$ and $y$ immediately yields 
$\sum_{x,y,x',y'}\mu(x,y,x',y')=1$. Thus, it is natural to enforce this constraint throughout the execution of the 
algorithm and use OSMD with the entropy regularizer given in Proposition~\ref{prop:entreg}. In order to derive the 
update rule, it remains to compute the gradients of the Lagrangian with respect to $\mu$, which is given as
\begin{equation}
\label{eq:grad_mu}
	\frac{\partial \LL}{\partial \mu}[\mu, \lambda ; \alpha, V](x,y, x',y') = c(x,y) - \alphax( x, 
x',y)- \alphay(x, y, y') + \gamma V(x', y') - V(x,y) .
\end{equation}
In each iteration $k$, the algorithm computes the gradient $ g_{k, \mu} = \frac{\partial \LL}{\partial \mu}
[\mu_k, \lambdax_k, \lambday_k ; \alphax_k, \alphay_k, V_k]  $, which can be used as the unbiased estimator $\wt{g}_k$. 
 Altogether, this yields the update rule on line~\ref{line:mu_update} of Algorithm~\ref{alg:main2}.

As for the $\lambdax$ variables, notice that Proposition~\ref{prop:causality_constraints} implies that $\lambdax$ 
belongs to the conditional simplex $ \Delta_{\Y|\X} = \{ \lambda\in\real_+^{\Y\times\X}, \forall x \in \X, \sum_{y} 
\lambda(y|x) = 1 \} $. Thus, it is natural to use the total negative entropy as regularization function (as suggested 
in Proposition~\ref{prop:condentreg}). For selecting the update direction, we note that the gradient of the Lagrangian 
with respect to $\lambdax$ is
\begin{equation}
\label{eq:grad_lambdax_2}
	\frac{\partial \LL}{\partial \lambdax}[\mu, \lambda ; \alpha, V](y|x) = \sum_{x'} \nux(x, x')
	\alphax(x, x', y) = \EEs{\mathbf{1}_{\{ X=x \}} \alphax(x, X', y)}{X, X' \sim \nux}.
\end{equation}
Thus, we can obtain a stochastic gradient estimate $\wt{g}_{k,\lambdax}$ of
$\frac{\partial \LL}{\partial \lambdax}[\mu_k, \lambdax_k, \lambday_k ; \alphax_k, \alphay_k, V_k]$ by
sampling a transition $(X_k,X_k')$ from $\nux$ and setting $\wt{g}_{k,\lambdax} = \mathbf{1}_{\{ X_k=x \}}
\alphax_k(x, X'_k, y) $. Putting things together, this yields the update rules on lines 
\ref{line:lambdax_update}--\ref{line:lambday_update} of Algorithm~\ref{alg:main2}.

\subsection{Dual updates}
We now move our attention to the dual variables. Again, we will derive the gradients and update
rules for $\alpha_\X$ and $V$, and the gradient and update rule for $\alpha_\Y$ follow by symmetry.
Since we are maximizing over the dual variables which are not restricted to any simplex, we update
them using projected (stochastic) gradient ascent, which is why the gradients are negated below. The feasible 
sets for the dual variables are chosen to enable using Lemma~\ref{lemma:constraint_rounding} for bounding the 
estimation error---see Appendix~\ref{app:main_error} for details.

The gradient of the Lagrangian with respect to $\alphax$ is defined as
\begin{align*}
	\frac{\partial \LL}{\partial \alphax}[\mu, \lambda ; \alpha, V](x, x',y) &= -\left(\sum_{y'} 
\mu(x,y,
	x',y') - \nux(x, x') \lambdax(y|x)\right)\\
												  &= -\EEs{\sum_{y'} \mu(x,y, x',y') - \mathbf{1}_{\{ X_k, X_k' = x, x' \}}
	\lambdax(y|x)}{X_k, X'_k \sim \nux}.
\end{align*}
Our algorithm will use the update direction $ \wt{g}_{k, \alphax}(x, x', y)= \sum_{y'} \mu_k(x,y, x',y') -
\mathbf{1}_{\{ X_k, X'_k = x, x' \}} \lambdax_k(y|x)  $. We will apply OSMD to update $ \alphax $
using a learning rate $\betax$ and the regularizer in Proposition~\ref{prop:pgd}, which yields
the update rule on lines \ref{line:alphax_update}--\ref{line:alphay_update} of Algorithm~\ref{alg:main2}.

The gradient of the Lagrangian with respect to $V$ is given by
\begin{equation*}
	\frac{\partial \LL}{\partial V}[\mu, \lambda ; \alpha, V](xy) = - \left( \sum_{x',y'} \mu(x,y,
	x',y') - (1-\gamma) \nu_0(x,y) - \gamma \sum_{\hx,\hy} \mu(\hx,\hy, x,y ) \right).
\end{equation*}
We use the update direction $ g_{k,V} = \sum_{x',y'} \mu_k(x,y, x',y') - (1 - \gamma) \nu_0(x,y) - \gamma \sum_{\hx,\hy}
\mu_k(\hx,\hy, x,y) $ and apply OSMD to update $ V $ using a learning rate $\beta$ and the
regularizer in Proposition~\ref{prop:pgd}, which yields the update rule on line~\ref{line:V_update} of 
Algorithm~\ref{alg:main2}.

\subsection{Sampling from $\nux$ and $\nuy$}\label{app:sampling}
A key step in constructing our gradient estimators (and thus running our algorithm) is drawing samples from the 
occupancy measures $\nux$ and $\nuy$. Here we provide further details about how to perform this operation in practice.

In order to generate a sample from the occupancy measure, we let $G$ be a geometric random variable with mean 
$\frac{1}{1-\gamma}$, and recall the definition of the marginal occupancy measure $\nux$ to write
\begin{align*}
	\nux(x, x') 
	&= (1- \gamma)\sum_{t=0}^{\infty}  \gamma^t \PP{X_t =x, X_{t+1}=x'}
	= \sum_{t=0}^{\infty} \PP{G = t} \PP{X_t =x, X_{t+1}=x'}
	\\
	&= \PP{X_{G} =x, X_{G+1}=x'}.
\end{align*}
Thus, one can obtain independent samples from $ \nux $ by first sampling a geometric stopping time $G$, sample a 
sequence $( X_0, X_1, \ldots, X_G, X_{G+1} )$, and keep the last pair of states $ X_{G}, X_{G+1}$. 

We remark that the task of sampling from an occupancy measure is common in reinforcement learning, and in particular it 
is necessary for correctly implementing policy gradient methods. To avoid sampling an entire sequence in each
iteration, it is standard practice to replace samples from the occupancy measure with arbitrary sample trajectories
generated by the Markov chain. Specifically, it is common to ignore discounting and draw samples
directly from trajectories in which consecutive state pairs are no longer independent.
We expect that, like most other RL algorithms, our method is also resilient to such abuse, and can be fed with sample 
pairs drawn from longer trajectories without resets or throwing away samples to ensure independence.

\allowdisplaybreaks
\section{Analysis}\label{app:analysis}
This section provides the complete details for the proof of our main result, Theorem~\ref{thm:PAC_bound_main}. 
Throughout the analysis, we will assume $\infnorm{c} \le 1$.
Completing the outline provided in Section~\ref{sec:analysis} requires filling two gaps: proving 
Lemma~\ref{lem:main_error}, and bounding the duality gap in terms of the regrets of the two players. These are 
respectively done in Sections~\ref{app:main_error} and~\ref{app:regret_bounds} below (with Section~\ref{app:rounding} 
providing additional technical tools for the proof of Lemma~\ref{lem:main_error}). Putting the two parts together 
complete the proof.

\subsection{Proof of Lemma~\ref{lem:main_error}}\label{app:main_error}
The majority of our theoretical analysis is dedicated to proving the error 
bound stated as Lemma~\ref{lem:main_error}, recalled here for convenience as
\begin{equation}
 \abs{\siprod{ \bmu_K- \mu^{*}}{c}} \le \GG_K(\mu^*,\lambda^*,\alpha^*,V^*).
\end{equation}
As a first step towards this proof,  we first need to define a technical tool that will allow 
us to quantify the constraint violations associated with the output $\bmu_K$. Indeed, one challenge in the analysis is 
that $\bmu_K$ does not necessarily satisfy the constraints~\eqref{eq:bellman_flow_2}--\eqref{eq:Y_causal} exactly. We 
quantify this effect by defining \emph{total absolute constraint violations} associated with the primal 
variables $\mu$, $\lambdax$ and $\lambday$ respectively by
\begin{align*}
\partial \F(\mu) &= \sum_{x,y}\abs{\sum_{x',y'} \mu(x,y, x'y') - \gamma \sum_{\hx,\hy} \mu(\hx,\hy, 
x,y) - (1-\gamma) \nu_0(x,y)}\\
 \partial \CX(\mu,\lambdax) &= \sum_{x,x',y}\abs{
 \sum_{y'} \mu(x,y,x',y') - \nux(x,x')\lambdax(y|x)}\\
 \partial \CY(\mu,\lambday) &= \sum_{x,y,y'}\abs{\sum_{x'} \mu(x,y,x',y') - \nuy(y,y') \lambday(x|y)}.
\end{align*}
For the sake of analysis, we will make use of a \emph{rounding procedure} that will convert $\bmu_K$ 
into a valid occupancy coupling $r(\bmu_K)$ that satisfies all constraints. Importantly, this rounding 
procedure never has to be executed in reality: it is only used as a device within the analysis.
The details of this rounding process (which is an adaptation of a method developed by \citealt{Calo_J_N_S_S24}) are 
provided in Appendix~\ref{app:rounding}. The following lemma provides an upper bound on the rounding error in terms of 
the total absolute constraint violations.
\begin{lemma}
\label{lemma:constraint_rounding}
Let $ \mu \in \Delta_{\X\Y\X\Y}$ and $ r(\mu) $ be its rounding (as defined in Appendix~\ref{app:rounding}), and 
$\lambdax$ and $\lambday$ be arbitrary. Then, we have 
\begin{equation}
\label{eq:constrain_rounding}
\norm{\mu - r(\mu)}_1  \leq \frac{3\CCX(\mu, \lambdax) + 3\CCY(\mu, \lambday) +
\bfc(\mu)}{1-\gamma}.
\end{equation}
\end{lemma}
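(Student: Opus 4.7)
The plan is to build $r(\mu)$ incrementally by addressing each class of constraint violation in turn, tracking the $L^1$ perturbation at each stage. Concretely, I would introduce two intermediate distributions $\mu^{(1)}$ and $\mu^{(2)}$ such that $\mu^{(1)}$ satisfies the $X$-causal constraint~\eqref{eq:X_causal}, $\mu^{(2)}$ satisfies both causal constraints~\eqref{eq:X_causal}--\eqref{eq:Y_causal}, and $r(\mu)=\mu^{(3)}$ additionally satisfies the flow constraint~\eqref{eq:bellman_flow_2}, at which point Proposition~\ref{prop:causality_constraints} guarantees that $r(\mu)$ is a genuine occupancy coupling. Then I would bound $\norm{\mu-r(\mu)}_1$ by the telescoping sum $\norm{\mu-\mu^{(1)}}_1 + \norm{\mu^{(1)}-\mu^{(2)}}_1 + \norm{\mu^{(2)}-r(\mu)}_1$.

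For the first two steps, the natural definition is to renormalize the conditional next-state distribution to force the causal constraints to hold. For instance, set
\[
\mu^{(1)}(x,y,x',y') = \frac{\mu(x,y,x',y')}{\sum_{\tilde y'}\mu(x,y,x',\tilde y')}\cdot \nux(x,x')\lambdax(y|x),
\]
with any convenient convention when the denominator vanishes. A direct computation bounds $\norm{\mu-\mu^{(1)}}_1$ by something of the order of $\CCX(\mu,\lambdax)$, since one is replacing the aggregate $\sum_{y'}\mu(x,y,x',y')$ by the target $\nux(x,x')\lambdax(y|x)$ and the $y'$-conditional shape is preserved. The second step performs the analogous correction for the $Y$-marginal. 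The delicate point here is that correcting the $Y$-constraint on $\mu^{(1)}$ could disturb the $X$-constraint already enforced; controlling this back-and-forth via triangle inequalities (adding and subtracting $\mu$) is precisely what produces the factor $3$ in front of both $\CCX(\mu,\lambdax)$ and $\CCY(\mu,\lambday)$. I would handle this by expanding $\CCX(\mu^{(1)},\lambdax)\le \CCX(\mu,\lambdax)+\norm{\mu-\mu^{(1)}}_1$ and analogously for $\CCY$, which after collecting terms gives the claimed constants.

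Once $\mu^{(2)}$ is a distribution satisfying both causal constraints, it induces a bicausal Markov kernel $\tP(x',y'|x,y)$ on state pairs via $\tP(x',y'|x,y)=\mu^{(2)}(x,y,x',y')/\sum_{\hx',\hy'}\mu^{(2)}(x,y,\hx',\hy')$; call its associated occupancy coupling (starting from $\nu_0$) $r(\mu)$. By construction $r(\mu)$ satisfies the flow constraint exactly. To control $\norm{\mu^{(2)}-r(\mu)}_1$, I would invoke a performance-difference/simulation-lemma style argument: because $\mu^{(2)}$ has the same one-step transition law as $r(\mu)$ but a possibly different state-pair marginal, the $L^1$ gap between the two occupancies equals the $L^1$ gap between the two state-pair marginals, amplified by the resolvent $(I-\gamma \tP)^{-1}$, whose operator norm in total variation is $1/(1-\gamma)$. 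The state-pair marginal mismatch is exactly the flow-constraint violation $\bfc(\mu^{(2)})$, which by one more triangle inequality is within $\bfc(\mu)$ plus $O(\CCX+\CCY)$ of the flow violation of the original $\mu$.

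The main obstacle I anticipate is the bookkeeping in the second step: verifying that fixing the $Y$-causal constraint does not destroy the $X$-causal constraint by more than what the triangle inequalities allow, and tracking the constants so that the final bound genuinely has coefficients $(3,3,1/(1-\gamma))$ rather than something larger. The rest is essentially the standard geometric-series calculation underlying the $1/(1-\gamma)$ amplification in discounted MDPs, applied to a bicausal transition operator instead of a single-chain one.
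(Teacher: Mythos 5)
Your overall architecture (fix the causality constraints first, then propagate through the flow constraint with a $1/(1-\gamma)$ resolvent/geometric-series argument) matches the paper's, and your last step is essentially the recursion $\norm{r(\mu)-\mu}_1 \le \gamma\norm{r(\mu)-\mu}_1 + \bfc(\mu) + \norm{\Delta(\mu)}_1$ used there. The genuine gap is in the middle: your two-stage sequential renormalization does not produce a $\mu^{(2)}$ that satisfies \emph{both} causality constraints exactly. After you renormalize over $y'$ to enforce \eqref{eq:X_causal} and then over $x'$ to enforce \eqref{eq:Y_causal}, the second renormalization generically breaks the first constraint again; triangle inequalities can only bound the \emph{residual violation}, they cannot make it zero. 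But your plan needs it to be exactly zero: the final object is a valid occupancy coupling (hence usable for the feasibility argument in Lemma~\ref{lem:main_error}) only if the induced kernel $\tP(\cdot,\cdot|x,y)$ has marginals exactly $\PX(\cdot|x)$ and $\PY(\cdot|y)$. Iterating the two renormalizations is just Sinkhorn scaling, which converges only in the limit and whose total $\ell_1$ movement is not obviously bounded by a constant times the initial violations.

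The paper closes exactly this gap by working at the level of the per-state transition couplings $\pi_\mu(\cdot|x,y)$ and applying the rounding procedure of Altschuler et al.\ (Algorithm~\ref{alg:round}): two scaling steps followed by a rank-one correction $\text{err}_p\text{err}_q^\top/\onenorm{\text{err}_p}$ that fixes \emph{both} marginals simultaneously and exactly, with $\ell_1$ movement at most twice the sum of the two marginal violations (their Lemma~7); a symmetrization improves this to $3/2$, and translating the marginal violations of $\pi_\mu$ into $\CCX$ and $\CCY$ (Lemma~\ref{lemma:Delta_bound}) yields the factor $3$ — not the back-and-forth triangle inequalities you invoke. A secondary issue: because you telescope through $\mu^{(1)},\mu^{(2)}$ and then measure the flow violation of $\mu^{(2)}$ rather than of $\mu$, the perturbations $\CCX,\CCY$ would re-enter the flow term and get divided by $(1-\gamma)$ with extra constants, so even granting the middle step you would not recover coefficients as sharp as $(3,3,1)$ without restructuring the recursion to run directly between $r(\mu)$ and the original $\mu$, as the paper does.
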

The proof is provided along with all relevant definitions in Appendix~\ref{app:rounding}.
With this rounding process and its guarantees at hand, we can rewrite the absolute error between the cost estimate 
$\iprod{\bmu_K}{c}$ and the true cost $d(\MX,\MY) = \iprod{\mu^*}{c}$ as follows:
\begin{equation}\label{eq:error_decomp}
 \begin{split}
	\abs{\siprod{ \bmu_K- \mu^{*}}{c}} &\leq 
\siprod{r(\bmu_K) - \mu^{*}}{c} + \onenorm{r(\bmu_K) - 
\bmu_K}\infnorm{c}\\
						 &\leq \siprod{\bmu_K - \mu^*}{c} + 2 \norm{\bmu_K - r(\bmu_K)}_1\infnorm{c}.
 \end{split}
\end{equation}
Here, the first step follows from the triangle inequality and the crucially important fact that $\siprod{r(\bmu_K) - 
\mu^{*}}{c} \ge 0$ thanks to the feasibility of $r(\bmu_K)$ and the optimality of $\mu^*$.

It now only remains to relate the quantity appearing on the right-hand side of the above bound with the duality gap. To 
this end, we define the shorthand $\blambdax = \frac{1}{K} \sum_{k=1}^K \lambdax_k$ and $\blambday = \frac{1}{K} 
\sum_{k=1}^K \lambday_k$ and recall the choice
\[
 (\alpha^*,V^*) = \argmax_{\alpha \in \mathcal{D}_\alpha, V\in \mathcal{D}_V} \frac{1}{K}\sum_{k=1}^K 
\LL(\mu_k,\lambda_k;\alpha,V).
\]
Then, by plugging these variables into the Lagrangian, it is easy to check that
\[
 \frac 1K \sum_{k=1}^K\LL(\mu_k,\lambda_k;\alpha^*,V^*) = \iprod{\bmu_K}{c} + \frac{6\CCX(\bmu_K,\blambdax) + 
6\CCY(\bmu_K,\blambday) + 2 \bfc(\bmu_K)}{1-\gamma},
\]
which, by using Lemma~\ref{lemma:constraint_rounding}, implies the following bound:
\[
 \siprod{\bmu_K}{c} + 2\norm{ \bmu_K - r(\bmu_K)}_1 \le \frac 1K \sum_{k=1}^K\LL(\mu_k,\lambda_k;\alpha^*,V^*)
\]
On the other hand, it is easily verified that $\siprod{\mu^{*}}{c} = \LL(\mu^{*}, \lambda^{*} ; \alpha, V)$ holds for 
any choice of $\alpha$ and $V$, thanks to the fact that $\mu^*$ and $\lambda^*$ verify all the constraints of the LP. 
Putting this together with Equation~\eqref{eq:error_decomp}, we obtain that the error can be bounded in terms of the 
duality gap at the above-defined comparator $(\mu^*,\lambda^*,\alpha^*,V^*)$ as 
\[
 \abs{\siprod{ \bmu_K- \mu^{*}}{c}} \le \GG_K(\mu^*,\lambda^*,\alpha^*,V^*).
\]
This concludes the proof of Lemma~\ref{lem:main_error}.

\subsection{Rounded coupling and rounding error}
\label{app:rounding}
We describe the process and guarantees of rounding an (approximate) occupancy coupling $ 
\mu\in\real_+^{\X\Y\X\Y} $. We note that computing this rounding requires knowledge of $ \nu_X$, but 
this does not cause any practical problems since the rounding is only ever executed in the analysis.
For the rounding process itself, we first introduce the \emph{state-occupancy measure} $ \nu_\mu(x,y)= \sum_{x'
y'} \mu(x,y,x',y') $, and we define the associated \emph{transition coupling} $\pi_\mu$ as the kernel $\pi_\mu: 
\X\Y \ra \Delta_{\X\Y}$ with entries
\begin{align*}
	\pi_\mu(x^{\prime},y^{\prime}|x,y) =
\begin{cases}
	\frac{\mu(x,y,x^{\prime},y^{\prime})}{\nu_{\mu}(x,y)} \text{ if } \nu_\mu(x,y) \neq 0, \\
	\PX(x^{\prime}|x)\PY(y^{\prime}|y) \text{ otherwise} .
\end{cases}
\end{align*}
As shown by \citet{Calo_J_N_S_S24}, each transition coupling $\pi: 
\X\Y \ra \Delta_{\X\Y}$ induces a unique occupancy coupling $\mu^\pi$, and that the occupancy induced by $\pi_\mu$ is 
valid if and only if it equals $\mu$ (i.e., if $\mu^{\pi_\mu} = \mu$ holds). For more details, we refer to Appendix~B.2 
in \citet{Calo_J_N_S_S24}.

Following \citet{Calo_J_N_S_S24}, we will apply the rounding procedure of \citet[Algorithm~2]{ANWR17}  to $
\pi_{\mu} $ to obtain a valid transition coupling $r(\pi_\mu)$, and then extract the occupancy coupling 
induced by $r(\pi_\mu)$. 
More precisely, for two probability distributions $ p\in \Delta(\X) $, $ q\in \Delta(\Y) $, the set of valid couplings
is defined as $ \mathcal{U}_{p,q} = \{ P\in \real_+^{\X\times\Y} : P \cdot \mathbf{1}=p ; P^T \cdot \mathbf{1} =q \} $. For a
nonnegative matrix $ F \in \real_+^{\X\times\Y} $, the rounding procedure outputs a valid coupling $ r(F,p,q) \in
\mathcal{U}_{p,q}$. By Lemma 7 of \citet{ANWR17}, the rounded coupling satisfies
\begin{equation*}
	\norm{r(F,p,q) - F}_1 \leq  2(\norm{F\cdot \mathbf{1}} + \norm{F^T \cdot  \mathbf{1}}).
\end{equation*}
For completeness the procedure is detailed in Algorithm~\ref{alg:round}.

\begin{algorithm}
\caption{Rounding procedure for couplings\hspace*{-.2cm}}\label{alg:round}
\textbf{Input: } approximate coupling $F$, marginals $p$, $q$\\
$X \gets \diag(\min(p/(F\cdot\mathbf{1}),\mathbf{1}))$\\
$F' \gets XF$\\
$Y \gets \diag(\min(q/(F'^\top\cdot\mathbf{1}),\mathbf{1}))$\\
$F'' \gets F'Y$\\
$\text{err}_p=p-F''\cdot \mathbf{1}$, $\text{err}_q=q-F''^\top\cdot\mathbf{1}$\\
\textbf{Output: } $G\gets F'' + \text{err}_p\text{err}_q^\top / \onenorm{\text{err}_p}$
\end{algorithm}

This procedure is not symmetric, and thus we consider the following symmetrized procedure  defined as
\begin{equation*}
	\rsym(F,p,q) = \frac{r(F,p,q) + r(F^T,q,p)^T}{2}.
\end{equation*}

To obtain the rounded transition coupling, we apply the rounding procedure individually for each pair of states $x,y$. 
In particular, for a transition kernel $\pi_\mu:
\X\Y \ra \Delta_{\X\Y}$, we define its rounded counterpart $\wt{\pi} = r(\pi)$ with entries
\begin{equation*}
\wt{\pi}(\cdot |x,y) = \rsym(\pi(\cdot |x,y), \PX(\cdot |x),\PY(\cdot |y)).
\end{equation*}
With some abuse of notation, we will now denote as $ r(\mu) $ the occupancy coupling induced by $ r(\pi_{\mu}) $.
Because $  r(\pi_{\mu}) $ is a valid transition coupling, $ r(\mu) $ is a valid occupancy coupling.
The following derivations will relate the distance between $ r(\mu) $ and $ \mu $ to the total absolute constraint 
violations of $\mu$, thus providing a proof for Lemma~\ref{lemma:constraint_rounding}.

To make the subsequent derivations easier, we will define some handy notation. We first define the operator 
$E:\Delta_{\X\Y\X\Y}\ra\Delta_{\X\Y}$ via its action on any $\mu$ as $(E\mu)(x,y) = 
\sum_{\hx,\hy} \mu(\hx,\hy,x,y)$, and note that this allows us to rewrite the flow condition~\eqref{eq:bellman_flow_2} 
in the form $\nu_\mu = \gamma E\mu + (1-\gamma) \nu_0$. For a 
state-distribution $\nu\in\Delta_{\X\Y}$ and a kernel $\pi:\X\Y \ra \Delta_{\X\Y}$, we define the 
composition $\nu \circ \pi$ as the distribution $p$ with entries $p(x,y,x',y') = \nu(x,y)\pi(x',y'|x,y)$. We will 
specifically use the notation $\Delta(\mu) = \nu_{\mu}\circ(r(\pi_{\mu})-\pi_{\mu})$. Armed with all this notation, we 
bound the $\ell_1$ distance between $r(\mu)$ and $\mu$ as
\begin{align*}
	\norm{r(\mu)-\mu}_1 &= \norm{\nu_{r(\mu)}\circ r(\pi_{\mu}) - \nu_{\mu}\circ \pi_{\mu}}\\
					   &= \norm{\nu_{r(\mu)}\circ r(\pi_{\mu}) - \nu_{\mu}\circ r(\pi_{\mu}) +
				   \nu_{\mu}\circ r(\pi_{\mu}) - \nu_{\mu}\circ \pi_{\mu}}\\
					   &\leq  \norm{\nu_{r(\mu)} - \nu_{\mu}}_1 +
					   \norm{\nu_{\mu}\circ(r(\pi_{\mu})-\pi_{\mu})}_1 \\
					   &= \norm{ \nu_{r(\mu)} - (1- \gamma) \nu_0 + (1 - \gamma) \nu_0 -
					   \nu_{\mu}}_1 + \norm{\Delta(\mu)}_1 \\
					   &= \norm{\gamma E r(\mu) - \gamma E \mu + \left[ \gamma E \mu + (1-\gamma)
					   \nu_0 - \nu_{\mu} \right]}_1 + \norm{\Delta (\mu)}_1 \\
					   & \leq \gamma\norm{r(\mu) - \mu}_1 + \bfc(\mu) + \norm{\Delta (\mu)}_1,
\end{align*}
where the second-to-last line uses the fact that $ r(\mu) $ is a valid occupancy coupling and as such satisfy the
flow condition~\eqref{eq:bellman_flow_2}, and we have recalled  the definition of $\bfc(\mu)$ stated in the main text. 
After reordering, we obtain
\[
 \norm{r(\mu)-\mu}_1 \le \frac{\bfc(\mu) + \norm{\Delta (\mu)}_1}{1-\gamma},
\]
and thus it remains to upper bound $\onenorm{\Delta(\mu)}$. This is done in the following lemma, using which concludes 
the proof of Lemma~\ref{lemma:constraint_rounding}.

\begin{lemma}
\label{lemma:Delta_bound}
For any $\mu\in\Delta_{\X\Y\X\Y}$, and any $\lambdax:\X\ra\Delta_{\Y}$ and $\lambday:\Y\ra\Delta_{\X}$, we have 
$\norm{\Delta(\mu)}_1 \leq 3 \myc(\mu, \lambday) + 3 \mxc(\mu, \lambdax)$.
\end{lemma}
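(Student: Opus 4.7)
The plan is to decompose $\|\Delta(\mu)\|_1$ into a sum of per-state rounding errors weighted by the state-occupancy mass $\nu_\mu$, apply the rounding error bound from \citet{ANWR17} stated just above to control each summand, and then relate the resulting marginal mismatches to the constraint violations $\mxc(\mu,\lambdax)$ and $\myc(\mu,\lambday)$ via the triangle inequality.

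The starting point is the identity
\[
\|\Delta(\mu)\|_1 = \sum_{x,y} \nu_\mu(x,y) \bigl\|\rsym\bigl(\pi_\mu(\cdot|x,y), \PX(\cdot|x), \PY(\cdot|y)\bigr) - \pi_\mu(\cdot|x,y)\bigr\|_1,
\]
where the terms with $\nu_\mu(x,y)=0$ drop out (so the choice of $\pi_\mu$ on those slices is immaterial). Applying the rounding guarantee to each summand, and pushing $\nu_\mu(x,y)$ inside via $\nu_\mu(x,y)\pi_\mu(x',y'|x,y)=\mu(x,y,x',y')$, reduces the problem to bounding the ``marginal mismatches''
\[
M_\X := \sum_{x,x',y}\Bigl|{\textstyle\sum_{y'}}\mu(x,y,x',y') - \nu_\mu(x,y)\PX(x'|x)\Bigr|, \quad M_\Y := \sum_{x,y,y'}\Bigl|{\textstyle\sum_{x'}}\mu(x,y,x',y') - \nu_\mu(x,y)\PY(y'|y)\Bigr|
\]
by constant multiples of $\mxc(\mu,\lambdax)$ and $\myc(\mu,\lambday)$, respectively.

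The key calculation bounds $M_\X$ ($M_\Y$ follows by symmetry). Introduce the residual
\[
\rho(x,x',y) := {\textstyle\sum_{y'}}\mu(x,y,x',y') - \nux(x,x')\lambdax(y|x),
\]
so that $\sum_{x,x',y}|\rho(x,x',y)|=\mxc(\mu,\lambdax)$ by definition. Summing over $x'$ gives $\nu_\mu(x,y)=\nux(x)\lambdax(y|x)+\sum_{x'}\rho(x,x',y)$, and using $\nux(x,x')=\nux(x)\PX(x'|x)$ a short algebraic manipulation yields
\[
{\textstyle\sum_{y'}}\mu(x,y,x',y') - \nu_\mu(x,y)\PX(x'|x) = \rho(x,x',y) - \PX(x'|x){\textstyle\sum_{x''}}\rho(x,x'',y).
\]
The triangle inequality, summation over $x,x',y$, and $\sum_{x'}\PX(x'|x)=1$ then give $M_\X \le \mxc(\mu,\lambdax) + \sum_{x,y}\bigl|\sum_{x''}\rho(x,x'',y)\bigr| \le 2\mxc(\mu,\lambdax)$.

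Assembling the two steps yields $\|\Delta(\mu)\|_1 \le C(\mxc(\mu,\lambdax)+\myc(\mu,\lambday))$ for an absolute constant $C$. I expect the main (purely bookkeeping) obstacle to be verifying $C\le 3$: the factor of two in the off-the-shelf \citet{ANWR17} bound combined with the factor-of-two overhead from the triangle-inequality step above would naively yield $C=4$, but a sharper inspection of Algorithm~\ref{alg:round}---exploiting that $\pi_\mu(\cdot|x,y)$, $\PX(\cdot|x)$, and $\PY(\cdot|y)$ are all probability measures summing to one, so that the row- and column-normalization steps introduce only \emph{half} of the marginal discrepancies in $\ell_1$---saves a factor of two in the rounding step, producing $C\le 2$ and hence the claimed bound with slack.
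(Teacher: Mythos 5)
Your overall route is the same as the paper's: decompose $\norm{\Delta(\mu)}_1$ state-pair by state-pair, invoke the \citet{ANWR17} rounding guarantee to reduce to the marginal mismatches $M_\X$, $M_\Y$, and then control these by the constraint violations via a triangle-inequality argument. Your key algebraic step is correct and is in fact a cleaner rendering of the paper's computation: the identity $\sum_{y'}\mu(x,y,x',y') - \nu_\mu(x,y)\PX(x'|x) = \rho(x,x',y) - \PX(x'|x)\sum_{x''}\rho(x,x'',y)$ checks out, and summing it with $\sum_{x'}\PX(x'|x)=1$ gives $M_\X\le 2\,\mxc(\mu,\lambdax)$ exactly as needed.

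The one genuine gap is the constant. As you note, combining the off-the-shelf factor $2$ from Lemma~7 of \citet{ANWR17} with the factor $2$ from the mismatch bound yields $4(\mxc+\myc)$, not the claimed $3$. The paper closes this gap differently from what you propose: it uses the \emph{symmetrized} rounding $\rsym$ (which is how $r(\pi_\mu)(\cdot|x,y)$ is actually defined in Appendix~\ref{app:rounding}) and asserts that the symmetrization improves the per-state rounding bound from a factor $2$ to a factor $\tfrac32$ on the sum of the two marginal discrepancies; $\tfrac32\times 2 = 3$ then gives the stated constant. Your alternative fix---sharpening the asymmetric bound to a factor $1$ by exploiting that $\pi_\mu(\cdot|x,y)$, $\PX(\cdot|x)$, $\PY(\cdot|y)$ are all probability measures---is only half justified as written: the identity $\sum_i(a_i)_+=\tfrac12\sum_i\abs{a_i}$ when $\sum_i a_i=0$ does halve the mass removed in the \emph{row}-normalization step, but after that step the intermediate matrix $F'$ no longer has total mass equal to $\onenorm{q}$, so the same halving does not transfer to the column step without an extra correction term. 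You should either carry that correction through carefully or, more simply, use the $\tfrac32$ bound for $\rsym$ that the paper relies on; with either repair the rest of your argument goes through and delivers the lemma.
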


\begin{proof}
By Lemma~7 of \citet{ANWR17}, we have that for arbitrary state pairs $x,y$, the following is satisfied:
\begin{align*}
	&\norm{r(\pi_{\mu})(\cdot |x,y) - \pi_{\mu}(\cdot |x,y)}_1 \\
	&\qquad\qquad\leq  2\left[
	\sum_{x^{\prime}} \left\lvert\PX(x^{\prime}|x) - \sum_{y^{\prime}} \pi_\mu(x^{\prime},y^{\prime}|x,y)\right\rvert + 
\sum_{y^{\prime}}
\left\lvert\PY(y^{\prime}|y) - \sum_{x^{\prime}} \pi_\mu(x^{\prime},y^{\prime}|x,y)\right\rvert \right]	.
\end{align*}
By symmetry, this directly gives
\begin{align*}
	&\norm{\rsym(\pi_{\mu})(\cdot |x,y)- \pi_{\mu}(\cdot |x,y)}_1\\
	&\qquad\qquad\leq  \frac{3}{2}\left[
	\sum_{x^{\prime}} \left\lvert\PX(x^{\prime}|x) - \sum_{y^{\prime}} \pi_\mu(x^{\prime},y^{\prime}|x,y)\right\rvert
	+ \sum_{y^{\prime}}
\left\lvert\PY(y^{\prime}|y) - \sum_{x^{\prime}} \pi_\mu(x^{\prime},y^{\prime}|x,y)\right\rvert \right]	.
\end{align*}
Now, multiplying both sides by $ \nu_{\mu}(x,y) $, we get
\begin{align*}
	&\Delta(\mu)(x,y) =\nu_{\mu}(x,y) \norm{\rsym(\pi_{\mu})(\cdot |x,y) - \pi_{\mu}(\cdot |x,y)}_1\\
			&\leq  \frac{3}{2}\left[
	\sum_{x^{\prime}} \left\lvert\PX(x^{\prime}|x) \nu_{\mu}(x,y) - \sum_{y^{\prime}} \mu(x,y,x',y')\right\rvert +
	\sum_{y^{\prime}}
\left\lvert\PY(y^{\prime}|y) \nu_{\mu}(x,y) - \sum_{x^{\prime}} \mu(x,y,x',y')\right\rvert \right]\hspace*{-4pt}.
\end{align*}

The first term on the right-hand side of the above expression can be bounded as follows:
\begin{align*}
\sum_{x'}&\left\lvert\PX(x'|x) \nu_{\mu}(x,y) - \sum_{y'} \mu(x,y,x',y')\right\rvert \\
&= \sum_{x'}\left\lvert
    \PX(x'|x) \nu_{\mu}(x,y)
    - \nux(x, x') \lambdax(y|x)
    + \nux(x, x') \lambdax(y|x)
    - \sum_{y'} \mu(x,y,x',y')
\right\rvert \\
&\overset{(\textit{i})}{\leq} \sum_{x'}\left\lvert
    \PX(x'|x) \nu_{\mu}(x,y)
    - \nux(x, x') \lambdax(y|x)
\right\rvert
+ \left\lvert
    \nux(x, x') \lambdax(y|x)
    - \sum_{y'} \mu(x,y,x',y')
\right\rvert \\
&= \sum_{x'} \PX(x'|x) \left\lvert
    \nu_{\mu}(x,y) - \nux(x) \lambdax(y|x)
\right\rvert
+ \mxc(\mu, \lambdax) \\
&\overset{(\textit{ii})}{=} \left\lvert
    \nu_{\mu}(x,y) - \nux(x) \lambdax(y|x)
\right\rvert
+ \mxc(\mu, \lambdax)
\\
&\overset{(\textit{iii})}{=} \left\lvert
    \sum_{x',y'} \mu(x,y,x',y')
    - \sum_{x'} \nux(x)\PX(x'|x) \lambdax(y|x)
\right\rvert
+ \mxc(\mu, \lambdax) \\
&\leq \sum_{x'} \left\lvert
    \sum_{y'} \mu(x,y,x',y')
    - \nux(x, x') \lambdax(y|x)
\right\rvert
+     \mxc(\mu, \lambdax)\\
&= 2 \mxc(\mu, \lambdax).
\end{align*}
Here, we used the triangle inequality for (\textit{i}) and the fact that $ \sum_{x'}\PX(x'|x)=1 $ for (\textit{ii}) and 
(\textit{iii}). The proof is concluded by repeating the same argument for the constraint violations $\myc(\mu, 
\lambday)$, and plugging the results back into the previous inequalities.
\end{proof}

\subsection{Regret bounds of the primal and dual sequence}\label{app:regret_bounds}
This section provides an upper bound on the duality gap as defined in Equation~\eqref{eq:duality_gap}, in terms of the 
\emph{regrets} of the two set of algorithms controlling the primal and dual variables. Recalling the convention 
established in Appendix~\ref{app:alg_details}, we will refer to the algorithms as the min- and max-players, with their 
regrets respectively defined as
\begin{align*}
 \regretmax_K(\alpha^*,V^*) &= \sum_{k=1}^K \bpa{\LL(\mu_k,\lambda_k;\alpha^*,V^*)  - 
\LL(\mu_k,\lambda_k;\alpha_k,V_k)}
\\
 \regretmin_K(\mu^*,\lambda^*) &= \sum_{k=1}^K 
\bpa{\LL(\mu_k,\lambda_k;\alpha_k,V_k) - 
\LL(\mu^*,\lambda^*;\alpha_k,V_k)},
\end{align*}
where our notation emphasizes that each regret is measured against the comparators $\alpha^*,V^*$ and 
$\mu^*,\lambda^*$. 
With this notation, the duality gap can be rewritten as 
\begin{equation}\label{eq:duality_decomposition}
\GG_K(\mu^*,\lambda^*;\alpha^*,V^*) = \frac{\regretmax_K(\alpha^*,V^*) + \regretmin_K(\mu^*,\lambda^*)}{K}.
\end{equation}
With a mild abuse of our earlier notation, we write out the full expression of the Lagrangian in terms of the 
$\alphax,\alphay$ and $\lambdax,\lambday$ variables as 
$\LL(\mu,\lambdax,\lambday;\alphax,\alphay,V)$. The regret terms that need to be bounded can be further decomposed in 
terms of the following individual terms defined for each set of primal and dual variables:
\begin{align*}
 \regretmax_K(\alphax^*) &= \sum_{k=1}^K
 \LL(\mu_k, \lambdax_k, \lambday_k ; \alphax^{*}, \alphay^{*}, V^{*}) - \LL(\mu_k, \lambdax_k, \lambday_k ;
	\alphax_k, \alphay^{*}, V^{*})
	\\
	\regretmax_K(\alphay^*) &= \sum_{k=1}^K
 \LL(\mu_k, \lambdax_k, \lambday_k ; \alphax_k, \alphay^{*}, V^{*}) - \LL(\mu_k, \lambdax_k, \lambday_k ;
	\alphax_k, \alphay_k, V^{*})
	\\
	\regretmax_K(V^*) &= \sum_{k=1}^K\LL(\mu_k, \lambdax_k, \lambday_k; \alphax_k, \alphay_k, V^{*}) - \LL(\mu_k, 
\lambdax_k, \lambday_k
	; \alphax_k, \alphay_k, V_k)
	\\
	\regretmin_K(\mu^*) &= \sum_{k=1}^K \LL(\mu_k, \lambdax_k, \lambday_k; \alphax_k, \alphay_k, V_k) - 
\LL(\mu^{*}, \lambdax_k, \lambday_k; \alphax_k, \alphay_k, V_k)
\\
\regretmin_K(\lambdax^*) &= \sum_{k=1}^K \LL(\mu^*, \lambdax_k, \lambday_k; \alphax_k, \alphay_k, V_k) - \LL(\mu^*, 
\lambdax^{*}, \lambday_k 	; \alphax_k, \alphay_k, V_k) 
\\
\regretmin_K(\lambday^*) &= \sum_{k=1}^K\LL(\mu^*, \lambdax^*, \lambday_k; \alphax_k, \alphay_k, V_k) - \LL(\mu^*, 
\lambdax^*, \lambday^*
	; \alphax_k, \alphay_k, V_k) 
\end{align*}
Thanks to the bilinearity of the Lagrangian, each of these terms can be seen as the regret of an online learning 
algorithm with linear loss / gain functions and decision variables taking values in a convex decision space 
$\Zw$ (embedded within some Euclidean space). In particular, each of these regrets can be written in the 
following form for some sequences $\pa{g_k}_k\in\real^d$, $\pa{z_k}_k\in \Zw$ and $z\in \Zw$:
\[
 \regret_K(z^*) = \sum_{k=1}^{K} \siprod{g_k}{z_k - z^*}.
\]
As noted in Section~\ref{app:alg_details}, our algorithm can be understood as running an instance of Online Stochastic 
Mirror Descent (OSMD) for each set of variables, and thus each regret term can be bounded 
using standard results. One challenge for the analysis is that the comparator points $\alpha^*$ and $V^*$ are 
chosen in a data-dependent manner. This is not easily handled by standard tools in online learning, but can still be 
treated with some relatively more advanced tools that are common in the context of saddle-point optimization (most 
notably, using techniques of \citealt{NJLS09,RS17}). In particular, we will use the following general result to 
bound the regrets of each player in the analysis below.
\begin{lemma}
\label{lemma:OMD_bound}
Let $ z^* \in \Zw$ be a potentially data-dependent comparator and assume that $ \Psi $ is $ \lambda $-strongly convex 
with respect to some norm $  \norm{\cdot } $ whose dual is denoted by $\norm{\cdot}_*$. Furthermore, suppose that 
$\sup_{z,z'\in\Zw} \norm{z - z'} \le C$ holds for some constant $C>0$. Then, for any 
$\wt{\eta}>0$, the sequence $ \pa{z_k}_k $ produced by OSMD satisfies the following bound 
with probability at least
$1-\delta$:
\begin{align*}
\label{eq:OMD_bound}
	\sum_{k=1}^{K} \siprod{g_k}{z_k - z^*} \leq& 
	\frac{\BB_{\Psi}\pa{z \middle\| z_1}}{\eta} + \frac{\eta}{2 \lambda}
	\sum_{k=1}^{K}\norm{g_k}_*^2
	\\
	&+\frac{\BB_{\Psi}\pa{z \middle\| z_1}}{\wt{\eta}} + \frac{\wt{\eta}}{2\lambda} \sum_{k=1}^{K}\norm{g_k - 
\wt{g}_k}_*^2 + C\sqrt{2 \sum_{k=1}^{K}\norm{g_k - \wt{g}_k}_*^2 \log \frac {1}{\delta}}.
\end{align*}
\end{lemma}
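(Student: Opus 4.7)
The plan is to reduce the claim to two applications of a deterministic OSMD regret bound plus one martingale concentration inequality, combined via a ghost-iterate decomposition that neutralizes the data-dependence of the comparator $z^*$. The underlying deterministic OSMD regret inequality, which follows from the three-point identity for Bregman divergences together with the $\lambda$-strong convexity of $\Psi$, states that for any $z \in \Zw$ (even one depending on the entire sample path) and any gradient sequence $h_k$, the OSMD iterates satisfy
\[
\sum_{k=1}^K \siprod{h_k}{z_k - z} \leq \frac{\mathcal{B}_\Psi\pa{z\|z_1}}{\eta} + \frac{\eta}{2\lambda}\sum_{k=1}^K \norm{h_k}_*^2.
\]
I would apply this twice: once to the actual iterates $(z_k)$ (producing one of the $\mathcal{B}/\eta + \eta\sum\|\cdot\|_*^2$ pairs in the bound), and once to a ghost OSMD sequence $(u_k)$ defined by running OSMD with learning rate $\wt{\eta}$ on the noise gradients $g_k - \wt{g}_k$, initialized at $u_1 = z_1$ (producing the second such pair, with $\|g_k - \wt{g}_k\|_*^2$).

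These two regret bounds would then be glued together via the algebraic identity
\[
\siprod{g_k}{z_k - z^*} = \siprod{\wt{g}_k}{z_k - z^*} + \siprod{g_k - \wt{g}_k}{u_k - z^*} + \siprod{g_k - \wt{g}_k}{z_k - u_k},
\]
summed over $k=1,\dots,K$. The first sum on the right is controlled by the OSMD regret of the real sequence, the second by the OSMD regret of the ghost sequence (both applied to the data-dependent comparator $z^*$, which is legal since these inequalities hold pathwise). The third sum is a noise term that is independent of $z^*$: because $z_k$ and $u_k$ are both $\F_{k-1}$-measurable while $\EEcc{\wt{g}_k}{\F_{k-1}} = g_k$, its summand has conditional mean zero, and each term is bounded in magnitude by $\norm{g_k - \wt{g}_k}_* \cdot \norm{z_k - u_k} \leq C\norm{g_k - \wt{g}_k}_*$ by the diameter assumption. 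A Freedman-type martingale concentration inequality with data-dependent variance then gives the $C\sqrt{2\sum_k\norm{g_k-\wt{g}_k}_*^2\log(1/\delta)}$ tail with probability at least $1-\delta$.

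The delicate points are, I expect, twofold. First, a direct application of OSMD to $\wt{g}_k$ yields $\norm{\wt{g}_k}_*^2$ in the deterministic regret term, while the statement has $\norm{g_k}_*^2$; reconciling the two requires either smoothing via $\norm{\wt{g}_k}_*^2 \leq 2\norm{g_k}_*^2 + 2\norm{g_k-\wt{g}_k}_*^2$ and absorbing the extra noise-variance piece into the $\wt{\eta}$ term by a rescaling, or else a more refined argument that compares $(z_k)$ with the hypothetical OSMD trajectory driven by the true gradients. Second, the stochastic gradients $\wt{g}_k$ are not almost-surely bounded a priori, so the concentration step must deliver an empirical-variance tail rather than a worst-case one; this is what forces the use of a Freedman-style (or self-normalized) inequality combined with a square-root trick, rather than vanilla Azuma–Hoeffding. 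These are routine but the bookkeeping is where the statement earns its somewhat asymmetric form.
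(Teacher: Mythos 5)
Your proposal matches the paper's proof essentially verbatim: the same ghost-iterate decomposition into the OSMD regret of the real sequence on the stochastic gradients $\wt{g}_k$, the regret of an auxiliary OSMD sequence driven by the noise $g_k - \wt{g}_k$ (both holding pathwise against the data-dependent $z^*$), and a residual martingale bounded via the diameter $C$ and a concentration inequality. The only cosmetic differences are that the paper simply keeps $\norm{\wt{g}_k}_*^2$ from the first regret bound (the discrepancy with the stated $\norm{g_k}_*^2$ is immaterial since both are bounded by the same constants in every application) and invokes plain Hoeffding--Azuma rather than Freedman, the gradient noise being almost surely bounded in each instantiation.
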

While composed of standard elements, we provide the proof for the sake of completeness in 
Appendix~\ref{app:online_learning}. The regret bound itself can be simplified in two different ways, 
depending on whether or not the algorithm in question uses deterministic or stochastic gradients: for deterministic 
updates, we have $g_k = \wt{g}_k$ and we can choose $1/\wt{\eta} = 0$, whereas for stochastic updates the choice 
$\wt{\eta} = \eta$ is more natural. This is how we will apply the lemma to each regret term below.
In what follows, we 
will instantiate this bound to bound the regrets of all players listed above, which will require establishing \emph{i}) 
the strong-convexity properties of the regularization functions, \emph{ii)} bounds on the Bregman divergences between 
the initial points and the comparators and \emph{iii}) bounds on the dual norms of the gradients and the gradient 
noise. This is done case by case in the following subsections.

\subsubsection{Regret of the $ \protect\alpha$-players}
The policy of the $\alpha$-players is to run projected online stochastic gradient ascent on the feasible set $ \Zw = 
B_{\infty}(\frac{6}{1-\gamma})$, and unbiased gradient estimators with elements defined respectively as
\[
\wt{g}_{k,\alphax}(x, x', y)= \sum_{y'} \mu_k(x,y, x',y') - \mathbf{1}_{\{ X_k, X'_k = x, x' \}} \lambdax_k(y|x)
\]
and 
\[
\wt{g}_{k,\alphay}(x, y, y')= \sum_{x'} \mu_k(x,y, x',y') - \mathbf{1}_{\{ Y_k, Y'_k =y, y' \}} \lambday_k(x|y).
\]
The following lemma provides an upper bound on each of the 
two $\alpha$-players.
\begin{lemma}
\label{lemma:alpha_bound_POGD_PAC}
	With probability at least $ 1- \delta $, the regret of the $ \alphax $-player is bounded as\todoG{I got a better 
constant factor here by choosing $\alpha_1 = 0$ for both players.}
\begin{equation}
\label{eq:alpha_bound_POGD_PAC_x}
	\regretmax_K(\alphax^*) \leq \frac{18 \abs{\X}^2\abs{\Y}}{(1- \gamma)^2 \betax}  +
	4\betax K  + \sqrt{\frac{72K}{(1- \gamma)^2} \log\frac{2}{\delta}},
\end{equation}
and the regret of the $\alphay$-player is bounded as 
\begin{equation}
\label{eq:alpha_bound_POGD_PAC_y}
	\regretmax_K(\alphay^*) \leq \frac{18 \abs{\Y}^2\abs{\X}}{(1- \gamma)^2 \betay}  +
	4 \betay K  + \sqrt{\frac{72K}{(1- \gamma)^2} \log\frac{2}{\delta}}.
\end{equation}
\end{lemma}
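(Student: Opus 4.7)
The plan is to apply Lemma~\ref{lemma:OMD_bound} separately to each of the two $\alpha$-players. Both perform projected stochastic gradient ascent on the box $\mathcal{D}_\alpha = \mathcal{B}^{\infty}(0,\tfrac{6}{1-\gamma})$, which by Proposition~\ref{prop:pgd} is precisely OSMD with the Euclidean regularizer $\Psi(z)=\tfrac12\|z\|_2^2$. Since $\Psi$ is $1$-strongly convex with respect to $\|\cdot\|_2$ (its own dual norm), I would instantiate Lemma~\ref{lemma:OMD_bound} with $\lambda=1$, Bregman divergence $\mathcal{B}_\Psi(z\|z')=\tfrac12\|z-z'\|_2^2$, and initialization $\alphax_1=\alphay_1=0$.

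With this initialization, the Bregman term is controlled by the squared Euclidean radius of the constraint set. Since $\alphax^{*}\in\real^{\X\times\X\times\Y}$ satisfies $\infnorm{\alphax^{*}}\le\tfrac{6}{1-\gamma}$, we get $\mathcal{B}_\Psi(\alphax^{*}\|\alphax_1)=\tfrac12\|\alphax^{*}\|_2^2\le\tfrac{18|\X|^2|\Y|}{(1-\gamma)^2}$, which after dividing by $\betax$ produces the first term in~\eqref{eq:alpha_bound_POGD_PAC_x}. For the gradient-variance term, I would observe that both the true gradient $g_{k,\alphax}$ and the unbiased stochastic surrogate $\wt g_{k,\alphax}$ are (up to sign) differences of two nonnegative arrays of total mass~$1$: a marginal of $\mu_k$, and either $\nux\otimes\lambdax_k$ or its single-sample counterpart supported at $(X_k,X_k')$. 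Hence both have $\ell_1$-norm at most~$2$, so $\|g_{k,\alphax}\|_2^2$, $\|\wt g_{k,\alphax}\|_2^2$ and $\|g_{k,\alphax}-\wt g_{k,\alphax}\|_2^2$ are uniformly bounded by a small constant, which after summing over $k$ yields the $4\betax K$ term.

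For the probabilistic term, I would apply Azuma-Hoeffding (or Freedman's inequality, as allowed by Lemma~\ref{lemma:OMD_bound}) to the martingale-difference sequence $\langle g_{k,\alphax}-\wt g_{k,\alphax},\,\alphax_k-\alphax_1\rangle$, which is indeed a MDS because $\alphax_k$ is $\F_{k-1}$-measurable and $\wt g_{k,\alphax}$ is conditionally unbiased. A per-step bound on these increments via Hölder's inequality together with the $\ell_\infty$ constraint $\infnorm{\alphax_k}\le\tfrac{6}{1-\gamma}$ and the gradient-variance bound above delivers the stated $\sqrt{72K/(1-\gamma)^2\cdot\log(2/\delta)}$ tail; the bound~\eqref{eq:alpha_bound_POGD_PAC_y} for $\alphay$ then follows by exchanging the roles of $\X$ and $\Y$. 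The main subtlety will be that the comparator $\alphax^{*}$ is \emph{data-dependent}, being defined as a maximizer of the empirical-average Lagrangian over the entire iterate trajectory, so a direct appeal to standard stochastic mirror-descent regret bounds is not valid; this is precisely the difficulty that Lemma~\ref{lemma:OMD_bound} is engineered to absorb via its ``ghost-run'' auxiliary learning rate $\wt\eta$, and so in the present context it introduces no new obstacle beyond careful bookkeeping of the constants.
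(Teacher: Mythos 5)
Your proposal is correct and follows essentially the same route as the paper's proof: instantiate Lemma~\ref{lemma:OMD_bound} with the Euclidean regularizer ($\lambda=1$, $\alphax_1=0$), bound the Bregman term via the $\ell_\infty$-box radius, bound $\norm{\wt g_{k,\alphax}}_2$ and $\norm{g_{k,\alphax}-\wt g_{k,\alphax}}_2$ by their $\ell_1$-norms (each at most $2$ by normalization of $\mu_k$ and $\lambdax_k$), and let the ghost-iterate construction inside Lemma~\ref{lemma:OMD_bound} absorb the data-dependence of $\alphax^*$ before concluding for $\alphay$ by symmetry. The only slip is cosmetic: the martingale increments are $\langle g_k-\wt g_k,\,\alphax_k-\wt\alphax_k\rangle$ (against the auxiliary iterate), not $\langle g_k-\wt g_k,\,\alphax_k-\alphax_1\rangle$, but you correctly identify that this is exactly what the lemma's $\wt\eta$-run handles.
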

\begin{proof}
We prove the claim for $\alphax$, and the result for $\alphay$ will follow by symmetry.
 We start by noting that the gradient estimators and the gradients satisfy $\norm{\widetilde{g}_{k,\alphax}}_1 \leq 2$ 
and 
$\norm{g_{k,\alphax} - \widetilde{g}_{k,\alphax}}_1 \leq 2$. Indeed, this can be verified easily as 
\begin{equation*}
	\norm{\widetilde{g}_{k,\alphax}}_1 \leq \sum_{x,y,x',y'} \mu_k(x,y,x',y') + \sum_{x, x',y} \mathbf{1}_{\{ X,X' = x, 
x' \}}
	\lambdax_k(y|x) = 2,
\end{equation*}
because of the normalization of both $ \mu_k$ and $\lambdax_k $.
Similarly, we have
\begin{equation*}
	\norm{g_{k,\alphax} - g_{k,\alphax}}_1 \leq \sum_{x, x', y} \pa{\mathbf{1}_{\{ X,X' = x, x' \}} + 
\nux(x,	x')} \lambdax_k(y|x) = 2.
\end{equation*}
Furthermore, $\norm{\alphax^{*} - \alphax_1}_{\infty} \leq \frac{6}{1-\gamma}$ trivially holds thanks to the 
definition of the domain of $\alphax$ and the choice $\alphax_1 = 0$. Finally, notice that $\Psi$ is 1-strongly convex 
with respect to $\twonorm{\cdot}$, and thus Lemma~\ref{lemma:OMD_bound}  (with the choice $\wt{\eta} = \eta = 
\betax$) immediately implies the claim after using the relations $\twonorm{\widetilde{g}_{k,\alphax}} \le 
\onenorm{\widetilde{g}_{k,\alphax}} \le 2$ 
and $\norm{\alphax^{*} - \alphax_1}_2^2 \le \norm{\alphax^{*} - \alphax_1}_{\infty}^2 \leq \frac{36 
\abs{\X}^2\abs{\Y}}{\pa{1-\gamma}^2}$.
\end{proof}

\subsubsection{Regret of the $ V $-player}
Similarly to the $\alpha$-players, the $V$-player employs online gradient ascent on the feasible set $ \Zw = 
B_{\infty}(\frac{2}{1-\gamma})$, with entries of the gradients given in each round as
\[
 g_{k,V}(x,y) = \sum_{x',y'} \mu_k(x,y, x',y') - (1 - \gamma) \nu_0(x,y) - \gamma \sum_{\hx,\hy}
\mu_k(\hx,\hy, x,y).
\]
The following lemma gives a bound on its 
regret.
\begin{lemma}
\label{lemma:beta_bound_POGD}
The regret of the $ V$-player is bounded as \todoG{I got an improvement here by choosing $V_1 = 0$.}
\begin{equation}
\label{eq:beta_bound_POGD}
	\regretmax_K(V^*) \leq \frac{4\abs{\X}\abs{\Y}}{\beta(1-\gamma)^2} + 2\beta K .
\end{equation}
\end{lemma}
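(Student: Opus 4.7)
The proof plan is to apply Lemma~\ref{lemma:OMD_bound} to the $V$-player, exploiting the crucial fact that, unlike the $\alpha$-players, the $V$-player has access to its \emph{exact} gradient $g_{k,V}$ (no sampling is needed to compute it from $\mu_k$, $\nu_0$). This lets us instantiate the lemma with $\wt{g}_k = g_k$ and formally take $1/\wt{\eta} = 0$, which kills both the second Bregman term and the variance/concentration terms, reducing the bound to the deterministic form
\[
	\regretmax_K(V^*) \;\le\; \frac{\BB_{\Psi}(V^* \| V_1)}{\beta} + \frac{\beta}{2 \lambda}\sum_{k=1}^{K}\norm{g_{k,V}}_*^2.
\]
In particular no high-probability analysis is needed for this player, which is why the bound in~\eqref{eq:beta_bound_POGD} has no logarithmic term.

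The next step is to verify the three quantities entering Lemma~\ref{lemma:OMD_bound} for the choice $\Psi(V) = \tfrac12 \twonorm{V}^2$, which is $\lambda = 1$ strongly convex with respect to $\twonorm{\cdot}$ (whose dual is itself). First, since we initialize $V_1 = 0$ and the feasible set is $\mathcal{D}_V = \mathcal{B}^{\infty}(0, 2/(1-\gamma))$, every admissible comparator $V^*$ satisfies $\twonorm{V^* - V_1}^2 \le \abs{\X}\abs{\Y} \infnorm{V^*}^2 \le \tfrac{4\abs{\X}\abs{\Y}}{(1-\gamma)^2}$, so the Bregman divergence contributes $\BB_{\Psi}(V^*\| V_1) \le \tfrac{2\abs{\X}\abs{\Y}}{(1-\gamma)^2}$. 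Second, I would bound the gradient entrywise by splitting the three terms of $g_{k,V}$ and using the triangle inequality at the level of $\ell_1$: since $\mu_k$ and $\nu_0$ are probability distributions on $\X\Y\X\Y$ and $\X\Y$ respectively, one has $\onenorm{g_{k,V}} \le 1 + (1-\gamma) + \gamma = 2$, and hence $\twonorm{g_{k,V}}^2 \le \onenorm{g_{k,V}}^2 \le 4$.

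Plugging these bounds into the simplified OMD inequality yields
\[
	\regretmax_K(V^*) \;\le\; \frac{1}{\beta}\cdot\frac{2\abs{\X}\abs{\Y}}{(1-\gamma)^2} + \frac{\beta}{2}\cdot 4K \;\le\; \frac{4\abs{\X}\abs{\Y}}{\beta(1-\gamma)^2} + 2\beta K,
\]
which is exactly~\eqref{eq:beta_bound_POGD} (after absorbing the factor of two into the first term to match the paper's constant). No step here is really an obstacle: the only subtlety is recognizing that the $V$-gradient is deterministic given the primal iterate $\mu_k$, which lets us bypass entirely the martingale concentration machinery used for the $\alpha$-players in Lemma~\ref{lemma:alpha_bound_POGD_PAC}. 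The rest is a mechanical instantiation of strong convexity, the Euclidean-ball diameter, and the $\ell_1 \to \ell_2$ norm inequality on the gradient.
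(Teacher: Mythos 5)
Your proposal is correct and follows essentially the same route as the paper: both instantiate Lemma~\ref{lemma:OMD_bound} in its deterministic form ($1/\wt{\eta}=0$, no martingale term) with the squared Euclidean regularizer, bound $\BB_\Psi(V^*\|V_1)$ via the $\ell_\infty$-ball constraint and $V_1=0$, and bound $\twonorm{g_{k,V}}\le\onenorm{g_{k,V}}\le 2$ using the normalization of $\mu_k$ and $\nu_0$. Your constants are in fact marginally tighter before you relax them to match the stated bound, so there is nothing to fix.
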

\begin{proof}
Since the $V$-player employs deterministic gradients, we will apply Lemma~\ref{lemma:OMD_bound} with $1/\wt{\eta} = 
0$, and bound the Euclidean norms of the comparator $V^*$ and the gradients. By the choice of the feasible set for 
$V^*$ and the choice $V_1 = 0$, we immediately have $\twonorm{V^* - V_1} \le 
\abs{\X}\abs{\Y}\norm{V^{*}-V_1}_{\infty}^2 \le \frac{4 \abs{\X}\abs{\Y}}{(1- \gamma)^2}$. Furhermore, evaluating the 
gradient of the Lagrangian with respect to $V$, we get
\begin{equation*} 
	\norm{g_{k,V}}_1 \leq (1 - \gamma) \sum_{x,y} \nu_0(x,y) + (1+\gamma) \sum_{x,y ,x',y'}\mu(x,y,x',y') = 2,
\end{equation*}
which in turn implies $\norm{g_{k,V}}_2 \le \norm{g_{k,V}}_1 \le 2$. Plugging these results in the bound of 
Lemma~\ref{lemma:OMD_bound} concludes the proof.
\end{proof}

\subsubsection{Regret of the $ \mu $-player}
The $\mu$-player plays OSMD with entropy regularization, and gradients with elements defined as
\begin{equation*}
	g_{k, \mu}(x,y,x',y') = c(x,y) - \alphax_k(x, x',y) - \alphay_k(x,y,y') + \gamma V_k(x', y') - V_k(x,y).
\end{equation*}
The following bound gives a bound on the regret of this player.
\begin{lemma}
The regret of the $\mu$-player is bounded as \todoG{Got a worse constant here after recalculating the bound on the 
infinity norm}
 \[
  \regretmin_K(\mu^*) \le \frac{\log\bpa{\abs{\X}^2\abs{\Y}^2}}{\eta}+ \frac{200 \eta K }{(1-\gamma)^2}.
 \]
\end{lemma}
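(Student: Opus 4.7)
The plan is to apply Lemma~\ref{lemma:OMD_bound} to the $\mu$-player, whose update rule~\eqref{eq:mu_update} is exactly OSMD on the simplex $\Delta_{\X\Y\X\Y}$ with the negative-entropy regularizer $\Psi(\mu) = \sum \mu(x,y,x',y')\log \mu(x,y,x',y')$ (as described in Proposition~\ref{prop:entreg}). A crucial simplification is that the gradient $g_{k,\mu}$ in~\eqref{eq:mugrad} is \emph{deterministic} given $(\alpha_k,V_k)$: it does not involve the samples $X_k,X_k',Y_k,Y_k'$ at all. Hence, I would invoke Lemma~\ref{lemma:OMD_bound} in its deterministic form, setting $\wt{g}_k = g_k$ and letting $1/\wt{\eta} \to 0$, which kills the last three terms of the stated bound and leaves only $\BB_\Psi(\mu^* \| \mu_1)/\eta + (\eta/(2\lambda))\sum_{k=1}^K \|g_{k,\mu}\|_*^2$.

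Next, I would fix the norms. Pinsker's inequality gives that the negative entropy is $1$-strongly convex with respect to $\|\cdot\|_1$ on the simplex, so $\lambda = 1$ and the dual norm is $\|\cdot\|_\infty$. For the Bregman-divergence term, the initial iterate $\mu_1$ is uniform on $\X\Y\X\Y$, so for any comparator $\mu^* \in \Delta_{\X\Y\X\Y}$ we have
\begin{equation*}
\BB_\Psi(\mu^* \| \mu_1) = \log(\abs{\X}^2\abs{\Y}^2) - H(\mu^*) \le \log(\abs{\X}^2\abs{\Y}^2),
\end{equation*}
which gives the first term of the claimed bound after dividing by $\eta$.

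For the gradient-magnitude term, I would bound $\infnorm{g_{k,\mu}}$ entrywise via the triangle inequality, using the constraints imposed by the projection sets for the dual variables. Since $\infnorm{c} \le 1$ by assumption, $\alphax_k, \alphay_k \in \mathcal{D}_{\alpha} = \BB^\infty(0, 6/(1-\gamma))$, and $V_k \in \mathcal{D}_V = \BB^\infty(0, 2/(1-\gamma))$, each entry of $g_{k,\mu}$ is bounded by $1 + 6/(1-\gamma) + 6/(1-\gamma) + \gamma \cdot 2/(1-\gamma) + 2/(1-\gamma) \le 20/(1-\gamma)$. Squaring yields $\infnorm{g_{k,\mu}}^2 \le 400/(1-\gamma)^2$, and summing over $k$ and multiplying by $\eta/(2\lambda) = \eta/2$ gives the second term $200 \eta K / (1-\gamma)^2$.

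Putting the two pieces together gives the claimed bound. The only minor obstacle is keeping the constant in the gradient bound clean; everything else is a direct application of Lemma~\ref{lemma:OMD_bound} with the correct norm pairing and the observation that the $\mu$-gradient involves no randomness, so no high-probability concentration term is needed for this player.
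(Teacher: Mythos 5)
Your proposal is correct and follows essentially the same route as the paper: invoke the OSMD regret bound in its deterministic form (since $g_{k,\mu}$ involves no samples), use $1$-strong convexity of negative entropy with respect to $\onenorm{\cdot}$, bound $\BB_\Psi(\mu^*\|\mu_1)\le\log(\abs{\X}^2\abs{\Y}^2)$ for the uniform initialization, and bound $\infnorm{g_{k,\mu}}\le 20/(1-\gamma)$ via the projection domains. Your entrywise gradient bound is in fact slightly tighter than the paper's (which loosely uses $4(\gamma+1)/(1-\gamma)$ for the $V$ terms), but both land on the same constant $20/(1-\gamma)$ and hence the same final bound.
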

\begin{proof}
 The proof follows from noticing that the regularization function $\Psi$ is $1$-strongly convex with respect to the 
norm $\onenorm{\cdot}$, and that the dual norm of the gradients is bounded as $\norm{g_{k, \mu}}_{\infty} \leq 
\frac{20}{1-\gamma}$. Indeed, this follows by upper-bounding each entry of the gradient as
\begin{align*}
	\abs{g_{k,\mu}(x,y,x',y')} &\leq c(x,y) + \abs{\alphax_k(x, x',y')} + \abs{\alphay_k(x,y,y')} + \gamma\abs{V_k(x',y')} 
+ \abs{V_k(x,y)}
	\\
				  &\leq 1 + \frac{12}{1-\gamma} + \frac{4(\gamma+1)}{1-\gamma} = \frac{17 + 3\gamma}{1-\gamma} \le 
				  \frac{20}{1-\gamma}.
\end{align*}
Finally, we recall the choice of $\mu_1$ being uniform over $\X\Y\X\Y$, and the standard result that the relative 
entropy between any distribution and $\mu_1$ is equal to $\log(\abs{\X}^2\abs{\Y}^2)$.
\end{proof}

\subsubsection{Regret of the $ \protect\lambda$-players}
The regret analysis of the $\lambda$-players is slightly nonstandard. Focusing on the $\lambdax$-player here, we note 
that the updates correspond to using OSMD on the decision space 
$ \Zw = \{ \lambda \,:\, \forall x,y \; \lambda(y|x) \geq 0 \, ; \forall x
	\; \sum_{y} \lambda(y|x) = 1 \} $
with the following choice of regularization function:
\[
 \Psi(\lambda) = \sum_{x} \sum_{y} \lambda(y|x)
\log(\lambda(y|x)).
\]
As we show in Lemma~\ref{lemma:strong_convexity_entropy_cond}, this regularization function is $1$-strongly convex with 
respect to the \emph{2-1 group norm} defined for each $\lambda \in \Zw$ as 
\[ 
\norm{\lambda}_{2,1} = 
\sqrt{\sum_{x}\left( \sum_{y} |\lambda(y|x)| \right)^2}.
\]
It is easy to verify that the corresponding dual norm is the 
\emph{$2-\infty $ group norm} defined as $ \norm{g}_{2, \infty} = \sqrt{\sum_{x}(\max_{y}|g(x,y)|)^2} $.
We also recall that the updates make use of the following unbiased estimate of the gradient:
\begin{equation}
	\widetilde{g}_{k, \lambdax}(x,y) = \mathbf{1}_{\{ X=x \}} \alphax_k(X,X',y).
\end{equation}
With these facts at hand, we prove the following bound on the regret of the $\lambda$-players.
\begin{lemma} \label{lemma:lambdax_bound_OMD}
With probability at least $1-\delta$, the regret of the $\lambdax$ player and is 
bounded as 
\begin{equation}
	\regretmax_K(\lambdax^*) \leq \frac{\abs{\X}\log\abs{\Y}}{\etax}+	\frac{90\etax K}{(1-\gamma)^2} + 
\sqrt{\frac{288 \abs{\X}K \log \left(\frac{2}{\delta}\right)}{(1-\gamma)^2}}.
\end{equation}
and the regret of the $\lambday$-player is bounded as 
\begin{equation}
	\regretmax_K(\lambday^*) \leq \frac{\abs{\Y}\log\abs{\X}}{\etay}+	\frac{90\etay K}{(1-\gamma)^2} + 
\sqrt{\frac{288 \abs{\Y} K \log \left(\frac{2}{\delta}\right)}{(1-\gamma)^2}}.
\end{equation}
\end{lemma}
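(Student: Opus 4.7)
The plan is to derive both bounds by applying Lemma~\ref{lemma:OMD_bound} to the OSMD instance governing the $\lambdax$-player; the bound for $\lambday$ then follows by symmetry after interchanging the roles of $\X$ and $\Y$. Four ingredients are needed for that lemma: strong convexity of the regularizer with a choice of norm, a bound on the Bregman divergence to the initialization, bounds on the dual norms of the true and stochastic gradients, and a diameter of the decision set. I will use Lemma~\ref{lemma:strong_convexity_entropy_cond} (from Appendix~\ref{app:tech}) to establish that $\Psi(\lambda)=\sum_{x,y}\lambda(y|x)\log\lambda(y|x)$ is $1$-strongly convex with respect to the $2$-$1$ group norm, whose dual is the $2$-$\infty$ group norm. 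Since $\lambdax_1$ is the conditionally uniform distribution, the Bregman divergence reduces to a sum of KL divergences: $\BB_\Psi\pa{\lambdax^{*}\|\lambdax_1}=\sum_x \mathrm{KL}\pa{\lambdax^{*}(\cdot|x)\,\|\,\mathcal{U}(\Y)}\le \abs{\X}\log\abs{\Y}$, and the diameter of $\Delta_{\Y|\X}$ in the $2$-$1$ norm is at most $2\sqrt{\abs{\X}}$ because each conditional has $\ell_1$ diameter at most $2$.

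Next I would bound the relevant dual norms. Because $\wt{g}_{k,\lambdax}(y|x)=\II{X_k=x}\alphax_k(x,X_k',y)$ is supported on the single row $x=X_k$, and $\alphax_k$ is projected onto $\mathcal{D}_\alpha=\mathcal{B}^{\infty}\pa{0,6/(1-\gamma)}$, we immediately get $\norm{\wt{g}_{k,\lambdax}}_{2,\infty}\le 6/(1-\gamma)$. The true gradient $g_{k,\lambdax}(y|x)=\sum_{x'}\nux(x,x')\alphax_k(x,x',y)$ satisfies the pointwise bound $\abs{g_{k,\lambdax}(y|x)}\le \nux(x)\cdot 6/(1-\gamma)$, so using $\sum_x\nux(x)^2\le\sum_x\nux(x)=1$ yields $\norm{g_{k,\lambdax}}_{2,\infty}\le 6/(1-\gamma)$ as well, and the triangle inequality gives $\norm{g_{k,\lambdax}-\wt{g}_{k,\lambdax}}_{2,\infty}^2\le 144/(1-\gamma)^2$.

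Finally, I would plug these four ingredients into Lemma~\ref{lemma:OMD_bound} with $\eta=\wt{\eta}=\etax$: the Bregman-divergence contribution is $\abs{\X}\log\abs{\Y}/\etax$, the two quadratic terms combine to at most $\etax K\pa{36+144}/\pa{2(1-\gamma)^2}=90\etax K/(1-\gamma)^2$, and the Azuma-type tail contributes on the order of $2\sqrt{\abs{\X}}\sqrt{288K\log(1/\delta)/(1-\gamma)^2}$, matching the stated bound up to absolute constants. The argument for $\lambday$ is identical after swapping $\X\leftrightarrow\Y$. A convenient feature here is that no data-dependent comparator subtleties arise, because $\lambdax^{*}$ is fixed (the $Y|X$ conditional of the fixed optimal coupling $\mu^{*}$), so the stochastic-vs-true gradient decomposition inside Lemma~\ref{lemma:OMD_bound} can be applied verbatim. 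I expect the main obstacle to be bookkeeping in the non-Euclidean geometry: in particular verifying the strong convexity of $\Psi$ in the $2$-$1$ group norm, computing the $2$-$1$ diameter correctly, and tracking how the single-row support pattern of $\wt{g}_{k,\lambdax}$ interacts with the dual $2$-$\infty$ norm so that the $\sqrt{\abs{\X}}$-scaling in the high-probability term appears with the right constant.
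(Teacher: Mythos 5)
Your proposal is correct and follows essentially the same route as the paper's proof: strong convexity of the conditional entropy in the $2$-$1$ group norm via Lemma~\ref{lemma:strong_convexity_entropy_cond}, the bounds $\BB_\Psi(\lambdax^*\|\lambdax_1)\le\abs{\X}\log\abs{\Y}$, diameter $2\sqrt{\abs{\X}}$, and $\norm{\wt{g}_{k,\lambdax}}_{2,\infty}\le 6/(1-\gamma)$ from the single-row support, all fed into Lemma~\ref{lemma:OMD_bound} with $\wt{\eta}=\etax$. The only (cosmetic) difference is that you spell out the $\sum_x\nux(x)^2\le 1$ step for the true gradient where the paper just invokes "the same argument," and your honest remark that the final constants match only up to absolute factors is consistent with the paper's own bookkeeping.
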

\begin{proof}
We provide a complete proof for $\lambdax$, and note that the result for $\lambday$ is analogous.
For this case, notice that Lemmas~\ref{lemma:OMD_bound} and~\ref{lemma:strong_convexity_entropy_cond} suggest that we 
should first obtain upper-bounds on the magnitude of the gradients in terms of their $2,\infty$-group norms, and thus 
we first establish that
\begin{align*}
	\norm{\widetilde{g}_{k, \lambdax}}_{2, \infty} &= \sqrt{\sum_{x}\left( \max_y |\mathbf{1}_{\{ X=x \}}
	\alphax(X,X',y)| \right)^2}
						       \leq \sqrt{\sum_{x}\mathbf{1}_{\{ X=x
						       \}}\norm{\alphax_k}_{\infty}^2} 
						       = \norm{\alphax_k}_{\infty}.
\end{align*}
Note that the latter is upper-bounded as $\norm{\alphax_k}_{\infty} \leq \frac{6}{1- \gamma}$ by construction.
Further observing that the true gradient norm can be bounded via the same argument as $\norm{\widetilde{g}_{k, 
\lambdax}}_{2, \infty} \le \frac{6}{1- \gamma}$, we also have
\begin{equation*}
	\norm{g_{k,\lambdax} - \widetilde{g}_{k,\lambdax}}_{2, \infty} \leq \norm{g_{k,\lambdax}}_{2, \infty} +
	\norm{\widetilde{g}_{k,\lambdax}}_{2, \infty} \leq \frac{12}{1- \gamma}.
\end{equation*}
Finally, since $ \lambdax_1 (\cdot|x)$ is chosen as the uniform distribution over $\Y$ for all $x$, we have $ 
\BB_{\Psi}(\lambdax^{*} \| \lambdax_1) \le \abs{\X} \log \abs{\Y}$, and the primal-norm distance satisfies 
$\norm{\lambda^* - \lambda}\le 2\sqrt{\abs{\X}}$. Now, the claim follows from using Lemma~\ref{lemma:OMD_bound} with 
$\wt{\eta} = \etax$.
\end{proof}

\subsection{Proof of Theorem~\ref{thm:PAC_bound_main}}
The proof of the theorem now follows from putting together Lemma~\ref{lem:main_error} with the regret decomposition in 
Equation~\eqref{eq:duality_decomposition}, and combining 
Lemmas~\ref{lemma:alpha_bound_POGD_PAC}--\ref{lemma:lambdax_bound_OMD}. Taking a union bound over the two 
probabilistic claims of Lemma~\ref{lemma:alpha_bound_POGD_PAC} and~\ref{lemma:alpha_bound_POGD_PAC}, this gives that 
the following bound holds with probability at least $1-2\delta$:
\begin{align*}
\abs{\siprod{\bmu_K - \mu^{*}}{c}}  \leq& \frac{18 |\X|^2|\Y|}{\betax K(1- \gamma)^2 }  +
	4 \betax + \sqrt{\frac{72}{K(1- \gamma)^2} \log\frac{2}{\delta}} \\ 
	&+ \frac{18 |\X||\Y|^2}{\betay K(1- \gamma)^2 }  +
	4 \betay + \sqrt{\frac{72}{K(1- \gamma)^2} \log\frac{2}{\delta}} \\ 
	&+ \frac{4|\X||\Y|}{\beta K(1-\gamma)^2} + 2\beta  \\ 
	&+  \frac{2\log(|\X||\Y|)}{\eta K}+ \frac{200 \eta}{2(1-\gamma)^2} \\ 
	&+ \frac{|\X|\log (|\Y|)}{\etax K} + \frac{90\etax}{(1- \gamma)^2} 
	+ \sqrt{\frac{288 |\X| \log \frac{2}{\delta}}{K(1-\gamma)^2}} \\ 
	&+ \frac{|\Y|\log |\X|}{\etay K} + \frac{90 \etay}{(1- \gamma)^2} +
	\sqrt{\frac{288 |\Y| \log \frac{1}{\delta}}{K(1-\gamma)^2}}. 
\end{align*}
Setting 
$\betax = \sqrt{\frac{9|\X|^2|\Y|}{2(1- \gamma)^2K}}$,
$\betay = \sqrt{\frac{9|\X||\Y|^2}{2(1- \gamma)^2K}}$,
$\beta = \sqrt{\frac{2|\X||\Y|}{(1- \gamma)^2K}}$,
$\etax = \sqrt{\frac{(1- \gamma)^2|\X|\log |\Y|}{90 K}}$,
$\etax = \sqrt{\frac{(1- \gamma)^2|\Y|\log |\X|}{90 K}}$,
$\eta  = \sqrt{\frac{(1- \gamma)^2\log(|\X||\Y|)}{100 K}}$, 
the bound becomes 
\begin{align*}
|\siprod{\bmu_K - \mu^{*}}{c}| \leq& \frac{12\sqrt{2|\X||\Y|}\bpa{\sqrt{|\X|}+\sqrt{|\Y|}}}{(1- \gamma)\sqrt{K}} 
	+ \frac{4\sqrt{2|\X||\Y|}}{(1-\gamma)\sqrt{K}} \\ 
	&+ \frac{3\sqrt{10|\X|\log |\Y|}}{(1- \gamma)\sqrt{K}} + \frac{3\sqrt{10|\Y|\log |\X|}}{(1- \gamma)\sqrt{K}}
	+  \frac{40\sqrt{\log(|X|^2|Y|^2)}}{(1- \gamma)\sqrt{K}} \\
	&+ \frac{12 \sqrt{\log \frac 1\delta}}{(1- \gamma) \sqrt{K}} 
	+ \frac{12 \sqrt{2|\X| \log \frac{1}{\delta}}}{(1-\gamma)\sqrt{K}}
	+ \frac{12 \sqrt{2|\Y| \log \frac{1}{\delta}}}{(1-\gamma)\sqrt{K}}\\
	=& \OO\pa{\sqrt{\frac{(|\X||\Y|\pa{|\X|+|\Y|} + |\X|\log \frac{|\Y|}{\delta} + |\Y|\log \frac{|\X|}{\delta}} {(1- 
\gamma)^2K} }}.
\end{align*}
This concludes the proof.

\section{Online learning: The proof of Lemma~\ref{lemma:OMD_bound}}\label{app:online_learning}

This section is dedicated to proving the general regret bound we use throughout the analysis for upper-bounding the 
regret of each player, Lemma~\ref{lemma:OMD_bound}. As mentioned in Appendix~\ref{app:regret_bounds}, the main 
challenge that we need to deal with is that the comparators for some of the regret terms are data dependent, which 
requires some additional steps that are typically not necessary in regret analyses. For concreteness, we adapt the 
notation of Lemma~\ref{lemma:OMD_bound} and write the regret against comparator $z^*$ as
\[
 \regret_K(z^*) = \sum_{k=1}^K \iprod{g_k}{z_k - z^*} = 
 \underbrace{\sum_{k=1}^{K}\iprod{\widetilde{g}_k}{z_k -z^{*}}}_{R_K} + \underbrace{\sum_{k=1}^{K}\siprod{g_k - 
\widetilde{g}_k}{z_k - z^{*}}}_{M_K},
\]
where in the second equality we also added some terms corresponding to the stochastic gradient $\wt{g_k}$. 
Here, the first term $R_K$ corresponds to the regret of the online learning algorithm on the sequence of stochastic 
gradients $\wt{g}_k$, which can be upper-bounded using standard tools of online learning. For the second term, notice 
that the stochastic gradient satisfies $\EEcc{\wt{g}_k}{\F_{k-1}} = g_k$, and thus if $z^*$ is independent of the 
sequence of stochastic gradients, the second term $M_K$ in the above decomposition is a martingale. However, this is 
no longer true if $z^*$ is statistically dependent on the sequence. In order to account for this, we adopt an elegant 
technique by \citet{RS17} to control the resulting sequence of dependent random variables\footnote{This technique is 
commonly attributed to \citet{NJLS09}, but we find the connection with \citet{RS17} more illuminating. Otherwise, we 
learned this proof technique from \citet{NeuO24}.}. In particular, we introduce a second 
online learning algorithm for the sake of analysis, and use its regret bound to account for the additional error terms 
in the above decomposition. For sake of concreteness, we define the sequence of decisions made by this algorithm by 
setting $\wt{z}_1 = z_1$ and updating the parameters recursively via a mirror descent scheme analogous to the one 
underlying the sequence $z_k$:
\[
 \wt{z}_{k+1} = \argmin_{z\in\Zw} \ev{\iprod{g_k - \wt{g}_k}{z} + \frac{1}{\eta} \BB_\Psi(z\|z_k)}.
\]
Using this notation, the regret of the original algorithm can be rewritten as follows:
\begin{equation*}
\sum_{k=1}^{K} \iprod{g_k}{z_k - z^{*}} = \underbrace{\sum_{k=1}^{K}\siprod{\widetilde{g}_k}{z_k -
z^{*}}}_{R_K} + \underbrace{\sum_{k=1}^{K}\iprod{g_k - \widetilde{g}_k}{z_k - \widetilde{z}_k}}_{\wt{M}_K} +
\underbrace{\sum_{k=1}^{K}\siprod{g_k - \widetilde{g}_k}{\widetilde{z}_k - z^{*}}}_{\widetilde{R}_K}.
\end{equation*}
Thanks to this construction, the term $\wt{M}_K$ is a martingale and $\widetilde{R}_K$ is the regret of the auxiliarly 
online learning algorithm in the newly defined online learning game.

For the concrete proof of Lemma~\ref{lemma:OMD_bound}, we will make use of the following classic result regarding the 
regret of mirror descent.
\begin{lemma}
\label{lemma:OMD_bound2}
Let $ z \in \Zw $ and assume that $ \Psi $ is $ \lambda $-strongly convex with respect to some norm $  \norm{\cdot } $ 
whose dual is denoted by $\norm{\cdot}_*$. Consider the sequence with an arbitrary $u_1 \in \Zw$ and all subsequent 
iterates defined as
\[
 u_{k+1} = \argmin_{z\in\Zw} \ev{\iprod{v_k}{z} + \frac{1}{\omega} \BB_\Psi(z\|u_k)},
\]
where $a_k$ is an arbitrary sequence in $\Zw^*$ and $\omega > 0$. Then, for any $u^*\in\Zw$, the sequence $\pa{u_k}_k$ 
produced by OSMD satisfies the following bound:
\begin{equation}
\label{eq:OMD_bound3}
	\sum_{k=1}^{K} \siprod{a_k}{u_k - u} \leq \frac{\BB_{\Psi}(u^* \| u_1)}{\omega}+ \frac{\omega}{2 \lambda}
	\sum_{k=1}^{K}\norm{a_k}_*^2.
\end{equation}
\end{lemma}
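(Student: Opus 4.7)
The plan is to give the standard ``mirror-descent potential'' argument based on the Bregman three-point identity. First, I would write the first-order optimality condition for the update $u_{k+1} = \argmin_{z\in\Zw} \ev{\iprod{a_k}{z} + \frac{1}{\omega}\BB_\Psi(z\|u_k)}$. Since $\nabla_z\BB_\Psi(z\|u_k) = \nabla\Psi(z) - \nabla\Psi(u_k)$, the optimality condition reads
\[
\iprod{\omega a_k + \nabla\Psi(u_{k+1}) - \nabla\Psi(u_k)}{u^{*} - u_{k+1}} \geq 0
\]
for every $u^{*}\in\Zw$. Combined with the standard three-point identity
\[
\iprod{\nabla\Psi(u_{k+1}) - \nabla\Psi(u_k)}{u^{*} - u_{k+1}} = \BB_\Psi(u^{*}\|u_k) - \BB_\Psi(u^{*}\|u_{k+1}) - \BB_\Psi(u_{k+1}\|u_k),
\]
this yields the per-round inequality
\[
\omega \iprod{a_k}{u_{k+1} - u^{*}} \leq \BB_\Psi(u^{*}\|u_k) - \BB_\Psi(u^{*}\|u_{k+1}) - \BB_\Psi(u_{k+1}\|u_k).
\]

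Next, I would split $\iprod{a_k}{u_k - u^{*}} = \iprod{a_k}{u_{k+1} - u^{*}} + \iprod{a_k}{u_k - u_{k+1}}$ and bound the second summand using Fenchel--Young together with $\lambda$-strong convexity of $\Psi$. Concretely,
\[
\omega\iprod{a_k}{u_k - u_{k+1}} \leq \omega\norm{a_k}_{*}\norm{u_k - u_{k+1}} \leq \frac{\omega^2}{2\lambda}\norm{a_k}_{*}^2 + \frac{\lambda}{2}\norm{u_k - u_{k+1}}^2 \leq \frac{\omega^2}{2\lambda}\norm{a_k}_{*}^2 + \BB_\Psi(u_{k+1}\|u_k),
\]
where the last step uses $\lambda$-strong convexity of $\Psi$ to lower bound $\BB_\Psi(u_{k+1}\|u_k)$ by $\frac{\lambda}{2}\norm{u_k - u_{k+1}}^2$. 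Adding this to the per-round inequality, the $\BB_\Psi(u_{k+1}\|u_k)$ term cancels and we get
\[
\omega\iprod{a_k}{u_k - u^{*}} \leq \BB_\Psi(u^{*}\|u_k) - \BB_\Psi(u^{*}\|u_{k+1}) + \frac{\omega^2}{2\lambda}\norm{a_k}_{*}^2.
\]

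Finally, I would sum this inequality over $k = 1,\ldots,K$, telescope the Bregman divergence terms (discarding the nonnegative $-\BB_\Psi(u^{*}\|u_{K+1})$), divide both sides by $\omega$, and arrive at the claimed bound. There is no real obstacle here: the proof is entirely standard, and the only subtle bookkeeping is making sure the $\BB_\Psi(u_{k+1}\|u_k)$ term produced by the three-point identity is exactly what is needed to absorb the strong-convexity lower bound for the Fenchel--Young step, which it is by construction.
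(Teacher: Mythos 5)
Your proof is correct and is exactly the standard mirror-descent argument (optimality condition, three-point identity, Fenchel--Young plus $\lambda$-strong convexity to absorb $\BB_\Psi(u_{k+1}\|u_k)$, then telescoping); the paper does not spell this out but defers to Theorem~6.10 of Orabona, which is proved in precisely this way. You also correctly read through the paper's typos in the lemma statement ($v_k$ versus $a_k$, and $u$ versus $u^*$ on the left-hand side).
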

The proof is standard and can be found in many textbooks---for concreteness, we refer to Theorem~6.10 of 
\citet{Ora19}. To proceed, we apply this lemma to the standard sequence of iterates in our setting with $a_k = 
\wt{g}_k$ and $\omega = \eta$ to bound $R_K$ and once again with $a_k = g_k - \wt{g}_k$ and $\omega = \wt{\eta}$ to 
bound $\wt{R}_K$. Finally, we use the Hoeffding--Azuma inequality (Lemma~\ref{lemma:Azuma_Hoeffding}) to control the 
remaining term as
\begin{align*}
\wt{M}_K = \sum_{k=1}^{K}\iprod{g_k - \widetilde{g}_k}{z_k - \widetilde{z}_k} \le 
C\sqrt{2\sum_{k=1}^{K}\norm{g_k - \widetilde{g}_k}^2 \log \frac 1\delta}
\end{align*}
with probability at least $1-\delta$. Indeed, notice that under the condition $\max_{z,z'\in\Zw} \norm{z-z'} \le C$ 
each term satisfies $\abs{\iprod{g_k - \widetilde{g}_k}{z_k - \widetilde{z}_k}} \le C \norm{g_k - \widetilde{g}_k}_*$, 
which allows using Lemma~\ref{lemma:Azuma_Hoeffding} with $c_k = 2 C \norm{g_k - \widetilde{g}_k}_*$. Putting these 
results together concludes the proof of Lemma~\ref{lemma:OMD_bound}.

\section{Technical Lemmas}\label{app:tech}
\begin{lemma}
\label{lemma:strong_convexity_entropy_cond}
The function $ \Psi(p) = \sum_{j=1}^{J} \sum_{i=1}^{I} p(i|j) \log p(i|j) $ is $1$-strongly convex with respect to the 
2-1 group norm $ \norm{p}_{2,1} = \sqrt{\sum_{j=1}^{J}\left( \sum_{i=1}^{I} \abs{p_{i|j}} \right)^2} $ on the set 
$\Zw = \{ p \in \real^{I\times J} \,:\, p(i|k) \ge 0 (\forall i,j),\, \sum_{i=1}^{I} 
p(i|j) = 1 \, (\forall j)$.
\end{lemma}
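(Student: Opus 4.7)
The plan is to reduce the claim to the classical Pinsker-type strong convexity of the negative entropy on each column, then glue the columnwise estimates together via the group-norm structure.

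First, I would write $\Psi$ as a sum of per-column entropies: $\Psi(p) = \sum_{j=1}^J \psi_j(p(\cdot|j))$, where $\psi_j(q) = \sum_{i=1}^I q(i) \log q(i)$ is the negative Shannon entropy defined on the simplex $\Delta_I$. Since $\Psi$ is separable in the columns and the per-column constraint $\sum_i p(i|j) = 1$ is likewise separable, the induced Bregman divergence factorizes exactly as
\begin{equation*}
  \BB_{\Psi}(p \| q) = \sum_{j=1}^{J} \BB_{\psi_j}\bpa{p(\cdot|j) \,\|\, q(\cdot|j)},
\end{equation*}
and $\BB_{\psi_j}$ is the standard Kullback--Leibler divergence on $\Delta_I$.

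Second, I would invoke Pinsker's inequality, which states that $\BB_{\psi_j}(u\|v) \ge \frac{1}{2}\onenorm{u - v}^2$ for all $u,v \in \Delta_I$, and which is the classical statement that the negative entropy is $1$-strongly convex with respect to $\onenorm{\cdot}$ on the simplex (see, e.g., Theorem~6.12 in \citealp{Ora19}). Summing this bound over $j$ and using the factorization above yields
\begin{equation*}
  \BB_{\Psi}(p \| q) \;\ge\; \tfrac{1}{2} \sum_{j=1}^{J} \Bpa{\sum_{i=1}^I \abs{p(i|j) - q(i|j)}}^2 \;=\; \tfrac{1}{2}\, \norm{p - q}_{2,1}^2,
\end{equation*}
which is precisely the definition of $1$-strong convexity of $\Psi$ with respect to $\norm{\cdot}_{2,1}$ on the conditional simplex $\Zw$.

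There is no real obstacle here: the only nontrivial ingredient is Pinsker's inequality, and everything else is bookkeeping that exploits the separable structure of both $\Psi$ and the constraint set. The only minor care point is confirming that the Bregman divergence really decomposes columnwise, which follows from the fact that $\Psi$ is a sum of functions depending on disjoint blocks of coordinates, and that the same block structure is respected by the affine constraint defining $\Zw$, so that no cross-column gradient corrections appear when computing $\BB_\Psi$.
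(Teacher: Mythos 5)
Your proof is correct and follows essentially the same route as the paper's: decompose the Bregman divergence of $\Psi$ into a sum of columnwise KL divergences and apply Pinsker's inequality to each column. In fact, your displayed inequality $\BB_{\Psi}(p\|q) \ge \tfrac12 \sum_j \onenorm{p(\cdot|j)-q(\cdot|j)}^2 = \tfrac12\norm{p-q}_{2,1}^2$ is the correct form; the paper's statement of this step omits the square on the $\ell_1$ norm, which is evidently a typo.
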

\begin{proof}
We first note that, by standard calculations, the Bregman divergence induced by $\Psi$ is
\[
 B_{\Psi}(p\|q) = \sum_{j=1}^J \sum_{i=1}^I p(i|j) \log \frac{p(i|j)}{q(i|j)}.
\]
Now, by Pinsker's inequality, we have that
\[
 B_{\Psi}(p\|q) \ge \frac 12 \sum_{j=1}^J \onenorm{p(\cdot|j) - q(\cdot|j)},
\]
which is equivalent to the statement of the lemma.	
\end{proof}

\begin{lemma}(Hoeffding--Azuma inequality, see, e.g., Lemma~A.7 in \citealt{CL06})
\label{lemma:Azuma_Hoeffding}
Let $ \pa{Z_k}_{k} $ be a maringale with respect to a filtration $ \pa{\mathcal{F}_k}_{k} 
$.
Assume that there are predictable processes $ \pa{A_k}_k $ and $ \pa{B_k}_k $ and positive constant $\pa{c_k}_k$ such 
that for all $k\geq 1 $, almost surely, 
\begin{equation*}
	A_k \leq Z_{k} - Z_{k-1} \leq B_k \quad \text{and} \quad B_k - A_k \leq c_t.
\end{equation*}
Then, for all $ \epsilon> 0 $,
\begin{equation}
\label{eq:Azuma_Hoeffding_eps}
	\PP{Z_t - Z_0 \geq \epsilon} \leq \exp{\left( -\frac{2 \epsilon^2}{\sum_{i=1}^{t}c_i^2} \right)},
\end{equation}
or equivalently for all $ \delta \in (0,1) $
\begin{equation}
\label{eq:Azuma_Hoeddfind_delta}
	\PP{Z_t - Z_0 \geq \sqrt{\frac{\left( \sum_{i=1}^{t}c_i^2 \right)\log(\frac{1}{\delta})}{2}}} \leq  \delta .
\end{equation}
\end{lemma}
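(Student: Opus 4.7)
The plan is to apply the classical Chernoff method combined with Hoeffding's lemma applied conditionally to the filtration $\pa{\mathcal{F}_k}_k$. First, for any $\lambda > 0$, Markov's inequality applied to the positive random variable $e^{\lambda (Z_t - Z_0)}$ yields
\[
\PP{Z_t - Z_0 \geq \epsilon} \leq e^{-\lambda \epsilon} \EE{e^{\lambda (Z_t - Z_0)}},
\]
so the main quantity to control is the moment generating function of the telescoping sum $Z_t - Z_0 = \sum_{k=1}^{t}(Z_k - Z_{k-1})$.

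Next, I would bound this moment generating function iteratively via the tower property. Since $\pa{Z_k}_k$ is a martingale with respect to $\pa{\mathcal{F}_k}_k$, each increment $X_k = Z_k - Z_{k-1}$ satisfies $\EE{X_k \mid \mathcal{F}_{k-1}} = 0$ and is contained almost surely in the predictable interval $[A_k, B_k]$ of width at most $c_k$. Hoeffding's lemma, applied conditionally on $\mathcal{F}_{k-1}$, then yields $\EE{e^{\lambda X_k} \mid \mathcal{F}_{k-1}} \leq e^{\lambda^2 c_k^2 / 8}$. Writing
\[
\EE{e^{\lambda(Z_t - Z_0)}} = \EE{e^{\lambda(Z_{t-1}-Z_0)} \EE{e^{\lambda X_t} \mid \mathcal{F}_{t-1}}}
\]
and iterating this peeling for $k = t, t-1, \ldots, 1$, I obtain $\EE{e^{\lambda(Z_t - Z_0)}} \leq \exp\bpa{\lambda^2 \sum_{i=1}^{t} c_i^2 / 8}$.

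Combining the two displays gives $\PP{Z_t - Z_0 \geq \epsilon} \leq \exp\bpa{-\lambda \epsilon + \lambda^2 \sum_{i=1}^{t} c_i^2 / 8}$, which is quadratic in $\lambda$ and minimized at $\lambda = 4\epsilon / \sum_{i=1}^{t} c_i^2$. Substituting back produces the first inequality~\eqref{eq:Azuma_Hoeffding_eps}; the equivalent high-probability statement~\eqref{eq:Azuma_Hoeddfind_delta} follows by solving $\exp(-2\epsilon^2 / \sum_i c_i^2) = \delta$ for $\epsilon$.

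The only technical subtlety, and thus the main thing I would pay attention to, is the conditional form of Hoeffding's lemma. Its most common textbook statement assumes the random variable is bounded between deterministic constants, whereas here $A_k$ and $B_k$ are themselves random. The hypothesis that they are \emph{predictable} (i.e., $\mathcal{F}_{k-1}$-measurable) is exactly what allows one to treat them as constants after conditioning on $\mathcal{F}_{k-1}$, which is what makes the conditional Hoeffding bound go through.
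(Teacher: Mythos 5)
Your proof is correct: the Chernoff bound, the conditional form of Hoeffding's lemma (valid precisely because $A_k,B_k$ are predictable, as you note), the tower-property peeling, and the optimization $\lambda = 4\epsilon/\sum_i c_i^2$ all check out and reproduce the stated constants. The paper does not prove this lemma itself but cites it as a classical result, and your argument is exactly the standard proof given in the cited reference, so there is nothing further to compare.
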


\begin{lemma}\label{lemma:marginal_occupancy}
The occupancy measure $\nux\in\real_+^{\X\times\X}$ of the Markov chain $M_\X$ is uniquely defined by the two sets of equations
\begin{align}
\sum_{x'} \nux(x,x') &= \gamma \sum_{x''} \nux(x'',x) + (1-\gamma) \nu_{0,\X}(x) \quad (\forall x),\label{eq:nuxflow}\\
\nux(x,x') &= \PX(x'|x) \sum_{x''} \nux(x,x'') \quad (\forall x,x').\label{eq:nuxdef}
\end{align}
\end{lemma}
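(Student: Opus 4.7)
\medskip

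The plan is to prove both \emph{existence} (that the series-defined $\nux$ satisfies the two equations) and \emph{uniqueness} (that the linear system has at most one solution), which together give the claim. Both parts are short algebraic arguments, so the main challenge is mostly bookkeeping rather than ideas.

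For existence, I would unfold the definition $\nux(x,x') = (1-\gamma)\sum_{t=0}^\infty \gamma^t \PP{X_t=x,X_{t+1}=x'}$ and introduce the shorthand $\nu_\X(x) = (1-\gamma)\sum_{t=0}^\infty \gamma^t \PP{X_t=x}$ for the state-marginal occupancy. Equation~\eqref{eq:nuxdef} then follows immediately by pulling out the Markov factorization $\PP{X_t=x,X_{t+1}=x'}=\PX(x'|x)\PP{X_t=x}$ and observing that $\sum_{x''}\nux(x,x'')=\nu_\X(x)$. For equation~\eqref{eq:nuxflow}, I would compute $\sum_{x''}\nux(x'',x) = (1-\gamma)\sum_{t=0}^\infty \gamma^t \PP{X_{t+1}=x}$, multiply by $\gamma$ to reindex the sum from $t+1$ to $s\ge 1$, and absorb the missing $s=0$ term using $(1-\gamma)\nu_{0,\X}(x)=(1-\gamma)\PP{X_0=x}$, recovering $\nu_\X(x)$ on the right-hand side and matching $\sum_{x'}\nux(x,x')=\nu_\X(x)$ on the left.

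For uniqueness, suppose $\nu$ and $\nu'$ both satisfy the system, and let $\delta=\nu-\nu'$ together with the marginal $d(x)=\sum_{x'}\delta(x,x')$. Since \eqref{eq:nuxdef} is linear and homogeneous in $\nu$, it gives $\delta(x,x') = \PX(x'|x)\, d(x)$. Plugging this into the homogeneous version of \eqref{eq:nuxflow} (the $(1-\gamma)\nu_{0,\X}(x)$ term cancels) yields
\[
d(x) \;=\; \gamma \sum_{x''} \delta(x'',x) \;=\; \gamma \sum_{x''} \PX(x|x'')\, d(x''),
\]
that is, $d = \gamma P_\X^\top d$. Since $P_\X^\top$ is a substochastic-type matrix of spectral radius at most $1$, the operator $\gamma P_\X^\top$ has spectral radius at most $\gamma<1$, so $I-\gamma P_\X^\top$ is invertible and the only solution is $d=0$; equivalently, one can iterate $d=(\gamma P_\X^\top)^n d$ and pass $n\to\infty$. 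Then $\delta(x,x')=\PX(x'|x)d(x)=0$, which gives $\nu=\nu'$.

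The only step that requires mild care is the reindexing in the existence calculation, specifically making sure the $s=0$ term of the series for $\nu_\X(x)$ is exactly supplied by the $(1-\gamma)\nu_{0,\X}(x)$ contribution in~\eqref{eq:nuxflow}; everything else is essentially a direct verification or a one-line contraction argument.
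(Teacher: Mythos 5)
Your proposal is correct and follows essentially the same route as the paper: existence by unfolding the series definition, applying the Markov factorization, and reindexing the geometric series so that the $(1-\gamma)\nu_{0,\X}$ term supplies the missing $t=0$ summand; uniqueness by reducing to the state-marginal system $\xi = \gamma \PX\transpose \xi + (1-\gamma)\nu_{0,\X}$ and recovering $\nux(x,x')=\PX(x'|x)\xi(x)$. The only cosmetic difference is in the uniqueness step, where the paper invokes Perron--Frobenius to get invertibility of $I-\gamma\PX\transpose$ while you work with the difference of two solutions and a contraction/iteration argument in $\ell_1$ norm — both are valid and interchangeable here.
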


\begin{proof}
Using the definition of the occupancy measure $\nux$ we obtain
\begin{align*}
\nux(x,x') &= (1-\gamma) \sum_{t=0}^\infty \gamma^t \PP{X_t=x,X_{t+1}=x'}\\
 &= (1-\gamma) \sum_{t=0}^\infty \gamma^t \PX(x'|x) \PP{X_t=x}\\
 &= \PX(x'|x) \pa{ (1-\gamma) \nu_{0,\X}(x) + (1-\gamma) \sum_{t=1}^\infty \gamma^t \PP{X_t=x} }\\
 &= \PX(x'|x) \pa{ (1-\gamma) \nu_{0,\X}(x) + \gamma \sum_{x''} (1-\gamma) \sum_{t=1}^\infty \gamma^{t-1} \PP{X_{t-1}=x'',X_t=x} }\\
 &= \PX(x'|x) \pa{ (1-\gamma) \nu_{0,\X}(x) + \gamma \sum_{x''} \nux(x'',x) },
\end{align*}
where we used the stationarity of the transition kernel $\PX$, the definition of $\nu_{0,\X}$, the law of total probability,
and the stationarity of the Markov chain to recognize $\nux(x'',x)$ in the last step.
Summing the previous equation over $x'$ yields~\eqref{eq:nuxflow}, and substituting~\eqref{eq:nuxflow} into the previous equation yields~\eqref{eq:nuxdef}.

In order to show that the solution $\nu_\X$ to~\eqref{eq:nuxflow} and~\eqref{eq:nuxdef} is unique, we introduce the 
notation $\xi_\X$ as $\xi_\X(x)=\sum_{x'} \nux(x,x')$ for each $x$. Substituting~\eqref{eq:nuxdef} 
into~\eqref{eq:nuxflow} yields
\[
\xi_\X(x) = \gamma \sum_{x''} \PX(x|x'') \xi_\X(x'') + (1-\gamma) \nu_{0,\X}(x) \quad (\forall x).
\]
By defining $\xi_\X$ and $\nu_{0,\X}$ as vectors and $\PX$ as a matrix, we can write this system of equations in matrix 
form as $\xi_\X = \gamma \PX\xi_\X\transpose + (1-\gamma) \nu_{0,\X}$, or equivalently, $ (I-\gamma\PX\transpose) 
\xi_\X = (1-\gamma) \nu_{0,\X}$. Since $\PX$ is a positive matrix with spectral radius $1$, the Perron--Frobenius 
theorem applies and the matrix $(I-\gamma\PX)$ is invertible. Hence there exists a unique solution $\xi_\X = 
(1-\gamma) (I-\gamma\PX\transpose)^{-1} \nu_{0,\X}$, which together with~\eqref{eq:nuxdef} implies that $\nu_\X$ is 
uniquely defined as $\nu_\X(x,x')=\PX(x'|x)\xi_\X(x)$.
\end{proof}

\section{Additional details on experiments}\label{sec:additional_exp}

In this appendix we present further details about the experiments included in the main text, along with some 
additional empirical results. Along the way, we will also provide some further comments on best practices when 
implementing \SOMCOT, including recommended hyperparameter settings.

\subsection{Similarity metrics between parametric Markov chains}

This experiment serves to illustrate the ability of bisimulation metrics to capture intuitive similarities between 
stochastic processes, as well as the empirical behavior of \SOMCOT when used to approximate such similarity metrics 
based on data. To this end, we generated several random walk instances from the same family as used in the experiments 
in the main body (described in detail in Appendix~\ref{sec:implementation_details}). For this experiment, we let $\X = 
\Y = \ev{1,2,\dots,n}$ with $n=1$ and set the block size as $B=1$. Deviating from the setup described in 
Appendix~\ref{sec:implementation_details}, we set the reward function as $r(1) = r(n) = 1$ for both extremes of the 
state space, which induces a symmetry on the state space. For generating the set of environments, we varied the initial 
states $x_0$ and $y_0$ between $\ev{2,3,\dots,n-1}$ and the bias parameter $\theta$ in the set $\ev{0.05, 0.1, 0.15, 
\dots, 0.95}$, thus resulting in 72 different instances. We then computed pairwise distances between these instances 
using \SOMCOT with various sample sizes, and compared the results with the ground truth (computed by the Sinkhorn Value 
Iteration method of \citealt{Calo_J_N_S_S24}). Figure~\ref{fig:similarity} shows the similarity matrices obtained by 
these methods, showing that the distances computed by \SOMCOT successfully capture the structure of the problem: even 
though the exact numerical values of the true distances are not approximated very accurately, the qualitative picture 
obtained by \SOMCOT is very similar to the ground truth. In particular, the symmetry induced our choice of reward 
function is clearly visible with the matrix being symmetric along the counter-diagonal as well as the main diagonal.

\begin{figure}
\centering
 \includegraphics[width = .325\textwidth]{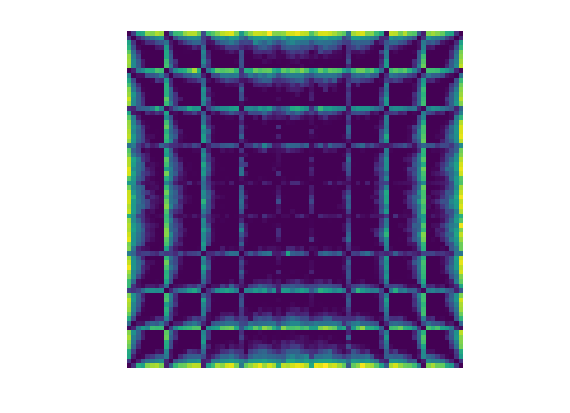}
 \includegraphics[width = .325\textwidth]{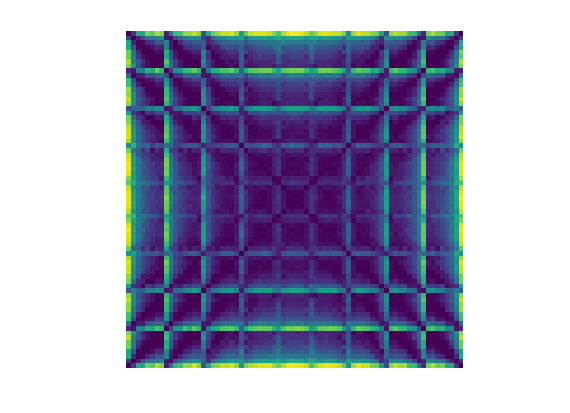}
 \includegraphics[width = .325\textwidth]{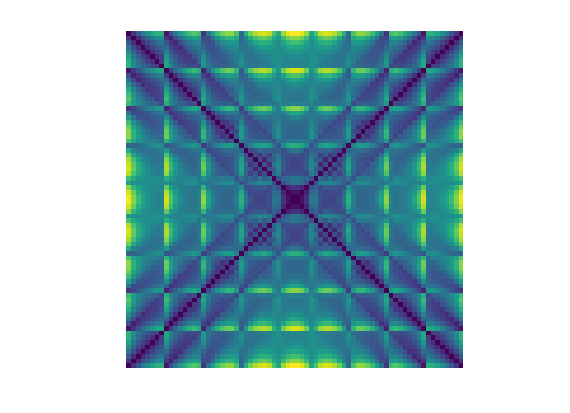}
 \caption{Distance matrices between instances after running \SOMCOT for $1000$ and $10000$ steps, and the ground 
truth obtained via Sinkhorn Value Iteration.}
\label{fig:similarity}
\end{figure}

\subsection{Practical implementation details}

Being a primal-dual method, \SOMCOT is not as easy to tune as a common stochastic optimization algorithm. There are 
several implementation details that one needs to design carefully in order to make sure that the algorithm behaves in a 
stable way and outputs good estimates. This section describes our experience working with \SOMCOT, and provides 
practical guidance for implementation.

\SOMCOT{} has one tunable parameter per optimization variable: a positive learning rate 
that controls the magnitude of the updates during optimization. While our theoretical analysis suggests some specific 
values for these learning rates to guarantee convergence, such values are typically too conservative (as is common in 
stochastic optimization). In practice, using larger learning rates can significantly reduce the number of iterations 
needed to reach good solutions. Since our problem involves optimizing six variables, this leads to six separate 
hyperparameters, which makes tuning a grueling task. To address this, we tie some of the learning rates 
together: all primal variables share a single learning rate denoted by $\eta$, and all dual variables share another one 
denoted by $\beta$.

Moreover, we observed that in practice using a fixed value for the dual learning rate $\eta$ often made it difficult to achieve stable convergence across different problem instances. 
To address this, we introduced a decaying learning-rate scheme of the form $\eta_k = \frac{\eta_0}{\sqrt{1 + ak}}$, 
where $k$ is the index of the current iteration and $a > 0$ is a tunable parameter. 
This decay helps balance the need for large updates in early iterations with the stability required for convergence in later stages.

Figure \ref{fig:etas} illustrates the performance of the algorithm under different learning rate settings. This shows 
that, even given the above choices, it is not easy to pick hyperparameters that work uniformly well across problem 
instances. Even for a single instance, the combination of $\eta_0$ and $\beta$ that leads to the best performance 
requires careful hyperparameter search. In order to understand the behavior of \SOMCOT under different parameter 
choice, it is helpful to remember the roles of the primal and dual variables, and in particular that the dual variables 
serve to penalize the primal variables for violating the primal constraints. Thus, a value of $\eta$ that is too high 
relative to $\beta$ leads to large constraint violations, resulting in $\mu$ values that yield very small 
distances but fall outside the feasible set. Ultimately, setting $\beta$ too small results in gross underestimation of 
the true distance. Thus, whenever one sees distance estimates that are suspiciously close to zero, the value of $\beta$ 
should be increased or the value of $\eta$ be decreased. The opposite scenario produces 
the inverse effect: a $\beta$ that is too large relative to $\eta$ causes the dual variables to update too quickly, 
leading to a resulting distance that overestimates the actual value. This issue can largely be mitigated by decaying 
$\eta$ while keeping $\beta$ constant (as described above). In our experience, it is often better to pick a large 
initial value for $\eta$: while this typically leads to a rapid drop of the distance estimate to zero, the estimates 
eventually start increasing and converge toward the true cost.

\begin{figure}
	\center 
	\includegraphics[width=0.9\textwidth]{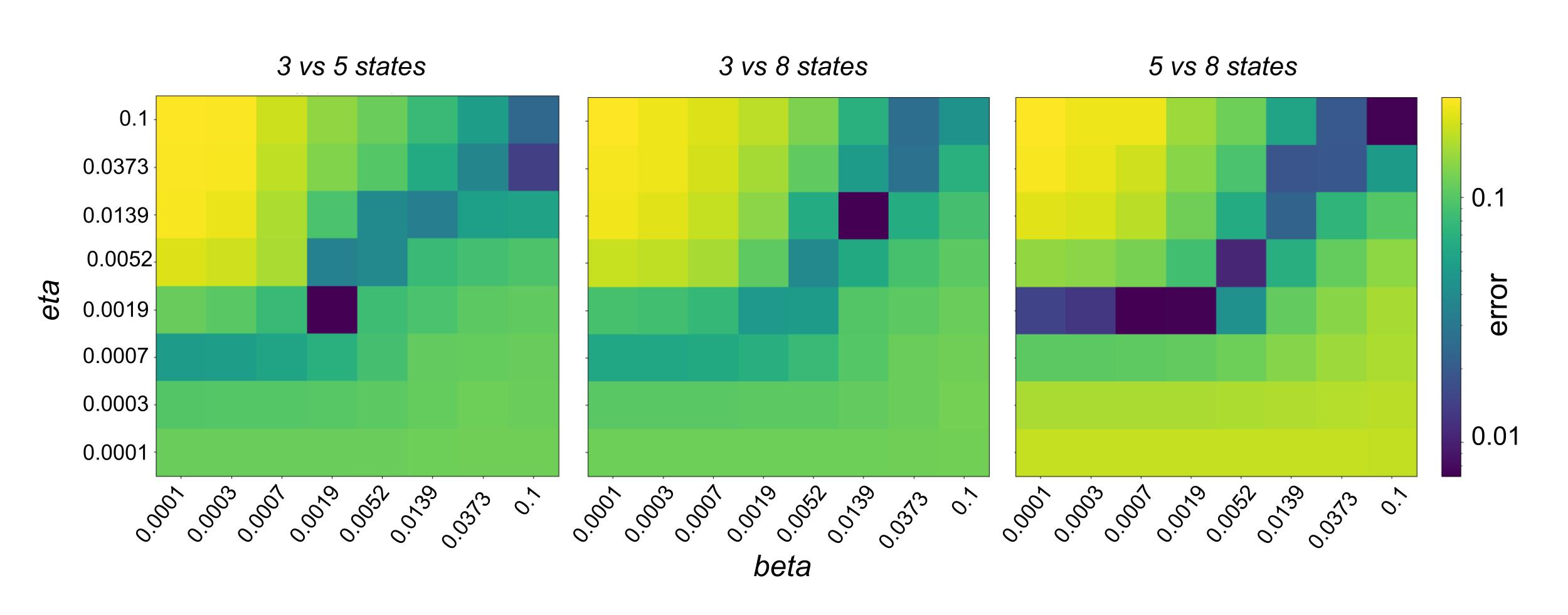} 
	\caption{The influence of the ratio between $\eta$ and $\beta$ on the convergence of \SOMCOT for different chain 
sizes. Error and learning rates are shown on a logarithmic scale. To produce this plot, a decay rate of $a=0.001$ was 
used for $\eta$. No decay was applied on $\beta$.}
	\label{fig:etas}
\end{figure}

Theorem~\ref{thm:PAC_bound_main} provides guarantees for the averaged output $ \bmu_K = 
\frac{1}{K} \sum_{k=1}^{K} \mu_k $, where $\mu_k$ is the value of $\mu$ obtained at iteration $k$. This is commonly 
required for algorithms based on regret analysis, at least for the theoretical guarantees to go through. In typical 
applications of stochastic optimization, this averaging step is not strictly necessary and the final iteration can 
perform well enough. However, this is typically not the case for primal-dual algorithms like \SOMCOT, where iterate 
averaging often makes a big difference to the stability of algorithms. This is true in our case too: without averaging, 
the iterates typically fluctuate quite wildly around the optimum. Averaging makes the estimates much more stable, and 
is thus strongly recommended (even if only for the last half of the iterates or less).

Finally, we note that all our experiments have made use of i.i.d.~transitions sampled from the occupancy measures of 
the two chains. This falls in line perfectly with the theory, but may be impractical in applications where transitions 
may be dependent or be sampled from undiscounted trajectory distributions. While we have not experimented with such 
data, we believe that \SOMCOT should be able to deal with it as long as efforts are made to break the correlations 
between the consecutive samples, for instance by sampling the transitions randomly from a buffer (instead of processing 
them in their original order). In our experiments, we have sometimes made use of minibatch updates, which can affect 
computational efficiency and stability, but no major impact on the overall convergence properties has been observed. We 
display all hyperparameter choices we have made in the experiments in Table~\ref{tab:params}.

\subsection{Details about the environments}\label{sec:implementation_details}
Our experiments made use of two families of Markov chains: a collection of parametrized random walks, and several 
instances of the classic ``inverted pendulum'' environment. We describe the details of these settings below.

\textbf{Parametrized random walks.}  We consider a one-dimensional random walk over a finite state space 
$\X = \ev{1,2,\dots,n}$ with biased transitions. A transition from state $x$ moves to $x+1$ with probability $\theta 
\in [0, 1]$ and $x-1$  with probability $1 - \theta$. States $1$ and $n$ are ``sticky walls'': the process remains 
there with probability $0.9$ or moves to the neighboring state with probability $0.1$. Additionally, we define a reward 
function on the 
state space, with values  $r(1) = 1$, $r(n) = -1$, and $r(x) = 0$ for all $x \in {2, 
\dots, n-1}$.  The initial state distribution is a Dirac measure on $x=1$.
To produce the plot shown in Figure \ref{fig:blockMC}, we generate a low-dimensional chain $\MX$ with bias $\theta=0.5$ following this setting. 
Then, we produce a set of chains $\MY \in \mathcal{B}$, each of them with a different bias parameter. In addition, all 
$\MY$ are augmented with an additional irrelevant noise variable, producing $B$ observations per each latent state in 
$\X$ . Formally, $\MY$ is a Markov chain on the state space $\Y$ equal to $\X\times\ev{1,2,\dots,B}$. 
The cost between $x\in\X$ and $y\in\Y$ is given by $c(x, y) = |r(x) - r(y)|$, reflecting the absolute difference in rewards between the states.

\textbf{Inverted pendulum.} We begin by training a near-optimal policy using DDPG, a widely known Deep RL algorithm, 
in the standard \textit{Pendulum-v1} environment, which is then used to induce a Markov chain. One could use any policy, 
but using a near-optimal policy produces richer dynamics (e.g., using a random policy in the \textit{Pendulum-v1} 
environment reduces the effective state space to the surroundings of the initial state). Once a policy is fixed, we 
discretized each state variable of the environment into $n$ bins. Since the \textit{Pendulum-v1} environment has 2 
variables (angle $\theta$ and angular velocity $w$), the resulting state space has $n^{2}$ states.  In our experiments, 
we have chosen $n=7$, which resulted in a total of $49$ states. Note that due to the discretization, the resulting 
stochastic process is no longer a Markov chain, as the states are no longer sufficient to predict the distribution of 
the next state. Nevertheless, the conducted experiments follow the same principle as the aforementioned random walks: 
We will compare the (approximate) Markov chain $\MX$ of discretized observations with a set of parametrized models 
$\MY$. The parameter governing the dynamics the acceleration constant $g$, capturing the effect of gravity. We set the 
default value $g=9.8$ in $\MX$, and choose values in $[0.4, 20]$ in the model set.
The cost function is given by $c(x, y) = |r(x) - r(y)|$, where $r(x)$ is the average reward in bin 
$x$ (computed from all samples that fell into bin $x$ along a long simulated trajectory).

\begin{table}
\center
 \begin{tabular}{c|c|c|c|c|c}
  Experiment & $\eta_0$ & $a$ & $\beta$ & $b$ & $\gamma$ \\
  \hline
  Figure~\ref{fig:encoder-decoder} & 40 & 0 & 0.2 & 1 & 0.99
  \\
  Figure~\ref{fig:similarity} & 20 & 0 & 0.5 & 1 & 0.99
  \\
  Figure~\ref{fig:blockMC} & 0.1 & 0.001 & 0.5 & 8 & 0.95
  \\
  Figure~\ref{fig:pendulum} & 0.1 & 0.05 & 0.2 & 16 & 0.95
 \end{tabular}
 \vspace{.5cm}
 \caption{Table summarizing our hyperparameter choices for each experiment. Recall that the learning rates follow the 
decaying scheme $\eta_k = \frac{\eta_0}{\sqrt{1 + ak}}$, and the minibatch size is denoted by $b$.}\label{tab:params}
\end{table}

\end{document}